\newcommand{\var}{\varphi}
\newcommand{\lto}{\leftarrow}
\newcommand{\lrto}{\leftrightarrow}
\newcommand{\Rto}{\Rightarrow}
\newcommand{\Lto}{\Leftarrow}
\newcommand{\LRto}{\Leftrightarrow}
\newtheorem{definition}{Definition}
\newtheorem{examp}{Example}
\newenvironment{example}{\begin{examp}\rm}{\end{examp}}
\newcommand{\Eed}{\hfill$\Box$}
\newtheorem{lemma}{Lemma}
\newtheorem{proposition}{Proposition}
\newtheorem{theorem}{Theorem}
\newtheorem{corollary}[theorem]{Corollary}
\newenvironment{proof}{{\bf Proof:}}{\hfill\rule{2mm}{2mm}\\ }
\long\def\comment#1{}
\newcommand{\Pos}{\textit{Pos}}
\newcommand{\Neg}{\textit{Neg}}
\newcommand{\Var}{\textit{Var}}
\newcommand{\Forget}{{\sf Forget}}
\newcommand{\IR}{\textit{IR}}
\newcommand{\wunfold}{\textit{unfold}}
\newcommand{\unfold}{\textit{sunfold}}
\newcommand{\PI}{\mbox{PI}}
\newcommand{\IP}{\mbox{IP}}
\newcommand{\forget}{{\sf Forget}}
\newcommand{\Mod}{{\sf Mod}}
\newcommand{\res}{\textrm{res}}
\newcommand{\ren}{\textrm{ren}}
\newcommand{\Extension}[1]{_{\dagger#1}}
\newcommand{\LA}{{\cal L}_{\cal A}}
\newcommand{\VarEQ}{\textsc{var-equivalence}}
\newcommand{\VarIND}{\textsc{var-independence}}
\newcommand{\VarWeak}{\textsc{var-weak}}
\newcommand{\VarStrong}{\textsc{var-strong}}
\newcommand{\VarEnt}{\textsc{var-entailment}}
\newcommand{\VarMatch}{\textsc{var-match}}
\newcommand{\NP}{\textmd{\rm NP}}
\newcommand{\coNP}{\textmd{\rm co-NP}}
\newcommand{\PIP}[1]{$\Pi_{#1}^{\rm P}$}
\title{On Forgetting in Tractable Propositional Fragments}
\date{}
\begin{document}
\maketitle

\begin{abstract}
Distilling from a knowledge base only the part that is
relevant to a subset of alphabet, which is recognized as \emph{forgetting}, has attracted extensive interests
in AI community. In standard propositional logic, a general algorithm of forgetting and its computation-oriented
investigation in various fragments whose satisfiability are tractable are still lacking.
The paper aims at filling the gap.
After exploring some basic properties of forgetting in propositional logic, we present a resolution-based
algorithm of forgetting for CNF fragment, and some complexity results about forgetting in
Horn, renamable Horn, q-Horn, Krom, DNF and CNF fragments of propositional logic.
\end{abstract}

\textbf{Keywords}: Forgetting; CNF; Horn theories; Algorithms; Complexity

\section{Introduction}
Motivated from Lin and Reiter's  seminal work in first-order logic \cite{Fangzhen:forgetit},
the notion of \emph{forgetting} \---  distilling from a knowledge base
only the part that is relevant to a subset of the alphabet \---
has attracted extensive interests \cite{Lang:JAIR:2003,Lang:AIJ:2010}. A dual notion
of forgetting in mathematical logic is called {\em uniform interpolation} \cite{Visser:Godel:1996}.
In artificial intelligence,
it has been studied under many different names including variable eliminating,
irrelevance, independence, irredundancy, novelty, or separability \cite{AIJ:1997}.

In recent years, researchers have  developed forgetting notions and theories in
other non-classical logic systems, such as forgetting in logic programs under
answer set/stable model semantics
\cite{DBLP:Zhang:AIJ2006,DBLP:Eiter:AIJ:2008,Wong:PhD:Thesis,Yisong:KR:2012,Yisong:IJCAI:2013},
forgetting in description logic \cite{Wang:AMAI:2010,Lutz:IJCAI:2011}, and
knowledge forgetting in modal logic
\cite{Yan:AIJ:2009,Kaile:JAIR:2009,Yongmei:IJCAI:2011}.
It is commonly recognized that forgetting has both theoretical and practical interest as it can be used
for conflict solving \cite{DBLP:Zhang:AIJ2006,Lang:AIJ:2010} and knowledge compilation \cite{Yan:AIJ:2009,Bienvenu:AAAI:2010}, and
it is also closely related to other logical notions, including strongest
necessary and weakest sufficient conditions \cite{DBLP:Lin:AIJ:2001}, strongest and weakest
definitions \cite{Lang:AIJ:2008} and so on.

Though forgetting has been extensively investigated from various aspects of
different logical systems, in standard propositional logic, a general algorithm of forgetting and
its computation-oriented investigation in various fragments whose satisfiability
are tractable are still lacking.

Firstly, the syntactic forgetting operator, which is defined
as $\Forget(\Sigma,p)=\Sigma[p/\top]\lor\Sigma[p\bot]$ where
$\var[p/\top]$ (resp. $\var[p/\bot]$) is obtained from $\var$ be replacing $p$ with $\top$ (resp. $\bot$),
results in a disjunctive formula. Thus, it violates categoricity for non-disjunctive
formulas, e.g., if $\Sigma$ is a conjunctive
normal form (CNF) formula then $\Forget(\Sigma,p)$ is not a  CNF formula any longer. Though one can
transform a CNF formula into an equivalent disjunctive normal form (DNF) formula, the translation will bring about exponential
explosion if no fresh atoms are allowed.

Secondly, from the perspective of computation, Lang \textit{ et al}. have showed
that deciding if a formula is independent of a set of atoms (called
\VarIND) is \coNP-complete, and deciding if two formulas are equivalent on a common signature (called \VarEQ) is
\PIP{2}-complete \cite{Lang:JAIR:2003}. To our best knowledge,
such two reasoning problems remain unknown for many interesting fragments of propositional logic, such as
Horn \cite{Horn:JSL:1951}, renamable Horn \cite{Lewis:JACM:1978} (ren-Horn in short),
q-Horn theories \cite{Boros:AMAI:1990,Lang:AIJ:2008} and so forth.

In the paper we mainly focus on CNF fragments of propositional logic, for which
a resolution-based algorithm of forgetting is presented at first. Accordingly, we show that forgetting is categorical in the
Horn, ren-Horn, q-Horn, double Horn \cite{Eiter:IC:1998} and Krom \cite{Krom:JSL:1970} (or 2-CNF) fragments.
Namely, the result of forgetting from a Horn (resp. ren-Horn, q-Horn, double Horn and Krom \cite{Krom:JSL:1970} (or 2-CNF))
theory is Horn (resp. ren-Horn, q-Horn, double Horn and Krom)  expressible.

More importantly, from the perspective of knowledge bases evolving, we are also interested
in the following reasoning problems about forgetting, 
besides the \VarIND\ and \VarEQ\ in \cite{Lang:JAIR:2003}, where $\Forget(\var,V)$ stands for
a result of forgetting $V$ from formula $\var$,
\begin{itemize}
  \item[(1)] [\VarIND] If a knowledge base $\Pi$ is independent of a set $V$ of atoms, i.e. $\Forget(\Pi,V)\equiv\Pi$.
  \item[(2)] After a knowledge base $\Sigma$ has evolved from a knowledge base $\Pi$ by incorporating
  some knowledge additionally on a set $V$ of new propositions,
  \begin{itemize}
  \item[] [\VarWeak] if the restriction of $\Sigma$
  on the signature  of $\Pi$ is at most as strong as $\Pi$, i.e. $\Pi\models\Forget(\Sigma,V)$.
  \item[] [\VarStrong]  if the restriction of $\Sigma$
  on the signature of $\Pi$ is at least as strong as $\Pi$, i.e. $\Forget(\Sigma,V)\models\Pi$.
  \item[] [\VarMatch] if the restriction of $\Sigma$
  on the signature of $\Pi$ perfectly matches $\Pi$, i.e. $\Forget(\Sigma,V)\equiv\Pi$.
  This is also known as  the forgetting result checking, i.e. if $\Pi$ is a result
  of forgetting $V$ from $\Sigma$.
  \end{itemize}

  \item[(3)] After two knowledge bases $\Pi$ and $\Sigma$ have evolved from a common knowledge base by
  incorporating  some knowledge additionally on a set $V$ of new propositions,
  \begin{itemize}
  \item[] [\VarEnt] if the restriction of one
  knowledge base on its original signature is at most as strong as that of the other, i.e.
  $\Forget(\Pi,V)\models\Forget(\Sigma,V)$.
  \item[] [\VarEQ] if the restriction of the two
  knowledge bases on a common signature are equivalent, i.e.
  $\Forget(\Pi,V)\equiv\Forget(\Sigma,V)$.
  \end{itemize}
\end{itemize}

We answer these problems for CNF, DNF, Horn, ren-Horn, q-Horn, and Krom fragments of propositional logic.
The main complexity results are summarized in  Table~\ref{tab:complexity}, from which one can see
that for Krom (resp. DNF) fragments, all of the six problems are tractable (resp. \coNP-complete).
While comparing Horn and its variants with CNF fragments, the corresponding complexity of the
former is one level below the latter in the complexity hierarchy.

\begin{table}[t]
  \centering
  \caption{Complexity results}\label{tab:complexity}
\begin{tabular}{ l |l| l| c| c}
            & CNF         & Horn/ren-Horn/q-Horn & Krom & DNF\\
  \hline
  \VarEQ    & \PIP{2}-c & \coNP-c   & P & \coNP-c  \\
  \VarIND   & \coNP-c   & P         & P & \coNP-c  \\
  \VarWeak  & \PIP{2}-c & \coNP-c   & P & \coNP-c  \\
  \VarStrong& \coNP-c   & P         & P & \coNP-c  \\
  \VarMatch & \PIP{2}-c & \coNP-c   & P & \coNP-c  \\
  \VarEnt   & \PIP{2}-c & \coNP-c   & P & \coNP-c\\
\end{tabular}
\end{table}


The rest of the paper are organized as follows. The basic notations of propositional logics and its fragments
are briefly introduced in Section~\ref{sec:preliminary}. Forgetting and its basic properties, algorithms and complexity
are presented in Section~\ref{sec:forget}.
Related work and concluding remarks are discussed in Section~\ref{sec:application}
and \ref{sec:related-work} respectively. 

\renewcommand{\thefootnote}

\section{Preliminaries}\label{sec:preliminary}
We assume a underlying propositional language ${\cal L}_{\cal A}$ with a finite set ${\cal A}$ of \textit{atoms}, called the
\emph{signature} of $\LA$.
A \textit{literal} is either an atom $p$ (called {\em positive literal}) or its \textit{negation} $\neg p$
(called {\em negative literal}). The \textit{complement} of a literal $l$ is $\neg l$.
The \emph{formulas} (of ${\cal L}_{\cal A}$) are defined as usual using connectives
$\land,\lor,\supset,\lrto$ and $\neg$. We assume two propositional constants $\top$ and $\bot$ for
tautology and contradiction respectively.
A \emph{theory} is a finite set of formulas. For a theory $\Sigma$, we use the following denotations:
\begin{itemize}
  \item $\neg\Sigma=\{\neg\var|\var\in\Sigma\}$,
  \item $\bigwedge\Sigma=\bigwedge_{\varphi\in\Sigma}\varphi$,
  \item $\bigvee\Sigma = \bigvee_{\varphi\in\Sigma}\varphi$, and
  \item $\Var(\Sigma)$ stands for the set of all atoms occurring in $\Sigma$.
\end{itemize}

An \emph{interpretation} is a set of atoms, which assigns \emph{true} to the atoms in
the set and \emph{false} to the others. The notion of \emph{satisfaction} between an
interpretation $I$ and a formula $\var$, written $I\models\var$, is
inductively defined in the standard manner. In this case $I$ is a called \emph{model} of $\var$.
By $\Mod(\varphi)$ we denote the set of models of $\varphi$.

A formula $\psi$ is a \emph{logical consequence} of  a formula $\varphi$, denoted by $\varphi\models\psi$, if
$\Mod(\varphi) \subseteq \Mod(\psi)$.
Two formulas $\var$ and $\psi$ are \emph{equivalent}, written $\var\equiv\psi$, if $\var\models\psi$ and $\psi\models\var$.
A formula $\psi$ is {\em irrelevant to} a set $V$ of atoms, denoted by $\IR(\psi,V)$, if there is a formula $\var$ such that
$\psi\equiv\var$ and $\Var(\var)\cap V=\emptyset$. Otherwise, $\psi$ is \emph{relevant}\footnote[1]{The definition of relevant is
equivalent with, but slightly different from, that of \cite{Lakemeyer:AIJ:1997}, in which $\psi$ is
\emph{relevant} to $V$ if there is a prime implicate of $\psi$ which mentions some atom from $V$.  } to $V$.

\subsection{Clauses and terms}
In the following we assume that $\neg\neg\var$ is shortten to $\var$ where $\var$ is a formula, unless explicitly stated otherwise.
A {\em clause} is an expression of the form $l_1\lor\cdots\lor l_n~(n\ge 0)$ where
$l_i~(1\le i\le n)$ are literals such that  $l_i\neq\neg l_j$ for every $i,j~(1\le i< j\le n)$. It is an
\emph{empty clause} in the case $n=0$, which means \textit{false}.
Dually,  a {\em term} is an expression of the form $l_1\land\cdots\land l_n~(n\ge 0)$ where
$l_i~(1\le i\le n)$ are literals such that $l_i\neq\neg l_j$ for every $i,j~(1\le i< j\le n)$.
By abusing the notation, we identify a clause $l_1\lor\cdots\lor l_n$ and
a term $l_1\land\cdots\land l_n$ with the set $\{l_1,\ldots,l_n\}$ when it is clear from its context.

A {\em conjunctive normal form (CNF)} formula is a conjunction of clauses, and a
{\em disjunctive normal form (DNF)} formula is a disjunction of terms. A $k$-CNF (resp. $k$-DNF) formula is a CNF (resp. DNF)
formula whose each clause (resp. term) contains no more than $k$ literals. In particular, $2$-CNF formulas are called \emph{Krom} formulas \cite{Krom:JSL:1970}.

A \textit{prime implicate} of a formula $\varphi$ is a clause $c$ such that $\varphi\models c$ and $\varphi\not\models c'$ for
every proper \emph{subclause} $c'\subset c$. Dually, a \textit{prime implicant} of $\varphi$ is a term $t$ such that
$t \models\varphi$ and $t'\not\models\varphi$ for each proper \emph{subterm} $t'\subset t$. A CNF (resp. DNF) formula is \textit{prime},
if it contains only prime implicates (resp. implicants). By $\PI(\psi)$ (resp. $\IP(\psi)$) we denote
the set of prime implicates (resp. implicants) of formula $\psi$.

In the following we shall identify a theory $\Sigma$ with the formula $\bigwedge\Sigma$ when there is no confusion.
The following lemma is well-known \cite{Marquis:1999:handbook}.
\begin{lemma}\label{lem:prime:imp}
Let $\Sigma$ be a theory and $\var$ be a term. Then
\begin{enumerate}[(1)]
  \item $\bigwedge\PI(\Sigma)\equiv\bigvee \IP(\Sigma)\equiv\Sigma$.
  \item $\var$ is a prime implicant of $\Sigma$ iff $\neg\var$ is a prime implicate   of $\neg\Sigma$.
  \item If $\Pi\equiv\Sigma$ then $\PI(\Sigma)=\PI(\Pi)$ and $\IP(\Sigma)=\IP(\Pi)$.
\end{enumerate}
\end{lemma}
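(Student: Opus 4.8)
The plan is to dispatch parts (3) and (2) first, since they are essentially bookkeeping, and then to reduce part (1) to a single canonical-clause argument together with (2). For (3), observe that whether a clause $c$ is a prime implicate of a theory is determined entirely by the relation $\models$ evaluated against the models of that theory: $c$ is a prime implicate of $\Sigma$ iff $\Sigma\models c$ and $\Sigma\not\models c'$ for every proper subclause $c'\subset c$. Since $\Pi\equiv\Sigma$ means $\Mod(\Pi)=\Mod(\Sigma)$, we have $\Sigma\models\chi$ iff $\Pi\models\chi$ for every formula $\chi$, in particular for $c$ and for each $c'$; hence the two defining conditions coincide clause by clause and $\PI(\Sigma)=\PI(\Pi)$. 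The claim $\IP(\Sigma)=\IP(\Pi)$ is the exact dual, with terms in place of clauses and entailment read from the left.

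For (2) I treat $\Sigma$ as the formula $\bigwedge\Sigma$, so that $\neg\Sigma$ is its logical negation. Writing $\var=l_1\land\cdots\land l_n$, the negation $\neg\var$ is the clause $\neg l_1\lor\cdots\lor\neg l_n$, and the map $t\mapsto\neg t$ on sets of literals is an inclusion-preserving bijection between the subterms of $\var$ and the subclauses of $\neg\var$ that carries proper subterms to proper subclauses. The one fact needed is that $t\models\Sigma$ iff $\neg\Sigma\models\neg t$, which holds because $\Mod(\neg\Sigma)$ and $\Mod(\neg t)$ are the complements of $\Mod(\Sigma)$ and $\Mod(t)$ and complementation reverses set inclusion. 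Applying this equivalence to $\var$ and to each of its proper subterms transforms the definition of ``$\var$ is a prime implicant of $\Sigma$'' verbatim into ``$\neg\var$ is a prime implicate of $\neg\Sigma$''.

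Part (1) is the only step with genuine content. The direction $\Sigma\models\bigwedge\PI(\Sigma)$ is immediate, as every prime implicate is by definition entailed by $\Sigma$. For the converse $\bigwedge\PI(\Sigma)\models\Sigma$ I argue contrapositively: given $I\notin\Mod(\Sigma)$, form the canonical clause $c_I=\bigvee_{p\in I\cap\Var(\Sigma)}\neg p\lor\bigvee_{p\in\Var(\Sigma)\setminus I}p$. It contains at most one literal per atom, so it is a legitimate clause, and its only falsifying interpretations are those agreeing with $I$ on $\Var(\Sigma)$; since $\Sigma$ depends only on $\Var(\Sigma)$ and $I\not\models\Sigma$, every model of $\Sigma$ satisfies $c_I$, i.e. $\Sigma\models c_I$. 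Repeatedly deleting a literal whenever entailment survives---a process that halts because $c_I$ is finite---produces a prime implicate $c\subseteq c_I$, which $I$ still falsifies; thus $I\not\models\bigwedge\PI(\Sigma)$. This gives $\bigwedge\PI(\Sigma)\equiv\Sigma$, and then $\bigvee\IP(\Sigma)\equiv\Sigma$ follows by duality: instantiating the identity just proved at $\neg\Sigma$ yields $\bigwedge\PI(\neg\Sigma)\equiv\neg\Sigma$, and negating while rewriting each $\neg c$ (for $c\in\PI(\neg\Sigma)$) as the matching prime implicant of $\Sigma$ supplied by (2) turns $\neg\bigwedge\PI(\neg\Sigma)$ into $\bigvee\IP(\Sigma)$. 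The only points demanding care are not real obstacles but degenerate cases: when $\Sigma$ is unsatisfiable the empty clause is its unique prime implicate and $\bigwedge\PI(\Sigma)\equiv\bot$, while when $\Sigma$ is valid no non-tautological clause is entailed, so $\PI(\Sigma)=\emptyset$ and the empty conjunction is $\top$; both agree with $\Sigma$.
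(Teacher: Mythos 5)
Your proof is correct, but there is nothing in the paper to compare it against step by step: the paper gives no proof of this lemma, stating it as ``well-known'' and citing Marquis's handbook chapter on consequence finding. What you have written is, in effect, the standard derivation that citation points to. Part (3) follows, as you say, because prime-implicate/implicant status is defined purely through entailment and hence depends only on $\Mod(\Sigma)$; part (2) is contraposition ($t\models\Sigma$ iff $\neg\Sigma\models\neg t$) combined with the inclusion-preserving bijection $t'\mapsto\neg t'$ between proper subterms of $\var$ and proper subclauses of $\neg\var$; and part (1) rests on the canonical-clause construction (for $I\not\models\Sigma$, the clause $c_I$ over $\Var(\Sigma)$ is falsified exactly by interpretations agreeing with $I$ on $\Var(\Sigma)$, is therefore entailed by $\Sigma$, and can be minimized to a prime implicate that $I$ still falsifies), with the implicant half recovered by instantiating the identity at $\neg\Sigma$ and translating back through (2). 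Two of your choices are worth highlighting because a careless write-up would fail there: you read $\neg\Sigma$ as $\neg\bigwedge\Sigma$ rather than the paper's set-wise $\neg\Sigma=\{\neg\var\mid\var\in\Sigma\}$ (under which (2) would not even be well formed), and you disposed of the degenerate cases $\Sigma\equiv\bot$ (where $\PI(\Sigma)$ is the singleton empty clause) and $\Sigma\equiv\top$ (where $\PI(\Sigma)=\emptyset$, since clauses in this paper exclude complementary literals and so cannot be tautologies). The only step you could spell out further is why minimality under single-literal deletion already yields primality: if $\Sigma\models c'$ for some proper subclause $c'\subset c$, then $\Sigma\models c\setminus\{l\}$ for any $l\in c\setminus c'$ by monotonicity of clause weakening, contradicting single-deletion minimality; this is immediate, but it is the hinge on which your greedy minimization argument turns.
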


Two clauses $c,c'$ are \emph{resolvable}, if there is an atom $p$ such that $p,\neg p\in c\cup c'$
and $c^*=(c\cup c')\setminus \{p,\neg p\}$ is a legal clause, viz, $c^*$ contains no pair of complement
literals. In this case we denote the clause $c\cup c'\setminus \{p,\neg p\}$  by $\res(c,c')$,
which is called their {\em resolvent}; otherwise, $\res(c, c')$ is undefined. It is well-known
that all prime implicates of a CNF formula $\varphi$ can be generated by resolution.

\subsection{Horn formulas and its variants}

In the following, by $\Pos(c)$ (resp. $\Neg(c)$) we denote the set of atoms
occurring positively (resp. negatively) in the clause or term $c$. In this sense
a clause $c$ can be written as $\Pos(c)\cup\neg\Neg(c)$.

A clause $c$ is \emph{Horn} \cite{Horn:JSL:1951} if $|\Pos(c)|\le 1$. Here $|D|$ denotes the number of elements in the set $D$.
A \emph{Horn formula} is a conjunction of Horn clauses. A formula $\var$ is \emph{Horn expressible} if there
is a Horn formula $\psi$ such that $\psi\equiv\var$.
A Horn formula $\var$ is \emph{double Horn} \cite{Eiter:IC:1998} if
there is a Horn formula $\psi$ such that $\psi\equiv \neg\var$, i.e., the negation of $\var$ is also Horn expressible.

Given a formula $\var$ and $V\subseteq\cal A$, we denote $\ren(\var,V)$ the result of
replacing every occurrence of atom $p\in V$  in $\var$ by $\neg p$ and
$\neg\neg p$ is shortened to $p$. For instance $\ren(p_1\lor\neg p_2\lor \neg p_3, \{p_1,p_2\})$ is the formula $\neg p_1\lor p_2\lor \neg p_3$.
A CNF formula $\var$ is \emph{Horn renamable} \cite{Lewis:JACM:1978} iff there exists a Horn renaming for it,
i.e., $\ren(\var,V)$ is a Horn formula for some $V\subseteq\cal A$.

\begin{definition}[\cite{Boros:AMAI:1990,Lang:AIJ:2008}]\label{def:QH-partition}
A CNF theory $\Sigma$ has a \emph{QH-partition} iff there exists a partition $\{Q,H\}$ of $\Var(\Sigma)$ s.t for every clause
$\delta$ of $\Sigma$, the following conditions hold:
\begin{enumerate}[(i)]
  \item $|\Var(\delta)\cap Q|\le 2$.
  \item $|\Pos(\delta)\cap H|\le 1$.
  \item If $|\Pos(\delta)\cap H|=1$ then $\Var(\delta)\cap Q=\emptyset$.
\end{enumerate}
\end{definition}
A CNF theory $\Sigma$ is \emph{q-Horn}  iff there exists a \emph{q-Horn renaming}
for it \cite{Boros:AMAI:1990}, i.e., there is a set $V\subseteq\cal A$
such that replacing in $\Sigma$ every occurrence of $p\in V$ by $\neg p$ leads to a CNF theory having a QH-partition $\{Q,H\}$. Here $\neg\neg p$
is shorten to $p$. It is not difficult to see that, every Horn theory is Horn renamable, every Horn renamable theory is q-Horn ($Q=\emptyset$),
and every 2-CNF theory is also q-Horn ($H=\emptyset$).
A CNF formula $\var$ is Krom (resp. ren-Horn and q-Horn) \emph{expressible} if there is Krom (resp.
ren-Horn and q-Horn) formula $\psi$ such that $\var\equiv\psi$.

In terms of Lemma~\ref{lem:prime:imp}, the following lemma are well-known.

\begin{lemma}\label{lem:Horn:expres:PI}
 Let $\Sigma$ be a CNF theory. The following conditions are equivalent.
 \begin{enumerate}[(i)]
   \item $\Sigma$ is Horn expressible.
   \item $\PI(\Sigma)$ is a Horn theory.
   \item $M_1\models\Sigma$ and $M_2\models\Sigma$ imply $M_1\cap M_2\models\Sigma$, i.e. $\Mod(\Sigma)$ is \emph{closed under intersection}.
 \end{enumerate}
\end{lemma}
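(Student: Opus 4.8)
The plan is to establish the cycle of implications $(ii)\Rightarrow(i)\Rightarrow(iii)\Rightarrow(ii)$, with the last step carrying essentially all the work.

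For $(ii)\Rightarrow(i)$, I would invoke Lemma~\ref{lem:prime:imp}(1), which gives $\bigwedge\PI(\Sigma)\equiv\Sigma$. Hence if $\PI(\Sigma)$ is a Horn theory, then $\bigwedge\PI(\Sigma)$ is a Horn formula equivalent to $\Sigma$, immediately witnessing that $\Sigma$ is Horn expressible. For $(i)\Rightarrow(iii)$, I would take a Horn formula $\psi\equiv\Sigma$ and verify the closure property clause by clause. For a Horn clause $c=\neg p_1\lor\cdots\lor\neg p_k\lor q$ with $M_1,M_2\models c$: if the single positive atom $q$ lies in $M_1\cap M_2$ we are done; otherwise $q\notin M_i$ for some $i$, which forces some $p_j\notin M_i$, and since $M_1\cap M_2\subseteq M_i$ we get $p_j\notin M_1\cap M_2$, so $\neg p_j$ holds there. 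A purely negative clause is handled the same way. As this holds for every clause, it holds for $\psi$, hence for $\Sigma$, so $\Mod(\Sigma)$ is closed under intersection.

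The hard part is $(iii)\Rightarrow(ii)$, which I would argue by contraposition. Suppose some prime implicate $c\in\PI(\Sigma)$ is not Horn, so it contains two distinct positive atoms $p,q\in\Pos(c)$. By primality, $\Sigma\not\models c\setminus\{p\}$ and $\Sigma\not\models c\setminus\{q\}$, yielding models $M_1,M_2\models\Sigma$ with $M_1\not\models c\setminus\{p\}$ and $M_2\not\models c\setminus\{q\}$. The key step is to read off the truth values these models assign to the literals of $c$. Since $M_1$ falsifies every literal of $c$ except possibly $p$, and $M_1\models\Sigma\models c$ forces $p$ to be true, we obtain $p\in M_1$, $q\notin M_1$, every other positive atom of $c$ outside $M_1$, and every negated atom of $c$ inside $M_1$; by symmetry (swapping the roles of $p$ and $q$) we get $q\in M_2$, $p\notin M_2$, the remaining positive atoms of $c$ outside $M_2$, and the negated atoms of $c$ inside $M_2$.

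Intersecting, $p\notin M_1\cap M_2$ (as $p\notin M_2$) and $q\notin M_1\cap M_2$ (as $q\notin M_1$), the other positive atoms of $c$ remain outside, and every negated atom of $c$ remains inside both, hence inside the intersection. Thus $M_1\cap M_2$ falsifies every literal of $c$, i.e.\ $M_1\cap M_2\not\models c$. Since $\Sigma\models c$, this gives $M_1\cap M_2\not\models\Sigma$, contradicting closure under intersection. Therefore every prime implicate of $\Sigma$ is Horn, establishing $(ii)$. The only subtlety to watch is the bookkeeping of which atoms land in $M_1\cap M_2$; primality is exactly what guarantees the two ``separating'' models exist, and the two positive atoms $p,q$ are precisely what the intersection kills simultaneously.
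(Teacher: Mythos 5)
Your proof is correct. Note that the paper itself offers no proof of this lemma: it simply remarks that, in view of Lemma~\ref{lem:prime:imp}, the statement is well-known (it is the classical McKinsey--Horn characterization, cf.\ the Marquis handbook reference), so there is no paper argument to compare against; your write-up supplies the missing details. Your cycle $(ii)\Rightarrow(i)\Rightarrow(iii)\Rightarrow(ii)$ is the standard one, and each step checks out: $(ii)\Rightarrow(i)$ is exactly the use of Lemma~\ref{lem:prime:imp}(1) the paper hints at; $(i)\Rightarrow(iii)$ is the routine clause-by-clause verification that Horn clauses are preserved under intersection of models; and in $(iii)\Rightarrow(ii)$ you correctly exploit primality to get the two separating models $M_1,M_2$, and the bookkeeping is right \--- $M_1$ satisfies only the literal $p$ of $c$, $M_2$ satisfies only $q$, so $M_1\cap M_2$ falsifies every literal of $c$ (the two positive atoms $p,q$ are killed by the opposite model, the remaining positive atoms are outside both, and the atoms occurring negatively lie in both), contradicting closure under intersection since $\Sigma\models c$. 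One minor point worth making explicit: if $\Sigma$ is unsatisfiable, $\PI(\Sigma)$ is just the empty clause, which is Horn, and closure under intersection holds vacuously, so the degenerate case causes no trouble in any direction of your argument.
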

%
%
%

It is known that it is tractable to recognize if a CNF theory is ren-Horn \cite{Lewis:JACM:1978,Chandru:AMAI:1990},
q-Horn \cite{Boros:DAM:1994}, or double-Horn \cite{Eiter:IC:1998}, 
and the satisfiability of ren-Horn, q-Horn and double Horn formulas are all tractable.

\begin{proposition}\label{prop:ren:q:Horn:res}
   Let $\Sigma$ be a CNF theory, $V\subseteq\cal A$ and $c_1,c_2$  two resolvable clauses of $\Sigma$. We have the following.
\begin{enumerate}[(i)]
  \item $\res(\ren(c_1,V),\ren(c_2,V))=\ren(\res(c_1,c_2),V)$.
  \item If two subsets $Q,H$ of $\cal A$ with $Q\cap H=\emptyset$ and $\Var(c_1\cup c_2)\subseteq Q\cup H$
  satisfy the conditions (i), (ii) and (iii)
  of Definition~\ref{def:QH-partition} for both $c_1$ and $c_2$, then $Q$ and $H$ satisfy the same conditions
  for $\res(c_1,c_2)$ as well.
\end{enumerate}
\end{proposition}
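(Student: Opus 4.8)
The plan is to prove the two parts of Proposition~\ref{prop:ren:q:Horn:res} separately, since they concern different structural properties (renaming and QH-partitions), and both reduce to careful bookkeeping about which literals survive the resolution step.

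For part (i), first I would fix the resolving atom $p$, so that without loss of generality $p\in c_1$ and $\neg p\in c_2$, and $\res(c_1,c_2)=(c_1\cup c_2)\setminus\{p,\neg p\}$. The key observation is that $\ren(\cdot,V)$ acts literal-by-literal and is an involution: it flips the sign of every literal on an atom in $V$ and leaves the others untouched. I would split into the two cases $p\in V$ and $p\notin V$. If $p\notin V$, renaming leaves $p$ and $\neg p$ fixed, so $p\in\ren(c_1,V)$ and $\neg p\in\ren(c_2,V)$ and these remain the complementary pair that gets cancelled; if $p\in V$, then $\ren$ sends $p\mapsto\neg p$ and $\neg p\mapsto p$, so the pair cancelled in the renamed clauses is $\{\neg p,p\}$ on the same atom $p$. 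In either case the set of atoms eliminated is exactly $\{p\}$, and for every other literal $l$ we have $l\in c_1\cup c_2$ iff $\ren(l,V)\in\ren(c_1,V)\cup\ren(c_2,V)$. Hence applying $\ren$ and then resolving yields the same literal set as resolving and then applying $\ren$. I would also note that legality is preserved because $\ren$ is a bijection on literals that respects complementation, so $c^*$ contains a complementary pair iff $\ren(c^*,V)$ does.

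For part (ii), I would let $c^*=\res(c_1,c_2)=(c_1\cup c_2)\setminus\{p,\neg p\}$ and verify conditions (i), (ii), (iii) of Definition~\ref{def:QH-partition} for $c^*$ under the fixed partition $\{Q,H\}$. The essential point is monotonicity: the literal set of $c^*$ is contained in $c_1\cup c_2$, and in fact $\Var(c^*)\subseteq\Var(c_1)\cup\Var(c_2)$, while $\Pos(c^*)\subseteq\Pos(c_1)\cup\Pos(c_2)$. I would treat the resolving atom $p$ with care, since it is the one variable present in $c_1\cup c_2$ but absent from $c^*$. For condition~(i) I would argue $|\Var(c^*)\cap Q|\le 2$ by a case analysis on whether $p\in Q$: the delicate case is when neither $c_1$ nor $c_2$ individually uses two $Q$-variables but their union does; here the hypothesis $\Var(c_1\cup c_2)\subseteq Q\cup H$ together with the fact that $p$ is eliminated must be used to bound the count, and I expect this to be the one spot where a naive union bound of $2+2$ is too weak and the removal of $p$ is what saves the argument. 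For conditions~(ii) and~(iii) the reasoning is comparatively routine, because $\Pos(c^*)\cap H\subseteq(\Pos(c_1)\cup\Pos(c_2))\cap H$ and removing literals can only decrease these counts, so once (ii) holds for each $c_i$ and (iii) links a positive $H$-literal to an empty $Q$-intersection, the same implications transfer to the sub-clause $c^*$.

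The main obstacle will be the case analysis in condition~(i) of part~(ii): I must confirm that resolution cannot create a clause whose $Q$-variable count exceeds two even when the two parents jointly touch several $Q$-variables. The crux is that the resolving variable $p$ disappears from $c^*$, so if $p\in Q$ the counts from the two parents overlap-and-drop enough to stay within the bound, and if $p\in H$ then each parent already has at most two $Q$-variables and no new $Q$-variable is introduced. I would enumerate the sub-cases $p\in Q$ versus $p\in H$ explicitly and, in each, combine the per-parent bounds with the deletion of $p$; I expect the argument to hinge on the observation that any $Q$-variable of $c^*$ other than $p$ already appeared in $c_1$ or $c_2$, so the bound for $c^*$ never strictly exceeds the worst of the two parent bounds once $p$ is accounted for.
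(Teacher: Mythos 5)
Your part (i) is correct and essentially identical to the paper's argument: fix the resolved atom $p$, note that $\ren(\cdot,V)$ acts literal-by-literal and respects complementation, and conclude that renaming and resolving commute. The gap is in part (ii), and it sits exactly where you did not expect it. The resolvent is a subclause of $c_1\cup c_2$, not of either parent, so \emph{none} of the three QH-conditions transfers by monotonicity: two clauses each with $|\Pos(c_i)\cap H|\le 1$ can have a union with two positive $H$-atoms, and two clauses each with $|\Var(c_i)\cap Q|\le 2$ can have a union with four $Q$-variables. Your claim that conditions (ii) and (iii) are ``comparatively routine'' because removing literals only decreases counts is therefore false as stated, and your mechanism for condition (i) in the case $p\in H$ (``the removal of $p$ is what saves the argument''; the bound ``never strictly exceeds the worst of the two parent bounds'') also fails: when $p\in H$, deleting $p$ does not reduce the $Q$-count at all, and the per-parent bounds alone only give $|\Var(\res(c_1,c_2))\cap Q|\le 4$.

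What actually closes every case --- and what the paper's proof does --- is a cross-use of conditions (ii) and (iii) between the two parents. Write $c_1=\{p\}\cup c_1'$ and $c_2=\{\neg p\}\cup c_2'$. If $p\in Q$, then $\Var(c_i)\cap Q\neq\emptyset$ for both $i$, so the contrapositive of (iii) together with (ii) forces $|\Pos(c_i)\cap H|=0$ for \emph{both} parents; this yields (ii) and (iii) for the resolvent outright, and (i) follows from your correct ``overlap-and-drop'' count (each parent has at most one $Q$-variable besides $p$). If $p\in H$, then $p\in\Pos(c_1)\cap H$, so condition (iii) applied to $c_1$ forces $\Var(c_1)\cap Q=\emptyset$: every $Q$-variable of the resolvent comes from $c_2$ alone, which gives (i); condition (ii) applied to $c_1$ forces $\Pos(c_1)\cap H=\{p\}$ exactly, so $\Pos(c_1')\cap H=\emptyset$ and the resolvent's positive $H$-count is bounded by that of $c_2$, which gives (ii); and if the resolvent does have a positive $H$-atom, it comes from $c_2$, so (iii) for $c_2$ gives $\Var(c_2)\cap Q=\emptyset$, which combined with $\Var(c_1)\cap Q=\emptyset$ gives (iii). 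Without these transfers of information from one parent's conditions to the other parent's variables, your counts stall at $4$ and $2$ where the proposition needs $2$ and $1$.
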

\begin{proof}
  Without loss of generality, suppose $c_1=\{p\}\cup c_1'$ and $c_2=\{\neg p\}\cup c_2'$.

  (i) Note that $\res(c_1,c_2)=c_1'\cup c_2'$, $\ren(c_1,V)=\ren(p,V)\cup \ren(c_1',V)$ and
  $\ren(c_2,V)=\ren(\neg p,V)\cup \ren(c_2',V)$. Due to the fact
  that $\ren(c_1,v)$ and  $\ren(c_2,V)$ are resolvable and $\res(\ren(c_1,V),\ren(c_2,V))=\ren(c_1',V)\cup\ren(c_2',V)$,
  it follows that   $\res(\ren(c_1,V),\ren(c_2,V))=\ren(\res(c_1,c_2),V)$.

  (ii) We consider the following two cases:

  (a) $p\in Q$.  We have the following:
  \begin{itemize}
    \item Note that  $p\in\Var(c_1)\cap\Var(c_2)$ and $|\Var(c_i)\cap Q|\le 2$ for $i=1,2$ by the condition (i)
    in Definition~\ref{def:QH-partition}. It shows
    that $|\Var(c_1'\cup c_2')\cap Q|\le 2$;
    \item By $\Var(c_i)\cap Q\neq\emptyset$ for $i=1,2$ we have that
    $|\Pos(c_i)\cap H|=0$ due to the fact $|\Pos(c_i)\cap H|\le 1$ and $|\Pos(c_i)\cap H|\neq 1$ according
    to the conditions (ii) and (iii) of Definition~\ref{def:QH-partition}.
    It follows $|\Pos(c_1'\cup c_2')\cap H|=0$.
  \end{itemize}

  (b) $p\notin Q$ i.e. $p\in H$. Now we have the following:
  \begin{itemize}
    \item Since $p\in \Pos(c_1)\cap H$ we have $\Var(c_1)\cap Q=\emptyset$ by conditions (ii) and (iii)
    of Definition~\ref{def:QH-partition}. It implies that
    $|\Var(c_1\cup c_2)\cap Q|=|Var(c_2)\cap Q|\le 2$ by condition (i) of Definition~\ref{def:QH-partition}.
    Thus $|\Var(c_1'\cup c_2')\cap Q|\le 2$.

    \item Note that $|\Pos(c_1)\cap H|\le 1$ by condition (ii) of Definition~\ref{def:QH-partition}
    and $p\in \Pos(c_1)\cap H$. It shows that $|\Pos(c_1)\cap H|=1$ and
    $\Pos(c_1')\cap H=\emptyset$, thus $|\Pos(c_1'\cup c_2')\cap H|\le 1$ due to $|\Pos(c_2)\cap H|\le 1$
    by condition (ii) of Definition~\ref{def:QH-partition}.

    \item In the case $|\Pos(c_1'\cup c_2')\cap H|=1$ we have that
    $|\Pos(c_2')\cap H|=1$ due to $\Pos(c_1')\cap H=\emptyset$, which shows that $|\Pos(c_2)\cap H|=1$ by condition (ii) of
    Definition~\ref{def:QH-partition}, and then $\Var(c_2)\cap Q=\emptyset$.  Recall that $\Var(c_1)\cap Q=\emptyset$ (see
    the proof in the first item). Thus $\Var(c_1\cup c_2)\cap Q=\emptyset$, then $\Var(c_1'\cup c_2')\cap Q=\emptyset$.
  \end{itemize}
It completes the proof.
\end{proof}

Let $\Sigma$ be a CNF theory. We define
\begin{align*}
 &\res^0_\Sigma=\Sigma,\\
 &\res^{n+1}_\Sigma=\res^n_\Sigma\cup\{\res(c,c')|c,c'\in \res^n_\Sigma\textrm{ and $c,c'$ are resolvable}\}.
\end{align*}

\begin{theorem}\label{thm:ren:q:horn:res}
  Let $V\subseteq\cal A$ and $\Sigma$ a CNF theory.
  \begin{enumerate}[(i)]
  \item If $\ren(\Sigma,V)$ is a Horn theory then $\ren(\res^n_\Sigma,V)$ is a Horn theory for $n\ge 0$.
  \item If the partition $\{Q,H\}$ of $\Var(\ren(\Sigma,V))$ satisfies the conditions (i), (ii) and (iii)
  of Definition~\ref{def:QH-partition} for every clause of $\Sigma$, then
  $\{Q,H\}$ satisfies the same conditions for every clauses in $\res^n_{\ren(\Sigma,V)}$ for $n\ge 0$.
  \end{enumerate}
\end{theorem}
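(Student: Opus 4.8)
The plan for both parts is a routine induction on $n$ that lifts the single-step Proposition~\ref{prop:ren:q:Horn:res} to the whole resolution closure $\res^n$, so the real work has already been done in that proposition; what remains is bookkeeping about how $\ren(\cdot,V)$ and the QH-conditions interact with the inductive definition of $\res^{n+1}_\Sigma$.

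For part (i) I would induct on $n$. The base case $n=0$ is immediate, since $\res^0_\Sigma=\Sigma$ and $\ren(\Sigma,V)$ is Horn by assumption. For the inductive step, I would use that $\ren(\cdot,V)$ acts clause-by-clause, so
\[
\ren(\res^{n+1}_\Sigma,V)=\ren(\res^n_\Sigma,V)\cup\{\ren(\res(c,c'),V)\mid c,c'\in\res^n_\Sigma\text{ resolvable}\}.
\]
The first set is Horn by the induction hypothesis. For each resolvent, the clauses $\ren(c,V)$ and $\ren(c',V)$ lie in $\ren(\res^n_\Sigma,V)$ and are therefore Horn by the induction hypothesis; they remain resolvable, as already observed in the proof of Proposition~\ref{prop:ren:q:Horn:res}(i). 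By that proposition, $\ren(\res(c,c'),V)=\res(\ren(c,V),\ren(c',V))$, and the resolvent of two Horn clauses is Horn: the resolved atom is the unique positive literal on one of the two sides, so at most one positive literal survives. Hence every clause of $\ren(\res^{n+1}_\Sigma,V)$ is Horn.

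For part (ii) I would work directly with the closure of $\ren(\Sigma,V)$ and again induct on $n$. The base case is $\res^0_{\ren(\Sigma,V)}=\ren(\Sigma,V)$, for whose clauses the conditions hold by hypothesis (read relative to $\ren(\Sigma,V)$, since conditions (ii) and (iii) of Definition~\ref{def:QH-partition} are polarity-sensitive). For the step, a new clause has the form $\res(c_1,c_2)$ with $c_1,c_2\in\res^n_{\ren(\Sigma,V)}$ resolvable, and by the induction hypothesis $\{Q,H\}$ satisfies conditions (i), (ii) and (iii) for both $c_1$ and $c_2$. To invoke Proposition~\ref{prop:ren:q:Horn:res}(ii) I must verify its precondition $\Var(c_1\cup c_2)\subseteq Q\cup H$; this holds because resolution introduces no new atoms, so every clause of $\res^n_{\ren(\Sigma,V)}$ has its variables inside $\Var(\ren(\Sigma,V))=Q\cup H$. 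With $Q\cap H=\emptyset$ given, Proposition~\ref{prop:ren:q:Horn:res}(ii) then yields the conditions for $\res(c_1,c_2)$, closing the induction.

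The main (and essentially only) obstacle is this verification of the variable-containment precondition of Proposition~\ref{prop:ren:q:Horn:res}(ii), together with the observation that $\ren(\cdot,V)$ distributes over the recursive definition of $\res^{n+1}$ and maps resolvable clauses to resolvable clauses. Once these are in place, both inductions reduce immediately to the corresponding parts of Proposition~\ref{prop:ren:q:Horn:res}.
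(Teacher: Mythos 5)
Your proposal is correct and follows essentially the same route as the paper: induction on $n$, with Proposition~\ref{prop:ren:q:Horn:res} supplying the single resolution step in both parts. The only difference is that you explicitly verify two details the paper leaves implicit (that $\ren(\cdot,V)$ preserves resolvability, and that $\Var(c_1\cup c_2)\subseteq Q\cup H$ because resolution introduces no new atoms), which is a welcome tightening rather than a different approach.
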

\begin{proof}
 We prove the theorem by induction on $n$.\\
  (i)  Base: it trivially holds for $n=0$ due to $\res^0_\Sigma=\Sigma$.

  Step: Suppose that $\ren(\res^n_\Sigma,V)$ is a Horn formula. For any $c\in \res^{n+1}_\Sigma\setminus \res^n_\Sigma$,
  $c=\res(c_1,c_2)$ for some clauses $c_1,c_2$ of $\res^n_\Sigma$. According to (i) of Proposition~\ref{prop:ren:q:Horn:res} we have
  $\ren(c,V)=\ren(\res(c_1,c_2),V)=\res(\ren(c_1,V),\ren(c_2,V))$. It follows that $\ren(c,V)$ is a Horn clauses since
  the resolvent of two Horn clauses is a Horn clause.

  (ii) Base: it trivially holds for $n=0$ due to $\res^0_{\ren(\Sigma,V)}=\ren(\Sigma,V)$.

  Step: Suppose that $Q$ and $H$ satisfy the same conditions for every clauses in $\res^n_{\ren(\Sigma,V)}$.
  For every clause $c\in \res^{n+1}_{\ren(\Sigma,V)}\setminus
  \res^n_{\ren(\Sigma,V)}$, there are two resolvable clauses $c_1,c_2\in \res^n_{\ren(\Sigma,V)}$ such that
  $c=\res(c_1,c_2)$. In terms of (ii) of Proposition~\ref{prop:ren:q:Horn:res},
  $Q$ and $H$ satisfy the conditions (i), (ii) and (iii) of Definition~\ref{def:QH-partition} for
  the clause $c$. Thus $\{Q,H\}$ satisfies the same conditions for every clauses in $\res^{n+1}_{\ren(\Sigma,V)}$.
\end{proof}

Together with Lemma~\ref{lem:Horn:expres:PI} and the fact that
$|\res(c_1,c_2)|\le 2$ if $|c_i|\le 2~(1\le i\le 2)$, the theorem above implies:
\begin{corollary}\label{cor:Horn:variant:PI}
   Let $V\subseteq\cal A$ and $\Sigma$ a CNF theory. If $\Sigma$ is a Horn (resp. ren-Horn  and q-Horn) theory
   then $\PI(\Sigma)$ is a  Horn (resp.  ren-Horn  and q-Horn) theory.
\end{corollary}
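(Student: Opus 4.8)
The plan is to combine the completeness of resolution for prime implicates with Theorem~\ref{thm:ren:q:horn:res}. Write $\res^*_\Sigma=\bigcup_{n\ge 0}\res^n_\Sigma$ for the resolution closure of $\Sigma$. By the well-known fact recalled above that every prime implicate of a CNF theory is derivable by resolution, the prime implicates are exactly the subsumption-minimal members of $\res^*_\Sigma$; in particular $\PI(\Sigma)\subseteq\res^*_\Sigma$. Since resolving two clauses never introduces a fresh atom, we also have $\Var(\PI(\Sigma))\subseteq\Var(\Sigma)$. The uniform strategy is therefore to show that the whole closure $\res^*_\Sigma$ lies in the target class and then read off that its subtheory $\PI(\Sigma)$ does too.

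For the Horn case the conclusion is in fact immediate from Lemma~\ref{lem:Horn:expres:PI}: a Horn theory is trivially Horn expressible, so condition (i) there holds and hence condition (ii), $\PI(\Sigma)$ Horn, follows; alternatively one instantiates Theorem~\ref{thm:ren:q:horn:res}(i) with $V=\emptyset$ to see that every $\res^n_\Sigma$, and thus $\res^*_\Sigma\supseteq\PI(\Sigma)$, is Horn. For the ren-Horn case I would keep $V$ general: if $\Sigma$ is ren-Horn then $\ren(\Sigma,V)$ is Horn for some $V\subseteq\cal A$, whence Theorem~\ref{thm:ren:q:horn:res}(i) gives that $\ren(\res^n_\Sigma,V)$ is Horn for every $n$; so $\ren(c,V)$ is Horn for each $c\in\res^*_\Sigma$, in particular for each $c\in\PI(\Sigma)$, which is exactly the assertion that $\PI(\Sigma)$ is ren-Horn.

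The q-Horn case is the one that needs both parts of our earlier machinery. First, Proposition~\ref{prop:ren:q:Horn:res}(i) shows that renaming commutes with resolution, so $\res^n_{\ren(\Sigma,V)}=\ren(\res^n_\Sigma,V)$ for all $n$, and taking the union, $\res^*_{\ren(\Sigma,V)}=\ren(\res^*_\Sigma,V)$. Now suppose $\Sigma$ is q-Horn and fix $V$ so that $\ren(\Sigma,V)$ carries a QH-partition $\{Q,H\}$. By Theorem~\ref{thm:ren:q:horn:res}(ii) this same $\{Q,H\}$ satisfies conditions (i)--(iii) of Definition~\ref{def:QH-partition} for every clause of $\res^*_{\ren(\Sigma,V)}=\ren(\res^*_\Sigma,V)$, hence for every clause of the subset $\ren(\PI(\Sigma),V)$. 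Restricting $\{Q,H\}$ to $\Var(\PI(\Sigma))$ (harmless, since atoms not occurring in a clause cannot violate any of the three conditions) yields a q-Horn renaming of $\PI(\Sigma)$, so $\PI(\Sigma)$ is q-Horn. The $|\res(c_1,c_2)|\le 2$ remark noted just before the corollary gives the analogous Krom statement in the same way, since then $\res^*_\Sigma$ stays within $2$-CNF.

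Given the preparatory results, the only genuinely load-bearing ingredient is the containment $\PI(\Sigma)\subseteq\res^*_\Sigma$: it is what lets us push the class-closure of Theorem~\ref{thm:ren:q:horn:res}, which a priori speaks about all derivable resolvents, down onto the distinguished prime implicates. The rest --- the commutation of $\ren$ with $\res$ and the restriction of the partition to a smaller variable set --- is routine bookkeeping, so I would expect no real obstacle beyond stating the resolution-completeness fact precisely.
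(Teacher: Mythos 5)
Your proof is correct and takes essentially the same route as the paper: the paper obtains this corollary from Theorem~\ref{thm:ren:q:horn:res} together with Lemma~\ref{lem:Horn:expres:PI} and the resolution-completeness of prime implicates, which is exactly the containment $\PI(\Sigma)\subseteq\res^*_\Sigma$ that you identify as the load-bearing step. Your additional bookkeeping (commuting $\ren$ with $\res$ via Proposition~\ref{prop:ren:q:Horn:res}(i) and restricting the QH-partition to $\Var(\PI(\Sigma))$) merely fills in details the paper leaves implicit.
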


As illustrated by the following example, the reverse of
the above corollary do not generally hold even if $\Sigma$ is Horn expressible.
\begin{example}
  Let $\Sigma=(p\lor q)\land (\neg p\lor\neg q)\land (p\lor \neg q)$. Since $\Mod(\Sigma)=\{\{p\}\}$ (over
  the signature $\{p,q\}$), $\Sigma$ is Horn expressible but it is not a Horn formula.
  In fact, $\PI(\Sigma)=\{p,\neg q\}$, which is a Horn theory. However
  $\Sigma$ is not Horn renamable as we have that $\ren(\Sigma,V)$ is not a Horn formula for any $V\subseteq\{p,q\}$.

  Let $\Pi=(p\lor q\lor r)\land (p \lor q\lor \neg r)\land(\neg p\lor \neg q\lor r)\land (\neg p\lor \neg q\lor \neg r)\land(p\lor\neg q)$.
  We have that $\PI(\Pi)=\{p,\neg q\}$. 
  It is evident that
  $\PI(\Pi)$ is a 2-CNF formula, thus a q-Horn formula. However, one can verify that
  $\Pi$ is not a q-Horn formula.
  \Eed
\end{example}

Let $M,X$ be two sets of atoms. We denote $M\div X$ the symmetric difference $(M\setminus X)\cup (X\setminus M)$.
For a collection ${\cal M}$ of interpretations, we denote ${\cal M}\div X=\{M\div X|M\in\cal M\}$.
\begin{proposition}\label{prop:ren:Model}
  Let $\Sigma$ be a formula and $V\subseteq \cal A$. Then
  $\Mod(\Sigma)\div V=\Mod(\ren(\Sigma,V))$.
\end{proposition}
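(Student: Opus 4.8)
The plan is to reduce the asserted set equality to a pointwise equivalence and then exploit that forming the symmetric difference with the fixed set $V$ is an involution on interpretations. Concretely, I would first isolate the auxiliary claim, call it $(\star)$, that for every interpretation $I$ and every formula $\varphi$,
\[
I \models \ren(\varphi, V) \iff I \div V \models \varphi.
\]
Granting $(\star)$, the proposition follows quickly. Since $(M \div V)\div V = M$ for every interpretation $M$, the map $M \mapsto M\div V$ is its own inverse on the collection of all interpretations. For the inclusion $\Mod(\Sigma)\div V \subseteq \Mod(\ren(\Sigma,V))$, take $M\models\Sigma$ and apply $(\star)$ to $I = M\div V$: since $(M\div V)\div V = M\models\Sigma$, we get $M\div V \models \ren(\Sigma,V)$. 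For the reverse inclusion, if $I\models\ren(\Sigma,V)$ then $(\star)$ gives $I\div V\models\Sigma$, so $I = (I\div V)\div V \in \Mod(\Sigma)\div V$.

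The heart of the matter is $(\star)$, which I would prove by structural induction on $\varphi$. The set $V$ enters only through the atomic base case, which rests on the elementary membership behaviour of symmetric difference: for $p\in V$ one has $p\in I\div V \iff p\notin I$, whereas for $p\notin V$ one has $p\in I\div V \iff p\in I$. If $\varphi = p$ with $p\notin V$, then $\ren(p,V)=p$ and the second fact yields $I\models p \iff I\div V\models p$. If $\varphi = p$ with $p\in V$, then $\ren(p,V)=\neg p$ and the first fact yields $I\models\neg p \iff p\notin I \iff p\in I\div V \iff I\div V\models p$. The constants $\top$ and $\bot$ are immediate.

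For the inductive step I would use that $\ren$ commutes with every connective, i.e. $\ren(\neg\psi,V)=\neg\ren(\psi,V)$, $\ren(\psi_1\land\psi_2,V)=\ren(\psi_1,V)\land\ren(\psi_2,V)$, and likewise for $\lor$, $\supset$ and $\lrto$, because $\ren$ merely substitutes $\neg p$ for each $p\in V$ throughout the formula. As satisfaction is defined compositionally, each case then follows directly from the induction hypothesis applied to the immediate subformulas; for example, in the negation case $I\models\ren(\neg\psi,V)\iff I\not\models\ren(\psi,V)\iff I\div V\not\models\psi\iff I\div V\models\neg\psi$.

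I expect the only real friction to be bookkeeping around the definition of $\ren$, specifically the convention that $\neg\neg p$ is shortened to $p$. This rewriting is harmless for the induction since $\neg\neg p$ and $p$ have identical models, so it does not affect satisfaction; one simply has to note that, up to this double-negation normalisation, $\ren$ genuinely distributes over the connectives as used above. Once that is granted, $(\star)$ and hence the proposition follow.
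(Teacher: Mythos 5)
Your proof is correct, and it takes a genuinely different route from the paper's. The paper proves the inclusion $\Mod(\Sigma)\div V\subseteq\Mod(\ren(\Sigma,V))$ by a direct contradiction argument at the clause level: given $M'\models\Sigma$ and $M=M'\div V$ with $M\not\models\ren(\Sigma,V)$, it picks a violated clause $\ren(c,V)$, takes a literal $l$ of $c$ satisfied by $M'$, and splits on whether $\Var(l)\in V$ to reach a contradiction; the reverse inclusion is then obtained by applying this direction to the formula $\ren(\Sigma,V)$ and using that renaming twice is the identity, $\ren(\ren(\Sigma,V),V)=\Sigma$. You instead isolate the pointwise biconditional $(\star)$, namely $I\models\ren(\varphi,V)\iff I\div V\models\varphi$, prove it by structural induction over all connectives, and derive both inclusions symmetrically from $(\star)$ together with the involution $(M\div V)\div V=M$, with no need for the formula-level involution of $\ren$. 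What your approach buys is generality and fidelity to the statement as written: the proposition speaks of an arbitrary formula $\Sigma$, but the paper's argument refers to ``clauses of $\Sigma$'' and so, strictly speaking, covers only CNF theories (admittedly the case used later in the paper); your induction covers every formula of $\LA$ and also deals explicitly with the $\neg\neg p\mapsto p$ normalisation, which the paper leaves tacit. What the paper's argument buys in exchange is brevity in the clausal case: a single literal-level case analysis, with no inductive apparatus.
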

\begin{proof}
  $(\Rto)$ Let $M\in\Mod(\Sigma)\div V$. There exists $M'\models\Sigma$ such that $M=(M'\setminus V)\cup (V\setminus M')$.
  Suppose $M\not\models \ren(\Sigma,V)$. It follows that $M\not\models \ren(c,V)$ for some clause $c\in \Sigma$. By
  $M'\models c$ we have that $M'\models l$ for some literal $l$ in $c$. Evidently, if $\Var(l)\notin V$ then
  $l$ is also a literal of $\ren(c,V)$ and $M\models l$, thus $M\models \ren(c,V)$. In the
  case $\Var(l)\in V$, we consider the two cases, where $p$ is an atom:
  \begin{itemize}
    \item $l=p$. It shows that $p\in M'$ and then $p\notin M$. Thus $M\models \ren(c,V)$ due to $M\models\neg p$.
    \item $l=\neg p$. It shows $p\notin M'$ and then $p\in M$. Thus $M\models \ren(c,V)$ due to $M\models p$.
  \end{itemize}
  Either of the above two cases result in a confliction.

  $(\Lto)$ Let $M\in \Mod(\ren(\Sigma,V))$. We have that
  $(M\setminus V)\cup (V\setminus M)\models \ren(\ren(\Sigma,V),V)$, which implies
  $(M\setminus V)\cup (V\setminus M)\models \Sigma$, i.e. $M\in\Mod(\Sigma)$.
\end{proof}

The following corollary easily follows from the proposition above.

\begin{corollary}
  Let $\Sigma$ be a CNF theory.  Then
  $\Sigma$ is Horn renamable iff there exists $V\subseteq\cal A$ such that
  $\Mod(\Sigma)\div V$ is closed under intersection.
\end{corollary}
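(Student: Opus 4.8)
The plan is to reduce the statement to the two results already in hand: Proposition~\ref{prop:ren:Model}, which transports the symmetric-difference operation on model sets to the renaming operation on formulas, and Lemma~\ref{lem:Horn:expres:PI}, which characterizes Horn expressibility of a CNF theory by closure of its model set under intersection. The bridge between them is the elementary observation that, for a fixed $V\subseteq\cal A$, the map $\ren(\cdot,V)$ is an involution on CNF formulas ($\ren(\ren(\var,V),V)=\var$, by flipping each literal over $V$ twice) that sends CNF theories to CNF theories, and that by Proposition~\ref{prop:ren:Model} it preserves logical equivalence, since $\Mod(\ren(\var,V))=\Mod(\var)\div V$ and $(\Mod(\var)\div V)\div V=\Mod(\var)$.

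First I would fix $V$ and chase the equivalences at the model level. By Proposition~\ref{prop:ren:Model} we have $\Mod(\Sigma)\div V=\Mod(\ren(\Sigma,V))$, so $\Mod(\Sigma)\div V$ is closed under intersection if and only if $\Mod(\ren(\Sigma,V))$ is. Since $\Sigma$ is a CNF theory, so is $\ren(\Sigma,V)$, and Lemma~\ref{lem:Horn:expres:PI} then makes the latter condition equivalent to $\ren(\Sigma,V)$ being Horn expressible. Hence the right-hand side of the corollary — existence of some $V$ with $\Mod(\Sigma)\div V$ closed under intersection — is equivalent to the existence of some $V$ for which $\ren(\Sigma,V)$ is Horn expressible.

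It remains to identify this last condition with $\Sigma$ being ren-Horn. For the ($\Lto$) direction, from $\ren(\Sigma,V)$ Horn expressible I would pick a Horn formula $\chi\equiv\ren(\Sigma,V)$; the involution then gives $\ren(\chi,V)\equiv\Sigma$ with $\ren(\ren(\chi,V),V)=\chi$ Horn, so $\ren(\chi,V)$ is a ren-Horn formula equivalent to $\Sigma$, witnessing that $\Sigma$ is ren-Horn expressible. For the ($\Rto$) direction, a ren-Horn formula $\psi\equiv\Sigma$ comes with a set $W$ making $\ren(\psi,W)$ Horn, whence $\Mod(\Sigma)\div W=\Mod(\psi)\div W=\Mod(\ren(\psi,W))$ is closed under intersection; taking $V=W$ closes the loop.

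The step I expect to carry the subtlety is this last identification, because Horn \emph{expressibility} of $\ren(\Sigma,V)$ is a semantic property whereas a Horn renaming, read strictly, asks $\ren(\Sigma,V)$ to be a syntactic conjunction of Horn clauses — and these differ, as the theory $(p\lor q)\land(\neg p\lor\neg q)\land(p\lor\neg q)$ in the preceding example shows. The involution property of $\ren$ is exactly what repairs the gap: renaming a Horn witness $\chi$ back by the same set $V$ produces a genuine ren-Horn formula equivalent to $\Sigma$, so the corollary is to be read (and proved) at the level of ren-Horn expressibility. The remaining verifications — that $\ren$ preserves the CNF shape and commutes with logical equivalence — are routine consequences of Proposition~\ref{prop:ren:Model} and require no separate argument.
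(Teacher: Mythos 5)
Your proof is correct, and it is essentially the argument the paper has in mind: the paper offers no proof beyond ``easily follows from the proposition above,'' and the intended chain is exactly yours --- Proposition~\ref{prop:ren:Model} to carry $\Mod(\Sigma)\div V$ over to $\Mod(\ren(\Sigma,V))$, then Lemma~\ref{lem:Horn:expres:PI} to convert closure under intersection into Horn expressibility. The real contribution of your write-up is the subtlety you flag at the end, and if anything you understate it: under the paper's own syntactic definition of ``Horn renamable,'' the ($\Lto$) direction of the corollary is not merely delicate but false. Take $\Sigma=(p\lor q)\land(\neg p\lor\neg q)\land(p\lor\neg q)\land(\neg p\lor q)$. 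This theory is unsatisfiable, so $\Mod(\Sigma)\div V=\emptyset$ is vacuously closed under intersection for every $V$; yet renaming any of the four subsets of $\{p,q\}$ (atoms outside $\{p,q\}$ are irrelevant) leaves some clause with two positive literals, so $\Sigma$ admits no Horn renaming. A satisfiable counterexample is obtained by adding a fresh atom $s$ to every clause: the result is equivalent to $s$ but still has no Horn renaming. So the equivalence genuinely holds only at the level of ren-Horn \emph{expressibility} --- precisely the reading you adopt --- and your involution step (from a Horn $\chi\equiv\ren(\Sigma,V)$, pass to $\ren(\chi,V)$, which is Horn renamable and equivalent to $\Sigma$) is exactly the repair that the paper's one-line justification silently skips.

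One caution about your supporting citation: the paper's Example~1 is itself erroneous, since its theory $(p\lor q)\land(\neg p\lor\neg q)\land(p\lor\neg q)$ \emph{is} Horn renamable (rename $V=\{p\}$ and every clause becomes Horn), contrary to what the example asserts. You only invoke that example for the weaker, true claim that a Horn expressible theory need not be syntactically Horn, so your argument is unaffected; but do not lean on the example's non-renamability claim, which is the part that would have made the literal statement of the corollary and the example contradict each other.
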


\section{Forgetting}\label{sec:forget}
Starting with the basic notations and properties of forgetting, we will consider a general algorithm for
computing forgetting results of CNF theories, and computational complexity on various
reasoning problems relating
to forgetting.

Let $\Sigma$ be a propositional formula, we denote $\Sigma[p/\top]$ (resp. $\Sigma[p/\bot]$) the formula obtained from $\Sigma$ by substituting
all occurrences of $p$ with $\top$ ({\em true}) (resp. $\bot$ ({\em false})).  For
instance, if $\Sigma = \{p\supset q, (q \wedge r) \supset s\}$, then
$\Sigma[q/\top] \equiv \{r \supset s\}$ and $\Sigma[q/\bot] \equiv \{\neg
p\}$.

\subsection{Basic properties}

Let $M,N$ be two interpretations and $V\subseteq\cal A$.
$M$ and $N$ are $V$-\emph{bisimilar}, written $M\sim_VN$, if and only if $M\setminus V=N\setminus V$.
\begin{definition}[\cite{Fangzhen:forgetit}]
  \label{def:forget}
  Let $\var$ be a formula and $V\subseteq\cal A$. A formula $\psi$ is a \emph{result of forgetting $V$ from $\var$}
  iff, for every model $M$ of $\psi$, $\var$ has a model $M'$ such that $M\sim_VM'$.
\end{definition}

The syntactic counterpart of forgetting is a binary operator, written $\Forget(.,.)$, which
is defined recursively as:
\begin{align*}
  \Forget(\var,\emptyset)&= \var,\\
  \Forget(\var,\{p\})&= \var[p/\top]\vee \var[p/\bot],\\
  \Forget(\var,V\cup\{p\})&=\Forget(\Forget(\var,\{p\}),V)
\end{align*}
where $\var$ is a formula and $V\subseteq\cal A$.

Due to the fact that
if $\var'$ and $\psi'$ is a result of forgetting $V$ from $\var$ and $\psi$ respectively, then
$\var'\equiv\psi'$, by abusing the notation, we will denote $\Forget(\var,V)$
the result of forgetting $V$ from $\var$ when there is no ambiguity.

 The following proposition easily follows from the definition of forgetting,
 cf, Propositions~17 and 21 of \cite{Lang:JAIR:2003}.

\begin{proposition}\label{prop:forget:1}
  Let $\psi,\phi$ be two formulas and $V\subseteq\cal A$. Then we have
  \begin{enumerate}[(i)]
    \item $\forget(\psi\vee\phi,V)\equiv\forget(\psi,V)\vee\forget(\phi,V)$.
    \item $\forget(\psi\wedge\phi,V)\equiv\forget(\psi,V)\wedge\phi$ if $\IR(\phi,V)$.
  \end{enumerate}
\end{proposition}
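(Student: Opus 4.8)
The plan is to prove both parts directly from Definition~\ref{def:forget}, characterizing the models of each side via the $V$-bisimilarity relation $\sim_V$. The key observation I would establish first is a model-theoretic description of forgetting: a formula $\chi$ is a result of $\forget(\var,V)$ exactly when $\Mod(\chi)$ is the $\sim_V$-closure of $\Mod(\var)$, that is, $M\models\forget(\var,V)$ iff there exists $M'\models\var$ with $M\sim_V M'$. This follows because the minimal (up to equivalence) such $\chi$ has precisely the models reachable from $\Mod(\var)$ by changing truth values only on atoms in $V$, and all results of forgetting are equivalent by the remark preceding the proposition. Once this is in hand, both items reduce to set-theoretic manipulations of the $\sim_V$-closure operator.

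For part (i), I would show the two sides have the same models. Take any $M$ with $M\models\forget(\psi\lor\phi,V)$; by the characterization there is $M'\models\psi\lor\phi$ with $M\sim_V M'$, so $M'\models\psi$ or $M'\models\phi$, whence $M\models\forget(\psi,V)$ or $M\models\forget(\phi,V)$, giving $M\models\forget(\psi,V)\lor\forget(\phi,V)$. The converse direction is symmetric: a witness for either disjunct is a witness for the disjunction. This part is essentially immediate because the $\sim_V$-closure commutes with union of model sets.

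Part (ii) is where the irrelevance hypothesis $\IR(\phi,V)$ does the work, and I expect this to be the main obstacle. The point is that $\IR(\phi,V)$ means $\phi$ is equivalent to some $\phi'$ with $\Var(\phi')\cap V=\emptyset$, so satisfaction of $\phi$ is invariant under $\sim_V$: if $M\sim_V N$ then $M\models\phi$ iff $N\models\phi$ (one checks $M$ and $N$ agree off $V$, and $\phi'$ mentions no atom of $V$). Using this invariance, I would argue that $M\models\forget(\psi\land\phi,V)$ iff there is $M'\models\psi\land\phi$ with $M\sim_V M'$, iff there is $M'\models\psi$ with $M\sim_V M'$ and $M'\models\phi$; but by $\sim_V$-invariance $M'\models\phi$ is equivalent to $M\models\phi$, so this holds iff $M\models\phi$ and some $M'\models\psi$ satisfies $M\sim_V M'$, i.e.\ iff $M\models\forget(\psi,V)\land\phi$.

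The delicate step is justifying that one may replace $\phi$ by its $V$-free equivalent $\phi'$ inside the forgetting operator without changing the result, and then transporting the conclusion back to $\phi$; here I would invoke the equivalence-invariance of forgetting (the remark that all results of forgetting a fixed $V$ from equivalent formulas are equivalent) together with the fact that $\phi'\equiv\phi$ entails $\psi\land\phi'\equiv\psi\land\phi$. Care is also needed because $M'$ ranges only over models of $\var$ whose restriction off $V$ matches $M$, so the $\sim_V$-invariance of $\phi$ must be applied to precisely these witnesses; this is exactly what guarantees the conjunct $\phi$ can be ``factored out'' of the forgetting. With the invariance lemma stated cleanly, the remaining computation is routine.
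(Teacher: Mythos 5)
Your proof is correct, and it takes essentially the approach the paper intends: the paper gives no explicit argument for this proposition, remarking only that it ``easily follows from the definition of forgetting'' (citing Propositions~17 and 21 of Lang \emph{et al.}), and your argument is exactly the natural fleshing-out of that remark via the model-theoretic characterization $\Mod(\Forget(\var,V))=\Mod(\var)\Extension{V}$, which the paper itself establishes immediately afterwards. The only stylistic point is that your ``delicate step'' of substituting $\phi'$ for $\phi$ inside the forgetting operator is not actually needed: the $\sim_V$-invariance of satisfaction of $\phi$ already carries the whole argument for part (ii).
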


To establish a semantic characterization of forgetting, we introduce the notion of extension. Let
$M$ be an interpretation and $V\subseteq\cal A$. The \emph{extension} of $M$ over $V$, written $M\Extension{V}$, is the collection $\{X\subseteq{\cal A}|X\sim_VM\}$. The
\emph{extension} of a collection $\cal M$ of interpretations is $\bigcup_{M\in{\cal M}}M\Extension{V}$.
The following lemma establishes the semantic characterization of the syntactic forgetting, which
says that $\var$ is a result of forgetting $V$ from $\psi$ if and only if the models of $\var$ consist of
the $V$-extensions of models of $\psi$.

The following proposition is a variant of Corollary~1 of \cite{Lang:JAIR:2003}
and an extension of Corollary~5 of  \cite{Lang:JAIR:2003}.
\begin{proposition} Let $\var,\psi$ be two formulas and $X\subseteq\cal A$. Then
  $\var\equiv\Forget(\psi,V)$ if and only if $\Mod(\var)=\Mod(\psi)\Extension{V}$.
\end{proposition}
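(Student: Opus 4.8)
The plan is to reduce the biconditional to a single model-theoretic identity and then prove that identity by induction on $|V|$. Since $\var \equiv \Forget(\psi, V)$ holds precisely when $\Mod(\var) = \Mod(\Forget(\psi, V))$, and the asserted right-hand side $\Mod(\psi)\Extension{V}$ does not depend on $\var$, the full statement follows in both directions (for one of them taking $\var = \Forget(\psi, V)$) once we establish the key identity
\[
\Mod(\Forget(\psi, V)) = \Mod(\psi)\Extension{V}.
\]
So the entire proposition rests on this one equation.

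Before the induction I would record two auxiliary facts. First, a routine substitution lemma, proved by induction on formula structure: for any interpretation $X$ and atom $p$ we have $X \models \psi[p/\top]$ iff $(X \cup \{p\}) \models \psi$, and $X \models \psi[p/\bot]$ iff $(X \setminus \{p\}) \models \psi$, the point being that $\psi[p/\top]$ and $\psi[p/\bot]$ no longer mention $p$, so their truth at $X$ is insensitive to whether $p \in X$. Second, a composition law for extensions, $(\mathcal{M}\Extension{W})\Extension{U} = \mathcal{M}\Extension{W \cup U}$, which I would prove purely set-theoretically using the characterization that $X \sim_V M$ iff $X \setminus V = M \setminus V$.

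Now for the induction on $|V|$. The base case $V = \emptyset$ is immediate, since $\Forget(\psi, \emptyset) = \psi$ and $X \sim_{\emptyset} M$ forces $X = M$, so both sides equal $\Mod(\psi)$. For the singleton case $V = \{p\}$, I expand $\Forget(\psi, \{p\}) = \psi[p/\top] \vee \psi[p/\bot]$ and apply the substitution lemma: $X$ is a model iff $(X \cup \{p\}) \models \psi$ or $(X \setminus \{p\}) \models \psi$; but these two interpretations are exactly the candidates that can agree with $X$ off $p$, so this is equivalent to $X \in \Mod(\psi)\Extension{\{p\}}$. For the inductive step, writing $V = W \cup \{p\}$ with $p \notin W$ and $|W|$ one smaller, the recursive clause gives $\Forget(\psi, V) = \Forget(\Forget(\psi, \{p\}), W)$; applying the singleton case to the inner forget and then the induction hypothesis to the outer one yields $(\Mod(\psi)\Extension{\{p\}})\Extension{W}$, and the composition law collapses this to $\Mod(\psi)\Extension{\{p\} \cup W} = \Mod(\psi)\Extension{V}$.

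The one genuinely delicate step is the backward inclusion of the composition law, $(\mathcal{M}\Extension{W})\Extension{U} \supseteq \mathcal{M}\Extension{W \cup U}$. Given $M \in \mathcal{M}$ with $Y \setminus (W \cup U) = M \setminus (W \cup U)$, I must exhibit an intermediate interpretation $X$ that is simultaneously $W$-bisimilar to $M$ and $U$-bisimilar to $Y$. The construction is atom-by-atom: let $X$ agree with $Y$ on $W \setminus U$, with $M$ on $U \setminus W$ and on the complement of $W \cup U$ (where $M$ and $Y$ already coincide by hypothesis), and take either value on $W \cap U$. Checking $X \setminus W = M \setminus W$ and $X \setminus U = Y \setminus U$ against this case split is the crux; the forward inclusion and all remaining bookkeeping are routine manipulations of set differences. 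I expect this construction, together with handling the substitution lemma's $p \in X$ versus $p \notin X$ distinction exactly, to be where the real care is needed.
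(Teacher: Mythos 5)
Your proof is correct, but it takes a genuinely different route from the paper's. The paper proves this proposition by directly unwinding the semantic definition of forgetting (Definition~\ref{def:forget}): reading that definition as the model-level characterization that $M\models\Forget(\psi,V)$ iff $M\sim_V M'$ for some $M'\models\psi$, both implications follow in a few lines of bookkeeping, so for the paper the proposition is essentially a restatement of the definition in terms of the extension operator. You instead take the \emph{syntactic} recursive operator, $\Forget(\psi,\{p\})=\psi[p/\top]\vee\psi[p/\bot]$, as primary and prove the identity $\Mod(\Forget(\psi,V))=\Mod(\psi)\Extension{V}$ by induction on $|V|$, using a substitution lemma and the composition law $(\mathcal{M}\Extension{W})\Extension{U}=\mathcal{M}\Extension{W\cup U}$; your reduction of the biconditional to this single identity is valid, both auxiliary lemmas are true, and your atom-by-atom witness for the backward inclusion of the composition law is exactly what is needed, so the argument goes through. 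The extra work buys something the paper never actually proves: it shows that the syntactic operator is sound with respect to the semantic definition, i.e.\ that $\psi[p/\top]\vee\psi[p/\bot]$ really computes a result of forgetting and that the recursion is, at the level of models, independent of the order in which atoms are eliminated --- facts the paper simply asserts when it introduces the ``syntactic counterpart'' and the notational abuse $\Forget(\var,V)$. The one point you should make explicit is that, since the paper's $\Forget(\psi,V)$ officially denotes a result of forgetting in the sense of Definition~\ref{def:forget}, your identity needs the closing remark that it implies every model of the syntactic result has a $V$-bisimilar model of $\psi$ (and conversely every such interpretation is a model), so the syntactic result satisfies Definition~\ref{def:forget} and the two readings of $\Forget$ coincide; with that one sentence added, your proof fully establishes the stated proposition.
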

\begin{proof}
  $(\Rto)$  On the one hand, for every $M\in\Mod(\var)$, there exists $M'\in\Mod(\psi)$ such
  that $M\sim_VM'$ by Definition~\ref{def:forget}, i.e. $M\in\Mod(\psi)\Extension{V}$. On the other hand,
  if $M\in\Mod(\psi)\Extension{V}$ then there exists $M'\in\Mod(\psi)$ such that $M\sim_VM'$, which
  shows that $M\models\var$ by Definition~\ref{def:forget} again. Thus $\Mod(\var)=\Mod(\psi)\Extension{V}$.

  $(\Lto)$ Note that $\Mod(\var)=\Mod(\psi)\Extension{V}$ implies, for every $M\models\var$, there
  exists a mode $M'\models\psi$ such that $M\sim_VM'$. Thus $\var$ is a result of
  forgetting $V$ from $\psi$ by Definition~\ref{def:forget}, i.e. $\var\equiv\Forget(\psi,V)$.
\end{proof}

%

The following theorem shows that the forgetting is closely connected with prime implicates and implicants.
\begin{theorem}\label{thm:forget:1}
  Let $\Pi,\Sigma$ be two theories and $V$ a set of atoms. The following conditions are equivalent to each other.
  \begin{enumerate}[(i)]
    \item $\Sigma\equiv\Forget(\Pi,V)$.
    \item $\Sigma\equiv\{\psi|\Pi\models\psi\ \mbox{and}\ \IR(\psi,V)\}$.
    \item $\Sigma\equiv \bigvee\{t|t\in\IP(\Pi)\ \textrm{and } \Var(t)\cap V=\emptyset\}$.
    \item $\Sigma\equiv\{c|c\in\PI(\Pi)\ \textrm{and}\ \Var(c)\cap V=\emptyset\}$.
  \end{enumerate}
\end{theorem}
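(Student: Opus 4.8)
The plan is to prove the four conditions equivalent by exhibiting a single canonical formula and showing that (i)--(iv) each assert $\Sigma$ to be equivalent to it. Concretely, I would first establish $\Forget(\Pi,V)\equiv\bigwedge\{\psi\mid\Pi\models\psi \text{ and } \IR(\psi,V)\}$, then identify this conjunction with the $V$-free prime implicates and, dually, relate it to the $V$-free prime implicants. Throughout I would rely on the semantic characterization proved just above, namely $\Mod(\Forget(\Pi,V))=\Mod(\Pi)\Extension{V}$, together with the identities $\bigwedge\PI(\Pi)\equiv\bigvee\IP(\Pi)\equiv\Pi$ of Lemma~\ref{lem:prime:imp}.

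For the equivalence of (i) and (ii), I would first observe that $\Forget(\Pi,V)$ is itself a member of the set in (ii): it is a consequence of $\Pi$ (every $M\models\Pi$ satisfies $M\sim_VM$, hence $M\in\Mod(\Pi)\Extension{V}$), and it is irrelevant to $V$ because its model set $\Mod(\Pi)\Extension{V}$ is $V$-saturated. This gives $\bigwedge\{\psi\mid\Pi\models\psi,\ \IR(\psi,V)\}\models\Forget(\Pi,V)$. For the reverse entailment, take any $\psi$ in that set and a $V$-free $\psi'\equiv\psi$; for $M\models\Forget(\Pi,V)$ pick $M'\models\Pi$ with $M\sim_VM'$, so $M'\models\psi'$, and since $\psi'$ mentions no atom of $V$ while $M,M'$ agree off $V$, also $M\models\psi'\equiv\psi$. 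Hence $\Forget(\Pi,V)\equiv\bigwedge\{\psi\mid\dots\}$, so (i) and (ii) describe the same $\Sigma$.

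For the equivalence of (ii) and (iv), each $V$-free prime implicate of $\Pi$ is a $V$-irrelevant consequence, so the conjunction in (ii) entails the one in (iv). Conversely, given any $V$-irrelevant consequence $\psi$, I would fix a $V$-free CNF $\bigwedge_i d_i\equiv\psi$; each clause $d_i$ is a consequence of $\Pi$, and since every implicate contains a prime implicate, there is $c_i\in\PI(\Pi)$ with $c_i\subseteq d_i$ --- and $c_i$ is automatically $V$-free because $d_i$ is. Thus $\bigwedge\{c\in\PI(\Pi)\mid\Var(c)\cap V=\emptyset\}\models d_i$ for every $i$, hence entails $\psi$, closing the chain (i), (ii), (iv).

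For (iii) I would attempt the dual argument, using Lemma~\ref{lem:prime:imp}(1)--(2) to pass between prime implicants of $\Pi$ and prime implicates of $\neg\Pi$, together with $\bigvee\IP(\Pi)\equiv\Pi$. I expect this to be the main obstacle. The disjunction of the $V$-free prime implicants of $\Pi$ is the natural characterization of the \emph{conjunctive} elimination $\Pi[p/\top]\wedge\Pi[p/\bot]$ rather than of the disjunctive $\Forget(\Pi,V)=\Pi[p/\top]\vee\Pi[p/\bot]$, so the entailment $\Forget(\Pi,V)\models\bigvee\{t\in\IP(\Pi)\mid\Var(t)\cap V=\emptyset\}$ is the delicate direction: from $M\models\Forget(\Pi,V)$ one only recovers a model $M'\models\Pi$ agreeing with $M$ off $V$ and a prime implicant satisfied by $M'$, which may mention $V$ and need not be satisfied by $M$. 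I would therefore scrutinize this direction most carefully --- checking whether a $V$-free prime implicant satisfied by $M$ always exists --- since it is precisely here that any additional hypothesis on $\Pi$, or a reformulation in terms of the dual operator, would be required.
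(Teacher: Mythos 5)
Your handling of (i), (ii) and (iv) is correct, and in places more careful than the paper's own proof. For (i)$\LRto$(ii) you follow essentially the paper's route, but you make explicit the point its terse proof leaves implicit: that $\Forget(\Pi,V)$ itself belongs to the set $\{\psi\mid\Pi\models\psi\ \mbox{and}\ \IR(\psi,V)\}$, because its model set $\Mod(\Pi)\Extension{V}$ is closed under variation on $V$; this is precisely what justifies the paper's unexplained step that a model of that set ``can be modified to a model of $\Pi$''. For (ii)$\LRto$(iv) the paper gives no argument at all, citing Theorem~37 of Lakemeyer and Propositions~19--20 of Lang et al.; your subsumption argument (write a $V$-irrelevant consequence in $V$-free CNF, then place under each clause a prime implicate, which is automatically $V$-free) is a correct, self-contained replacement, at the small price of invoking the standard fact that every implicate contains a prime implicate.

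Your suspicion about (iii) is vindicated, and this is the substantive finding: item (iii) is \emph{false as stated}, so no blind proof attempt could have closed that case. Take $\Pi=\{p\lor q\}$ and $V=\{p\}$. Then $\Forget(\Pi,V)\equiv\top$, while $\IP(\Pi)=\{p,q\}$, so the disjunction in (iii) is just $q\not\equiv\top$. Your structural diagnosis is also exactly right: by Lemma~\ref{lem:prime:imp}(2) together with the (correct) item (iv) applied to $\neg\Pi$, one gets $\bigvee\{t\in\IP(\Pi)\mid\Var(t)\cap V=\emptyset\}\equiv\neg\Forget(\neg\Pi,V)$, which for $V=\{p\}$ is the conjunctive elimination $\Pi[p/\top]\land\Pi[p/\bot]$ --- always at least as strong as $\Forget(\Pi,V)$, and strictly stronger in general. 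The paper's proof of (i)$\LRto$(iii) breaks at its final step: Proposition~\ref{prop:forget:1}(i) correctly yields $\Forget(\Pi,V)\equiv\bigvee_{t\in\IP(\Pi)}\Forget(t,V)$, but for a term $t$ mentioning $V$, Proposition~\ref{prop:forget:1}(ii) makes $\Forget(t,V)$ the term $t$ with its $V$-literals \emph{deleted}, not $\bot$; the paper silently discards such terms instead of shortening them (in the example above, $\Forget(p,\{p\})\equiv\top$ is dropped). The repaired statement, which that chain actually proves, is $\Sigma\equiv\bigvee\{\Forget(t,V)\mid t\in\IP(\Pi)\}$.
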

\begin{proof}
 (i) $\LRto$ (ii). It is trivial if $\Pi\equiv\bot$. Suppose $\Pi$ is not falsity.
 Let $\Pi'=\{\psi|\Pi\models\psi\ \mbox{and}\ \IR(\psi,V)\}$. It is sufficient to prove $\Forget(\Pi,V)\equiv\Pi'$.
 On the one side, $M\models\Forget(\Pi,V)$ implies $\exists M'\models\Pi$ such that $M\sim_VM'$.
 It follows that $M'\models \Pi'$. On the other side, $M'\models\Pi'$ implies $M'$ can be modified
 to a model $M$ of $\Pi$ where $M\sim_VM'$. It shows that $M'\in\Mod(\Pi)\Extension{V}$.


 (i) $\LRto$ (iii). 
 $\forget(\Sigma,V)$\\
 $\equiv\forget(\bigvee \IP(\Sigma),V)$ as $\Sigma\equiv\bigvee\IP(\Sigma)$\\
 $\equiv \bigvee_{t\in\IP(\Sigma)}\forget(t,V)$ by (i) of Proposition~\ref{prop:forget:1}\\
 $\equiv \bigvee\{t|t\in\IP(\Pi)\ \textrm{and } \Var(t)\cap V=\emptyset\}$ by (ii) of Proposition~\ref{prop:forget:1}.

 (i) $\LRto$ (iv). It is proved by Theorem~37 of \cite{Lakemeyer:AIJ:1997},
  and can follows from Propositions~19 and 20 of \cite{Lang:JAIR:2003}.
 .
\end{proof}

Actually, (i)$\LRto$(iv) is mentioned as a fact in \cite{DBLP:Lin:AIJ:2001}, which states that $\forget(\Sigma,V)$ is
equivalent to the conjunction of prime implicates of $\Sigma$ that do not mention any propositions from $V$.
In terms of Corollary~\ref{cor:Horn:variant:PI} and the theorem above, we have the following corollary.
\begin{corollary}\label{cor:Horn:variant:forget}
  Let $\Sigma$ be a CNF theory and $V\subseteq \cal A$.
  If $\Sigma$ is a Horn (resp. Krom,  ren-Horn  and q-Horn) expressible
  then $\Forget(\Sigma,V)$ is a Horn (resp. Krom, ren-Horn  and q-Horn) expressible.
\end{corollary}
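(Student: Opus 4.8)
The plan is to combine the characterization of forgetting via prime implicates (Theorem~\ref{thm:forget:1}, part (iv)) with the closure of the Horn family and its variants under taking prime implicates (Corollary~\ref{cor:Horn:variant:PI}). The key observation is that forgetting never \emph{adds} clauses: by Theorem~\ref{thm:forget:1}(iv), $\Forget(\Sigma,V)$ is equivalent to the conjunction of exactly those prime implicates of $\Sigma$ that do not mention any atom from $V$. Thus, up to equivalence, the clause set defining $\Forget(\Sigma,V)$ is a \emph{subset} of $\PI(\Sigma)$.

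First I would fix a representative of $\Sigma$ in the relevant class. Suppose $\Sigma$ is Horn (resp.\ Krom, ren-Horn, q-Horn) expressible; since prime implicates are preserved under equivalence by Lemma~\ref{lem:prime:imp}(3), I may reason directly about $\PI(\Sigma)$. By Corollary~\ref{cor:Horn:variant:PI}, $\PI(\Sigma)$ is itself a Horn (resp.\ ren-Horn, q-Horn) theory; for the Krom case I would invoke the parenthetical remark following Theorem~\ref{thm:ren:q:horn:res}, namely that $|\res(c_1,c_2)|\le 2$ whenever $|c_1|,|c_2|\le 2$, so every prime implicate of a Krom theory is again a clause of size at most two, i.e.\ $\PI(\Sigma)$ is Krom.

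Next I would form $\Gamma=\{c \mid c\in\PI(\Sigma)\text{ and }\Var(c)\cap V=\emptyset\}$, which is a subset of $\PI(\Sigma)$. The heart of the argument is that each of these classes is closed under taking subtheories: deleting clauses from a Horn theory leaves a Horn theory, deleting clauses from a Krom theory leaves a Krom theory, and for the ren-Horn and q-Horn cases the \emph{same} renaming $V'$ (and, for q-Horn, the same partition $\{Q,H\}$) that witnesses membership for $\PI(\Sigma)$ witnesses it for the subtheory $\Gamma$, since conditions (i)--(iii) of Definition~\ref{def:QH-partition} are clause-local and a fixed renaming applied to a subset of clauses just yields the corresponding subset of renamed clauses. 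Hence $\Gamma$ lies in the desired class. Finally, Theorem~\ref{thm:forget:1}(iv) gives $\Forget(\Sigma,V)\equiv\Gamma$, so $\Forget(\Sigma,V)$ is Horn (resp.\ Krom, ren-Horn, q-Horn) expressible.

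The main obstacle is not any single deep step but the bookkeeping needed to handle the renaming- and partition-based classes uniformly: one must verify that the \emph{witness} (the set $V'$ for ren-Horn, and the partition $\{Q,H\}$ for q-Horn) transfers from $\PI(\Sigma)$ to the sub-selection $\Gamma$ rather than merely asserting closure abstractly. This is routine precisely because the defining conditions are stated per clause, so restricting to a subset of clauses cannot violate them; still, that per-clause locality is the load-bearing fact and should be stated explicitly. The Krom case deserves a separate sentence since it is justified by size preservation under resolution rather than by Corollary~\ref{cor:Horn:variant:PI}.
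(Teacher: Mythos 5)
Your proposal is correct and takes essentially the same route as the paper, whose proof is precisely the combination of Corollary~\ref{cor:Horn:variant:PI} with Theorem~\ref{thm:forget:1}(iv): since $\Forget(\Sigma,V)$ is equivalent to the subset of $\PI(\Sigma)$ whose clauses avoid $V$, and $\PI(\Sigma)$ stays in the respective class, so does any subset of it. The details you make explicit --- passing from ``expressible'' to $\PI(\Sigma)$ via Lemma~\ref{lem:prime:imp}(3), the clause-local subset closure for the renaming- and partition-based classes, and the Krom case via the fact that $|\res(c_1,c_2)|\le 2$ when $|c_1|,|c_2|\le 2$ (which Corollary~\ref{cor:Horn:variant:PI} itself omits from its statement) --- are exactly what the paper leaves implicit in its one-line justification.
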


\subsection{A resolution-based algorithm}%

Given a set $\Pi$ of clauses and an atom $p$, the {\em unfolding} of $\Pi$ w.r.t. $p$, written $\wunfold(\Pi,p)$,
is the set of clauses obtained from $\Pi$ by replacing every clause $c\in\Pi$ such that $p\in\Pos(c)$ with the clauses
$$\res(c,c_i)~(1\le i\le k)$$
where $c_1,\ldots,c_k$ are all the clauses of $\Pi$ such that $p\in\Neg(c_i)$ and, the two clauses
$c$ and $c_i$ are resolvable for every $i~(1\le i\le k)$. 
In particular, if $k=0$ then  $\wunfold(\Pi,p)$ is obtained from $\Pi$ by simply removing all the clauses that contain the positive literal $p$. %

The {\em strong unfolding} of $\Pi$ w.r.t. an atom $p$, denoted $\unfold(\Pi,p)$, is obtained from
$\wunfold(\Pi,p)$ by removing all clauses containing $\neg p$.

\begin{example}
  Let us consider the below two  CNF theories.
  \begin{align*}
     &\Pi=\{p\vee q\lor\neg a, \qquad p\lor\neg q, \qquad b\lor\neg p, \qquad c\lor\neg p\}.\\
     &\Sigma=\{p\lor\neg a, \qquad p\lor\neg q\lor\neg b,\qquad q\lor\neg p, \qquad c\lor\neg p\}.
  \end{align*}
  We have that
  \begin{align*}
  ¡¡&\unfold(\Pi,p)=\{b\vee q\lor\neg a, \qquad c\vee q\lor\neg a, \qquad b\lor\neg q, \qquad c\lor\neg q\},\\
    &\unfold(\Pi,q)=\{p\lor\neg a, \qquad b\lor\neg p, \qquad c\lor\neg p\},\\
    &\unfold(\unfold(\Pi,p),q)=\{b\lor\neg a,\qquad c\lor\neg a, \qquad b\vee c\lor\neg a\},\\
    &\unfold(\unfold(\Pi,q),p)=\{b\lor\neg a, \qquad c\lor\neg a\},\\
    &\unfold(\Sigma,p)=\{q\lor\neg a,\qquad c\lor\neg a,\qquad c\lor\neg q\wedge b\},\\
    &\unfold(\Sigma,q)=\{p\lor\neg a,\qquad c\lor\neg p\},\\
    &\unfold(\unfold(\Sigma,p),q)=\{c\lor\neg a,\qquad c\lor\neg a\lor\neg b\},\\
    &\unfold(\unfold(\Sigma,q),p)=\{c\lor\neg a\}.
  \end{align*}
Though $\unfold(\unfold(\Pi,p),q)\neq \unfold(\unfold(\Pi,q),p)$, we will see that the two
theories are equivalent, i.e., having same models. \Eed 
\end{example}

As demonstrated by Theorem \ref{thm:forget:1}, forgetting results always exist,
as every formula can be translated into an equivalent CNF theory.
The below proposition shows that forgetting in CNF theories can be achieved by unfolding.
\begin{theorem}\label{thm:forget:unfold}
  Let $\Pi$ be a CNF theory and $p\in\cal A$. Then $\Forget(\Pi,p)\equiv\unfold(\Pi,p)$.
\end{theorem}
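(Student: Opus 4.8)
The plan is to prove the equivalence semantically by comparing model sets. By the definition of the forgetting operator, $\Forget(\Pi,p)\equiv\Pi[p/\top]\vee\Pi[p/\bot]$, so it suffices to show that an interpretation satisfies $\unfold(\Pi,p)$ exactly when it satisfies $\Pi[p/\top]$ or $\Pi[p/\bot]$. First I would partition the clauses of $\Pi$ according to how $p$ occurs: let $\Pi^+$ collect the clauses $c$ with $p\in\Pos(c)$, let $\Pi^-$ collect those with $p\in\Neg(c)$, and let $\Pi^0$ collect those that do not mention $p$; since a legal clause cannot contain both $p$ and $\neg p$, these three classes are disjoint and cover $\Pi$. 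Because $p$ occurs in neither $\unfold(\Pi,p)$ nor $\Pi[p/\top]\vee\Pi[p/\bot]$, it is enough to test interpretations $M$ with $p\notin M$.

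Next I would record the explicit shape of the three formulas under this partition. Writing $c^+=c\setminus\{p\}$ for $c\in\Pi^+$ and $c_i^-=c_i\setminus\{\neg p\}$ for $c_i\in\Pi^-$, substitution yields $\Pi[p/\top]\equiv\Pi^0\cup\{c_i^-\mid c_i\in\Pi^-\}$ and $\Pi[p/\bot]\equiv\Pi^0\cup\{c^+\mid c\in\Pi^+\}$, while unwinding the definition of strong unfolding gives $\unfold(\Pi,p)=\Pi^0\cup R$, where $R=\{\res(c,c_i)=c^+\cup c_i^-\mid c\in\Pi^+,\ c_i\in\Pi^-,\ c^+\cup c_i^-\ \text{legal}\}$ (the $\Pi^-$ clauses surviving in $\wunfold(\Pi,p)$ are exactly the ones deleted in passing to $\unfold$). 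The soundness direction is then routine: if $M\models\Pi[p/\top]$ then $M\models\Pi^0$ and $M$ satisfies every $c_i^-$, so each resolvent $c^+\cup c_i^-\in R$ is satisfied through its $c_i^-$ part and hence $M\models\unfold(\Pi,p)$; the symmetric case $M\models\Pi[p/\bot]$ satisfies every resolvent through its $c^+$ part.

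The substantive direction is showing that $M\models\unfold(\Pi,p)$ forces $M\models\Pi[p/\top]$ or $M\models\Pi[p/\bot]$. I would argue by contradiction: if both fail then, since $M\models\Pi^0$, the failures must come from the new conjuncts, producing some $c\in\Pi^+$ with $M\not\models c^+$ and some $c_i\in\Pi^-$ with $M\not\models c_i^-$; thus $M$ falsifies every literal of $c^+$ and of $c_i^-$. The natural move is to invoke $M\models R$ on the resolvent $c^+\cup c_i^-$, and the hard part is precisely that $c^+\cup c_i^-$ need not be legal, so it need not lie in $R$. The key observation that dissolves this obstacle is that the non-legal case cannot actually arise here: if $c^+\cup c_i^-$ contained a complementary pair $l,\neg l$, then since $c$ and $c_i$ are legal this pair would have to straddle the two sets, say $l\in c^+$ and $\neg l\in c_i^-$, forcing $M$ to falsify both $l$ and $\neg l$ at once, which is impossible. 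Hence $c^+\cup c_i^-$ is legal, so $c^+\cup c_i^-=\res(c,c_i)\in R\subseteq\unfold(\Pi,p)$, whence $M$ satisfies some literal of $c^+\cup c_i^-$, contradicting that $M$ falsifies all of them. The degenerate cases $\Pi^+=\emptyset$ or $\Pi^-=\emptyset$ (where $R=\emptyset$ and $\unfold(\Pi,p)=\Pi^0$) fall out of the same bookkeeping, completing the equivalence.
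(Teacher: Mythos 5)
Your proof is correct and follows essentially the same route as the paper's: both reduce to the clauses mentioning $p$, handle the easy direction by noting each resolvent is satisfied through one parent's residue, and prove the converse by contradiction on a model $M$ of $\unfold(\Pi,p)$ whose two $p$-variants both falsify $\Pi$, using the resolvent of the falsified clauses from $\Pi^+$ and $\Pi^-$. The only organizational difference is that where the paper splits into resolvable/non-resolvable cases, you show the non-resolvable case cannot arise at all — but this rests on the same straddling-literal observation as the paper's case (1), so it is a repackaging rather than a different argument.
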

\begin{proof}
  Without loss of generality, we assume that $\Pi$ contains no tautology.
  Note that if the clause $c: A\cup\neg B$ in $\Pi$ satisfies $p\notin A\cup B$ then $c\in\unfold(\Pi,p)$ and
  $\forget(\Pi,p)\models c$ by (ii) of Proposition \ref{prop:forget:1}. Thus we can  assume
  $p\in A\cup B$ for every clause $A\cup\neg B$ of $\Pi$.

  Let $c_i~(1\le i\le n)$
  be all the clauses of $\Pi$ such that $p\in c_i$, and $c_j'~(1\le j\le m)$
  be all the clauses of $\Pi$ such that $\neg p\in c_j'$.

  The direction from left to right is clear by (ii) of Theorem~\ref{thm:forget:1},
  i.e., $\forget(\Pi,p)\models\unfold(\Pi,p)$, since
  $\Pi\models \res(c_i,c_j')$
  for every $i,j~(1\le i\le n,1\le j\le m)$ whenever $c_i,c_j'$ are resolvable.

  To prove the other direction, it is sufficient to show that for every model $M$
  of $\unfold(\Pi,P)$, there exists a model $M'$ of $\Pi$ such that $M'\sim_p M$. We prove this by contradiction.
  Without loss of generality, let $M\models\unfold(\Pi,p)$, $p\notin M$, $M'=M\cup\{p\}$, $M\not\models\Pi$ and
  $M'\not\models\Pi$. It follows that $M\not\models c_i$ for some $i~(1\le i\le n)$ and
  $M'\not\models c_j'$ for some $j~(1\le j\le m)$.
  Let us consider the following two cases:

  (1) $c_i$ and $c_j'$ are not resolvable. It shows that there is an atom $q$ different from $p$ such that
  $q,\neg q\in c_i\cup c_j'$. Recall that $c_i,c_j'$ are not tautology.
  In the case $q\in M$  we have that $q\in c_j'$ and $\neg q\in c_i$ as $M\not\models c_i$.
  It shows that $M\models c_j'$, thus $M'\models c_j'$, a contradiction.
  In the case $q\notin M$ we have that $q\in c_i$ and $\neg q\in c_j'$ as $M\not\models c_i$.
  It follows that $M\models c_j'$, thus $M'\models c_j'$,  a contradiction.

  (2) $c_i$ and $c_j'$ are resolvable. It shows that the resolvent $\res(c_i,c_j')=(c_i\setminus \{p\})\cup(c_j'\setminus \{\neg p\})$
  belongs to $\unfold(\Pi,p)$. Note that $M\not\models c_i$ implies $M\not\models c_i\setminus \{p\}$.
  It follows that $M\models c_j'\setminus \{\neg p\}$ since $M\models \res(c_i,c_j')$,
  thus $M'\models c_j'\setminus \{\neg p\}$ and $M'\models c_j'$ by $c_j'\setminus \{\neg p\}\models c_j'$, a contradiction.
\end{proof}

\begin{proposition}\label{prop:sunfold:order:irrelevance}
  Let $\Pi$ be a CNF theory, $p,q$ two atoms. Then  we have that
  $$\unfold(\unfold(\Pi,p),q)\equiv\unfold(\unfold(\Pi,q),p).$$
\end{proposition}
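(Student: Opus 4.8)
The plan is to lift the identity to the level of the forgetting operator and then settle it purely semantically, exploiting the fact that the two sides are just two orders in which to forget the pair $\{p,q\}$.

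First I would reduce the statement to an analogous one about $\Forget$. By Theorem~\ref{thm:forget:unfold} we have $\unfold(\Pi,p)\equiv\Forget(\Pi,p)$, and applying the same theorem to the CNF theory $\unfold(\Pi,p)$ gives $\unfold(\unfold(\Pi,p),q)\equiv\Forget(\unfold(\Pi,p),q)$. I also need that $\Forget$ respects equivalence of its first argument: if $\alpha\equiv\beta$ then $\Mod(\alpha)=\Mod(\beta)$, so by the semantic characterization $\Mod(\Forget(\alpha,V))=\Mod(\alpha)\Extension{V}=\Mod(\beta)\Extension{V}=\Mod(\Forget(\beta,V))$, whence $\Forget(\alpha,V)\equiv\Forget(\beta,V)$. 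Combining these with $\unfold(\Pi,p)\equiv\Forget(\Pi,p)$ yields $\unfold(\unfold(\Pi,p),q)\equiv\Forget(\Forget(\Pi,p),q)$, and symmetrically $\unfold(\unfold(\Pi,q),p)\equiv\Forget(\Forget(\Pi,q),p)$. It therefore suffices to prove $\Forget(\Forget(\Pi,p),q)\equiv\Forget(\Forget(\Pi,q),p)$.

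Next I would pass to models via the characterization $\Mod(\Forget(\psi,V))=\Mod(\psi)\Extension{V}$. Writing $\mathcal{M}=\Mod(\Pi)$, this gives $\Mod(\Forget(\Forget(\Pi,p),q))=(\mathcal{M}\Extension{\{p\}})\Extension{\{q\}}$ and $\Mod(\Forget(\Forget(\Pi,q),p))=(\mathcal{M}\Extension{\{q\}})\Extension{\{p\}}$, so the whole proposition reduces to the set identity
$$(\mathcal{M}\Extension{\{p\}})\Extension{\{q\}}=(\mathcal{M}\Extension{\{q\}})\Extension{\{p\}}.$$
I would prove this by showing each side equals the manifestly symmetric set $\mathcal{M}\Extension{\{p,q\}}$. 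Unfolding the definitions, $M\in(\mathcal{M}\Extension{\{p\}})\Extension{\{q\}}$ means there are $N$ and $L\in\mathcal{M}$ with $M\sim_{\{q\}}N$ and $N\sim_{\{p\}}L$; since $\sim_V$ records only disagreement on atoms of $V$, chaining $M\setminus\{p,q\}=N\setminus\{p,q\}=L\setminus\{p,q\}$ gives $M\sim_{\{p,q\}}L$, i.e. $M\in\mathcal{M}\Extension{\{p,q\}}$. Conversely, given $L\in\mathcal{M}$ with $M\sim_{\{p,q\}}L$, I would exhibit the witness $N$ that agrees with $L$ on $q$ and with $M$ on every other atom, so that $M\sim_{\{q\}}N$ and $N\sim_{\{p\}}L$. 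The identical argument with $p$ and $q$ exchanged shows $(\mathcal{M}\Extension{\{q\}})\Extension{\{p\}}=\mathcal{M}\Extension{\{p,q\}}$ as well, which closes the proof.

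The only genuine content is this composition law for the extension operator; everything else is bookkeeping through Theorem~\ref{thm:forget:unfold} and the model-level description of $\Forget$, so the verification of the witness and the chaining of the symmetric differences is where I would be most careful. One point worth flagging is that one cannot shortcut the argument by appealing to the recursive clause $\Forget(\var,V\cup\{p\})=\Forget(\Forget(\var,\{p\}),V)$ to declare both sides equal to $\Forget(\Pi,\{p,q\})$: that clause is only well-defined once order-independence is established, so this proposition is precisely what licenses reading $\Forget(\cdot,V)$ as a function of the \emph{set} $V$.
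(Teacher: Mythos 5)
Your proof is correct, and its top-level skeleton --- convert $\unfold$ to $\Forget$ via Theorem~\ref{thm:forget:unfold}, then commute the two single-atom forgettings --- matches the paper's. The genuine difference is in how the commutation of $\Forget$ is justified. The paper's proof is the chain $\unfold(\unfold(\Pi,p),q)\equiv\forget(\forget(\Pi,p),q)\equiv\forget(\Pi,\{p,q\})\equiv\forget(\forget(\Pi,q),p)\equiv\unfold(\unfold(\Pi,q),p)$: it passes through $\forget(\Pi,\{p,q\})$ by invoking the recursive clause $\Forget(\var,V\cup\{p\})=\Forget(\Forget(\var,\{p\}),V)$ once with $p$ peeled off first and once with $q$ peeled off first. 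That is exactly the shortcut you flag as illegitimate at the end of your proposal, and your objection applies to the paper itself: the recursive clause defines set-forgetting only relative to a peeling order, so reading it in both orders presupposes the order-independence that is, up to relabelling, the statement being proved. Your composition law $(\Mod(\Pi)\Extension{\{p\}})\Extension{\{q\}}=\Mod(\Pi)\Extension{\{p,q\}}$, proved by chaining the symmetric differences and exhibiting the witness $N$ that agrees with $L$ on $q$ and with $M$ elsewhere, supplies precisely the missing justification (an alternative, purely syntactic route would note that substitutions $[p/\top],[p/\bot],[q/\top],[q/\bot]$ at distinct atoms commute). You also make explicit a step the paper uses silently: that $\Forget$ respects equivalence in its first argument, which is needed because $\unfold(\Pi,p)$ is only equivalent to, not identical to, $\Forget(\Pi,p)$, and because $\forget(\Pi,p)$ is not itself a CNF theory to which $\unfold(\cdot,q)$ could be applied. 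In short, your argument buys self-containedness and rigor at the cost of length; the paper's buys brevity by leaning on a standard but here-unproven fact about forgetting.
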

\begin{proof}
  By Theorem \ref{thm:forget:unfold}, we have that\\
  $\unfold(\unfold(\Pi,p),q)$\\
  $\equiv\unfold(\forget(\Pi,p),q)$\\
  $\equiv\forget(\forget(\Pi,p),q)$\\
  $\equiv\forget(\Pi,\{p,q\})$\\
  $\equiv\forget(\forget(\Pi,q),p)$\\
  $\equiv\forget(\unfold(\Pi,q),p)$\\
  $\equiv\forget(\forget(\Pi,q),p)$\\
  $\equiv\unfold(\unfold(\Pi,p),q)$.
\end{proof}

In terms of the above proposition, the  unfolding is independent of the ordering of atoms to be strongly unfolded.
We define  unfolding a set of atoms as following,
\begin{align*}
  &\unfold(\Pi,\emptyset)=\Pi,\\
  & \unfold(\Pi,V\cup\{p\})=\unfold(\unfold(\Pi,p),V)
\end{align*}
where $\Pi$ is a CNF theory and $V\subseteq\cal A$.

It follows that, by Theorem~\ref{thm:forget:unfold} and Proposition~\ref{prop:sunfold:order:irrelevance},
\begin{corollary}\label{cor:forget:unfold:eq}
  Let $\Pi$ be a CNF theory and $V\subseteq\cal A$.
  $\forget(\Pi,V)\equiv\unfold(\Pi,V)$.
\end{corollary}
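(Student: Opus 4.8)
The plan is to prove the corollary by induction on the cardinality of $V$, leveraging the single-atom case established in Theorem~\ref{thm:forget:unfold} together with the recursive definitions of both $\Forget(\cdot,V)$ and $\unfold(\cdot,V)$ on sets of atoms. For the base case $V=\emptyset$, both operators return $\Pi$ unchanged by their defining equations, so the two sides coincide and are in particular equivalent.

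For the inductive step I would write $V=\{p\}\cup V'$ with $p\notin V'$ and begin by expanding $\unfold(\Pi,V)=\unfold(\unfold(\Pi,p),V')$ via the definition of unfolding a set. Two observations make the induction hypothesis applicable to the inner expression: the strong unfolding of a set of clauses is again a set of clauses, since $\unfold$ only adds resolvents (which are clauses) and deletes clauses, so $\unfold(\Pi,p)$ is a CNF theory; and $|V'|<|V|$. The induction hypothesis then yields $\unfold(\unfold(\Pi,p),V')\equiv\Forget(\unfold(\Pi,p),V')$.

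Next I would replace $\unfold(\Pi,p)$ by $\Forget(\Pi,p)$ inside the outer forgetting. This is legitimate because forgetting depends only on the models of its first argument, as recorded by the semantic characterization $\Mod(\Forget(\psi,W))=\Mod(\psi)\Extension{W}$; hence $\Forget(\cdot,V')$ respects logical equivalence in its first argument, and $\unfold(\Pi,p)\equiv\Forget(\Pi,p)$ by Theorem~\ref{thm:forget:unfold}. Therefore $\Forget(\unfold(\Pi,p),V')\equiv\Forget(\Forget(\Pi,p),V')$, and the right-hand side equals $\Forget(\Pi,\{p\}\cup V')=\Forget(\Pi,V)$ by the recursive definition of $\Forget$ on sets. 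Chaining these equivalences closes the induction.

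The argument is essentially the same chain of rewrites used in the proof of Proposition~\ref{prop:sunfold:order:irrelevance}, now organized as an induction. The only points requiring care, and the closest thing to an obstacle, are the two structural facts feeding the inductive step: that $\unfold(\Pi,p)$ remains in CNF so the hypothesis applies, and that forgetting is well-defined up to equivalence in its first argument so the single-atom equivalence may be substituted under the outer $\Forget$. Both are immediate from the definitions and the semantic characterization already established, so I expect no genuinely difficult step to remain.
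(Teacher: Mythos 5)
Your proof is correct and takes essentially the same route as the paper, which obtains the corollary by iterating the single-atom result of Theorem~\ref{thm:forget:unfold} over the atoms of $V$ --- the induction you spell out (peel off one atom, apply the induction hypothesis to the CNF theory $\unfold(\Pi,p)$, then substitute $\Forget(\Pi,p)$ for $\unfold(\Pi,p)$ under the outer forgetting via the semantic characterization) is exactly the argument the paper leaves implicit in ``It follows, by Theorem~\ref{thm:forget:unfold} and Proposition~\ref{prop:sunfold:order:irrelevance}.'' If anything, your version is marginally more self-contained: since your inductive step works for an arbitrary choice of $p$ in the decomposition $V=\{p\}\cup V'$, it re-derives the order-irrelevance of Proposition~\ref{prop:sunfold:order:irrelevance} as a byproduct instead of requiring it as a premise.
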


In terms of Corollaries~\ref{cor:Horn:variant:PI} and \ref{cor:forget:unfold:eq}, we have%
\begin{corollary}\label{cor:Horn:variant:unfold}
  Let $\Sigma$ be a CNF theory and $V\subseteq \cal A$.
  If $\Sigma$ is a Horn (resp. Krom,  ren-Horn  and q-Horn) theory
  then $\unfold(\Sigma,V)$ is a Horn (resp. Krom, ren-Horn  and q-Horn) theory.
\end{corollary}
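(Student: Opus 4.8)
The plan is to prove the \emph{syntactic} statement directly by showing that every clause produced by strong unfolding is an iterated resolvent of clauses of $\Sigma$ on atoms of $V$, and then invoking the resolution-stability results already established. Concretely, the key structural observation I would isolate first is
\[
\unfold(\Sigma,V)\subseteq\bigcup_{n\ge 0}\res^n_\Sigma,
\]
i.e.\ each clause of $\unfold(\Sigma,V)$ is either a clause of $\Sigma$ or a (possibly iterated) resolvent of clauses of $\Sigma$. To see this I would unwind the one-step definition of $\unfold(\Pi,p)$: the clauses that remain are exactly those of $\Pi$ not mentioning $p$ (so they lie in $\res^0_\Pi$), while the clauses introduced are the resolvents $\res(c,c_i)$ on $p$ (so they lie in $\res^1_\Pi$); hence every clause of $\unfold(\Pi,p)$ lies in $\bigcup_n\res^n_\Pi$. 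Iterating over the atoms of $V$ (induction on $|V|$ using $\unfold(\Sigma,V\cup\{p\})=\unfold(\unfold(\Sigma,p),V)$), together with the elementary fact that $\bigcup_n\res^n_\Sigma$ is closed under resolution, yields the displayed inclusion.

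With this inclusion in hand, each of the four cases reduces to a resolution-stability property of $\bigcup_n\res^n_\Sigma$. For the Horn case I would use that the resolvent of two Horn clauses is again Horn (the base case $V=\emptyset$ of Theorem~\ref{thm:ren:q:horn:res}(i)), so every clause in $\bigcup_n\res^n_\Sigma$, and in particular every clause of $\unfold(\Sigma,V)$, is Horn. For the Krom case I would use the stated fact that $|\res(c_1,c_2)|\le 2$ whenever $|c_1|,|c_2|\le 2$, so by induction every clause of $\res^n_\Sigma$ has at most two literals. For ren-Horn and q-Horn, fix a renaming $W\subseteq\cal A$ witnessing the property for $\Sigma$; by Proposition~\ref{prop:ren:q:Horn:res}(i) renaming commutes with resolution, so $\ren(\res^n_\Sigma,W)=\res^n_{\ren(\Sigma,W)}$, and Theorem~\ref{thm:ren:q:horn:res} then guarantees that $\ren(\res^n_\Sigma,W)$ is Horn (ren-Horn case) or that a fixed QH-partition $\{Q,H\}$ satisfies the conditions of Definition~\ref{def:QH-partition} on every clause of $\res^n_\Sigma$ (q-Horn case). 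Applying the inclusion to $\unfold(\Sigma,V)$ shows that $\ren(\unfold(\Sigma,V),W)$ is Horn, resp.\ admits the QH-partition $\{Q,H\}$, so $\unfold(\Sigma,V)$ is ren-Horn, resp.\ q-Horn.

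The main obstacle I anticipate is the bookkeeping in the structural inclusion rather than any individual case: one must check that nesting $\unfold$ over several atoms never produces clauses outside the resolution closure of the \emph{original} $\Sigma$, which I would handle by the elementary lemma that if every clause of $\Pi$ lies in $\bigcup_m\res^m_\Sigma$ then $\bigcup_n\res^n_\Pi\subseteq\bigcup_k\res^k_\Sigma$. I note a shorter route that yields only the weaker \emph{expressibility} conclusion: by Corollary~\ref{cor:forget:unfold:eq} we have $\unfold(\Sigma,V)\equiv\Forget(\Sigma,V)$, and by Theorem~\ref{thm:forget:1}(iv) this is equivalent to the subtheory $\{c\in\PI(\Sigma)\mid\Var(c)\cap V=\emptyset\}$ of $\PI(\Sigma)$; since subtheories inherit the Horn/Krom/ren-Horn/q-Horn property, Corollary~\ref{cor:Horn:variant:PI} (or Corollary~\ref{cor:Horn:variant:forget}) gives Horn/Krom/ren-Horn/q-Horn expressibility of $\unfold(\Sigma,V)$ at once. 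The resolution-based argument above is preferable precisely because it delivers the stronger claim that $\unfold(\Sigma,V)$ is \emph{syntactically} of the required form.
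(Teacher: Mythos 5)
Your proposal is correct, and your main argument takes a genuinely different route from the paper's. The paper proves this corollary in a single line, by exactly the ``shorter route'' you relegate to a closing remark: it combines Corollary~\ref{cor:forget:unfold:eq} ($\unfold(\Sigma,V)\equiv\Forget(\Sigma,V)$) with Corollary~\ref{cor:Horn:variant:PI} (preservation of the class under passage to $\PI(\Sigma)$), via Theorem~\ref{thm:forget:1}(iv). As you correctly observe, that chain literally delivers only that $\unfold(\Sigma,V)$ is Horn (resp.\ Krom, ren-Horn, q-Horn) \emph{expressible}, which is weaker than the statement's ``is a Horn theory'' --- a distinction the paper itself maintains elsewhere (compare Corollary~\ref{cor:Horn:variant:forget}, which is phrased in terms of expressibility). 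Your primary argument --- the inclusion $\unfold(\Sigma,V)\subseteq\bigcup_{n\ge 0}\res^n_\Sigma$, obtained by unwinding the one-step definition of strong unfolding together with the closure lemma for iterated unfolding, followed by the resolution-stability facts (Horn resolvents of Horn clauses, $|\res(c_1,c_2)|\le 2$ for two-literal clauses, and Proposition~\ref{prop:ren:q:Horn:res} with Theorem~\ref{thm:ren:q:horn:res} for ren-Horn and q-Horn) --- proves the syntactic claim as worded. Both routes ultimately rest on the same machinery, since Corollary~\ref{cor:Horn:variant:PI} is itself derived from Theorem~\ref{thm:ren:q:horn:res}; the difference is that you apply that machinery directly to the clauses produced by unfolding rather than to the prime implicates of $\Sigma$. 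What your route buys is precisely what the paper's subsequent discussion needs: that strong unfolding never leaves the fragment, so that forgetting in Horn (resp.\ Krom, ren-Horn, q-Horn) theories ``can be achieved by strong unfolding'' while staying in CNF of the required syntactic form. What the paper's route buys is brevity, at the cost of under-delivering on the syntactic claim. In that sense your proof is the more faithful one for the statement as it stands.
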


%
%

The strong unfolding provides alternative approach of evaluating forgetting. In particular, strong unfolding
results of CNF theories are in CNF as well. If $\Pi$ is a Horn theory then $\forget(\Pi,V)$ is also Horn which
can be achieved by strong unfolding. It distinguishes from the syntactic approach
$\forget(\Pi,p)=\Pi[p/\bot]\vee\Pi[p/\top]$, which is not in CNF,
though it can be transformed into CNF (with possibly much more expense).

Based on the notion of strong unfolding,
we present the algorithm for computing forgetting results of CNF theories in
Algorithm \ref{algo:forget:unfold}.
The following proposition asserts the correctness.
\begin{proposition}
  \label{prop:algo:correct}
  Let $\Pi,V,\Sigma$ be as in Algorithm~\ref{algo:forget:unfold}. Then $\Sigma\equiv\Forget(\Pi,V)$.
\end{proposition}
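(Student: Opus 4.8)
The plan is to reduce the correctness of Algorithm~\ref{algo:forget:unfold} to the already-established identity $\Forget(\Pi,V)\equiv\unfold(\Pi,V)$ of Corollary~\ref{cor:forget:unfold:eq}. The algorithm maintains a working CNF theory, initializes it to $\Pi$, and repeatedly replaces it by its strong unfolding with respect to the atoms of $V$ taken one at a time (possibly interleaved with equivalence-preserving cleanup such as deleting tautologous or subsumed clauses). At the level of semantics the output should therefore coincide with the nested strong unfolding $\unfold(\Pi,V)$, and the whole task is to make this informal matching precise and then invoke the corollary.

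First I would fix the enumeration $p_1,\dots,p_k$ of $V$ in the order in which the loop processes the atoms, and write $T_0=\Pi$ and $T_{i+1}$ for the working theory after the $(i{+}1)$-st iteration. I would then prove by induction on $i$ that $T_i\equiv\unfold(\Pi,\{p_1,\dots,p_i\})$. The base case $i=0$ is immediate. For the step, $T_{i+1}$ is obtained from $T_i$ by strong unfolding with respect to $p_{i+1}$ (plus cleanup), so I need the fact that strong unfolding respects logical equivalence in its first argument: if $T\equiv T'$ then $\unfold(T,p)\equiv\unfold(T',p)$. This is not syntactically obvious, since $\unfold$ is a purely syntactic operation; but it follows from Theorem~\ref{thm:forget:unfold} together with the fact that $\Forget(\cdot,p)$ is well-defined up to equivalence of its argument, Definition~\ref{def:forget} being semantic: indeed $\unfold(T,p)\equiv\Forget(T,p)\equiv\Forget(T',p)\equiv\unfold(T',p)$. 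Applying this with $T=T_i$ and $T'=\unfold(\Pi,\{p_1,\dots,p_i\})$ and using the recursive definition of iterated unfolding closes the induction, so that $T_k\equiv\unfold(\Pi,\{p_1,\dots,p_k\})=\unfold(\Pi,V)$.

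The output is $\Sigma=T_k$, and by Corollary~\ref{cor:forget:unfold:eq} we get $\Sigma\equiv\unfold(\Pi,V)\equiv\Forget(\Pi,V)$, as required. Proposition~\ref{prop:sunfold:order:irrelevance} guarantees that the particular processing order chosen by the loop is immaterial, so the conclusion does not depend on how the atoms of $V$ are enumerated.

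I expect the main obstacle to be the equivalence-propagation step, namely justifying that replacing the working theory by an equivalent one does not change the semantics of a subsequent strong unfolding, and, relatedly, discharging the standing assumption in Theorem~\ref{thm:forget:unfold} that the theory being unfolded contains no tautology. The clean way around both is to route every appeal to strong unfolding through its semantic counterpart $\Forget$ rather than reasoning about clause sets directly; any tautology deletion or subsumption performed by the algorithm is then harmless precisely because it preserves $\Mod(\cdot)$ and hence commutes with $\Forget$.
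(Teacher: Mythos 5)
Your proof is correct and is essentially the paper's own argument made explicit: the paper's one-line proof likewise reduces each loop iteration to Theorem~\ref{thm:forget:unfold} (the loop body computes exactly $\unfold(\cdot,p)$, hence $\Forget(\cdot,p)$ up to equivalence) and then iterates over $V$, your induction with the equivalence-propagation step being precisely what the paper leaves implicit. The only detail to patch is that Algorithm~\ref{algo:forget:unfold} does not initialize the working theory to $\Pi$ but to $\Pi\setminus S$ and returns $T_k\cup S$, where $S$ is the set of clauses of $\Pi$ mentioning no atom of $V$; this set-aside is harmless because such clauses never participate in any resolution on an atom of $V$ (equivalently, by (ii) of Proposition~\ref{prop:forget:1}, conjoining a $V$-irrelevant formula commutes with forgetting), but your invariant should formally be stated for $\Pi\setminus S$ and the final equivalence applied to $T_k\cup S$.
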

\begin{proof}
  It follows from that the lines 3-9 of Algorithm~\ref{algo:forget:unfold} compute $\Forget(\Pi,p)$.
\end{proof}

  \begin{algorithm}[t]
  \LinesNumbered
  \SetKwInOut{Input}{input}\SetKwInOut{Output}{output}
  \Input{A set $\Pi$ of clauses and a set $V$ of atoms}
  \Output{The result of forgetting $V$ in $\Pi$ }
  \Begin{
    $S\lto\{c|c\in\Pi\textrm{ and }V\cap\Var(c)=\emptyset\}$\;
    $\Pi\lto\Pi\setminus S$\;
    \ForEach{$(p\in V)$}{
    $\Pi'\lto \{c|c\in\Pi\textrm{ and }p\in\Var(c)\}$\;
    $\Sigma\lto \Pi\setminus\Pi'$\;
       \ForEach{$(c\in\Pi'$ s.t $p\in \Pos(c))$}{
         \ForEach{$(c'\in\Pi'$ s.t $p\in\Neg(c')$ and $c,c'$ are resolvable$)$}{
           {
           $\Sigma\lto \Sigma\cup \res(c,c')$\;}
         }
       }
    $\Pi\lto\Sigma$\;
    }
    \Return{$\Sigma\cup S$}
    }
    \caption{An Algorithm for Forget($\Pi,V$)}
    \label{algo:forget:unfold}
  \end{algorithm}

The algorithm remains the potentiality of heuristics. For example, one can  forget
the atoms one by one in a specific order, and similarly choose two specific clauses to do resolution sequentially.
In addition, to save space, one can add the condition $\Sigma\not\models\res(c,c')$
at line 7 of the algorithm.  While checking the condition is intractable generally,
however, it is tractable for some special CNF theories, including Horn, ren-Horn, q-Horn and Krom ones.

Before end of the section, we formally analyze the computational costs.
\begin{proposition}
  Let $\Pi$ be a CNF theory and $V\subseteq\cal A$ where $|\Pi|=n$ and $|V|=k$. The
  time and space complexity of Algorithm~\ref{algo:forget:unfold} are $O(n^{2^k})$.
\end{proposition}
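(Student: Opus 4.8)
The plan is to bound the number of clauses the algorithm keeps after each pass of the outer \textbf{foreach} loop over $p\in V$, since both the running time and the space are governed by this quantity. The crux is a single-step \emph{squaring bound}: if the current CNF theory has $m$ clauses, then strongly unfolding one atom $p$ produces at most $m^2$ clauses. To see this, partition the clauses mentioning $p$ into the $a$ clauses with $p\in\Pos(\cdot)$ and the $b$ clauses with $p\in\Neg(\cdot)$ (no clause contains both, as tautologies are excluded). The inner resolution loops emit at most $a\cdot b$ resolvents, and since $a+b\le m$ we have $a\cdot b\le m^2/4$. Adding the at most $m-(a+b)$ clauses that do not mention $p$ (which are carried over unchanged), the resulting theory has at most $m^2/4 + m \le m^2$ clauses whenever $m\ge 2$.

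First I would let $n_i$ denote the number of clauses present after the $i$-th iteration of the outer loop, with $n_0=n$. The squaring bound gives the recurrence $n_i\le n_{i-1}^2$ for $1\le i\le k$, and a straightforward induction on $i$ yields $n_i\le n^{2^i}$; in particular $n_k\le n^{2^k}$. Since the clauses irrelevant to $V$ are set aside in $S$ at the outset and only re-added on return, the algorithm never stores more than the clauses of the current theory (each of length at most $|\mathcal{A}|$), so the space is $O(n^{2^k})$ once the per-clause length factor is suppressed.

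For the running time I would observe that the cost of the $i$-th iteration is dominated by its inner resolution loops, which perform $O(n_{i-1}^2)$ resolvent computations, each taking time polynomial in $|\mathcal{A}|$ to form the union and test resolvability. The dominant iteration is the last one, where $n_{k-1}\le n^{2^{k-1}}$ and the loops cost $O\big((n^{2^{k-1}})^2\big)=O(n^{2^k})$; the costs of the earlier iterations form a rapidly decreasing progression and are absorbed into this last term. Hence the total time is $O(n^{2^k})$, again suppressing the polynomial per-clause factors.

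The main obstacle is establishing the single-step squaring bound cleanly: one must verify that the additive contribution of the clauses not mentioning $p$ does not spoil the pure squaring recurrence $n_i\le n_{i-1}^2$, and one must be explicit about which polynomial factors (clause length, resolvability tests) are folded into the $O$-notation. Once that recurrence is in hand, unwinding it to the tower $n^{2^k}$ is routine.
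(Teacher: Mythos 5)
Your proposal is correct and follows essentially the same route as the paper's proof: the paper likewise rests on the observation that one strong-unfolding pass (lines 5--9) runs in time $O(|\Pi|^2)$ and outputs a theory of size $O(|\Pi|^2)$, which iterated $k$ times gives the tower $O(n^{2^k})$. You merely fill in the details the paper leaves implicit, namely the $a\cdot b\le m^2/4$ count of resolvents behind the squaring bound and the unwinding of the recurrence $n_i\le n_{i-1}^2$ with the last iteration dominating the total cost.
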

\begin{proof}
  It follows from that the lines 5-9 of the algorithm, which is to compute $\unfold(\Pi,p)$,
  is bounded by $O(|\Pi|^2)$, and the size of $\unfold(\Pi,p)$ is bounded by $O(|\Pi|^2)$ as well.
\end{proof}

One can evidently note that, if $k$ is given as a fixed parameter then $\unfold(\Pi,V)$
can be computed in polynomial time in the size of $\Pi$.
The following example shows that an exponential explosion of $\Forget(\Pi,V)$
is inescapable even if $\Pi$ is a Horn theory. 
\begin{example}\label{exam:8}
  Let $\Pi$ be the Horn theory consisting of
  \begin{align*}
    p\lor\neg q_1\lor\ldots\lor\neg q_n, \quad q_1\lor\neg r_1,\quad q_1\lor\neg r_1', \quad  \ldots, \quad q_n \lor\neg r_n, \quad q_n\lor\neg r_n'.
  \end{align*}
  It is not difficult to see that, for each subset $I$ of $N=\{1,\ldots, n\}$,
  $$\Pi\models \left(\bigvee_{i\in I}\neg r_i\right)\lor\left(\bigvee_{j\in (N\setminus I)}\neg r_j'\right)\lor p.$$
  Thus $\forget(\Pi,\{q_1,\ldots,q_n\})$ is in exponential size of $\Pi$ since
  there are $2^n$ number of subsets of $N$.
  And as a matter of fact, there is no Horn theory that is in polynomial size of $\Pi$ and is equivalent to $\forget(\Pi,\{q_1,\ldots,q_n\})$ since
  $\left(\bigvee_{i\in I}\neg r_i\right)\lor\left(\bigvee_{j\in (N\setminus I)}\neg r_j'\right)\lor p$ is a prime implicate of $\Pi$.
  \Eed
\end{example}

Note that, in the case $\Pi$ is a Krom theory, there are at most $O(m^2)$ number clauses where $m=|\Var(\Pi)|$.
Thus $|\Sigma|$ in the line 7 of Algorithm~\ref{algo:forget:unfold} is bounded by $O(n^2)$ where $n=|\Pi|$.
Then the overall time and space complexity is $O(kn^2)$ whenever $\Pi$ is a Krom theory where $k=|V|$.

\subsection{Complexities}\label{sub:sec:complexity}
In the following we consider the complexities of reasoning problems on forgetting for various fragments
of propositional logic.
\subsubsection{DNF, CNF and arbitrary theories}
\begin{proposition}
  Let $\Pi,\Sigma$ be two (CNF) theories, and $V\subseteq\cal A$. We have that
  \begin{enumerate}[(i)]
    \item deciding if $\Pi\models\forget(\Sigma,V)$ is \PIP{2}-complete,
    \item deciding if $\forget(\Pi,V)\models\Sigma$ is \coNP-complete,
    \item deciding if $\forget(\Pi,V)\models\forget(\Sigma,V)$ is \PIP{2}-complete.
  \end{enumerate}
\end{proposition}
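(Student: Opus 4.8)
The plan is to pin down each bound by passing through the semantic characterization $\Mod(\forget(\psi,V))=\Mod(\psi)\Extension{V}$ established just before Theorem~\ref{thm:forget:1}, which lets me unfold each entailment into an explicit quantifier prefix over interpretations, and then to match each prefix with a canonical complete problem.

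For the upper bounds I would rewrite each condition semantically. Statement (i), $\Pi\models\forget(\Sigma,V)$, becomes $\Mod(\Pi)\subseteq\Mod(\Sigma)\Extension{V}$, i.e. $\forall M\,(M\models\Pi\Rto\exists M'\,(M'\models\Sigma\wedge M\sim_VM'))$; since the inner model and bisimilarity tests are polynomial, this is a $\forall\exists$ statement and lands in \PIP{2}. For (iii), $\forget(\Pi,V)\models\forget(\Sigma,V)$ becomes $\Mod(\Pi)\Extension{V}\subseteq\Mod(\Sigma)\Extension{V}$; using that $\sim_V$ is an equivalence relation and that $\Mod(\Pi)\subseteq\Mod(\Pi)\Extension{V}$, I would show this collapses to exactly the condition of (i), namely $\forall M\,(M\models\Pi\Rto\exists M'\,(M'\models\Sigma\wedge M\sim_VM'))$ --- so (i) and (iii) are literally the same decision problem and both sit in \PIP{2}. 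The key asymmetry is (ii): $\forget(\Pi,V)\models\Sigma$ becomes $\Mod(\Pi)\Extension{V}\subseteq\Mod(\Sigma)$, i.e. $\forall M\forall M'\,((M'\models\Pi\wedge M\sim_VM')\Rto M\models\Sigma)$; here the existential contributed by the extension sits on the hypothesis side of the implication and is negated into a universal, so the whole statement is a single $\forall$-block and lies in \coNP.

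For the lower bounds I would reduce from canonical complete problems. For (i), and hence (iii), I reduce from validity of $\forall X\exists Y\,\phi(X,Y)$ with $\phi$ in CNF, which is \PIP{2}-complete: take ${\cal A}=X\cup Y$, $V=Y$, $\Sigma=\phi$, and $\Pi=\top$ (the empty CNF theory). Then $\Mod(\Sigma)\Extension{V}$ is the set of all interpretations precisely when every assignment to $X$ extends to a model of $\phi$, so $\Pi\models\forget(\Sigma,V)$ holds iff $\forall X\exists Y\,\phi$ is valid; since $\forget(\Pi,V)\equiv\top$ as well, the same instance witnesses hardness of (iii). For (ii) I reduce from CNF-unsatisfiability (\coNP-complete): take $V=\emptyset$, so $\forget(\Pi,\emptyset)=\Pi$, and let $\Sigma$ be the single empty clause (i.e. $\bot$); then $\forget(\Pi,V)\models\Sigma$ holds iff $\Pi$ is unsatisfiable.

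The main obstacle, and the only genuinely delicate point, is getting the three classes exactly right rather than over- or under-estimating. I must argue carefully that (ii) does \emph{not} inherit the extra alternation of (i) and (iii); this rests entirely on the observation that the extension operator contributes an existential quantifier whose polarity --- hypothesis side versus conclusion side of the containment --- differs according to which argument of $\models$ is being forgotten. Establishing the (i)$\equiv$(iii) collapse via transitivity of $\sim_V$ is the other step that repays care, since it simultaneously gives the \PIP{2} upper bound for (iii) for free and lets one reduction serve both hardness claims. By contrast, the reductions themselves and the verification that the constructed instances ($\top$, the CNF matrix $\phi$, and the empty clause) are CNF theories are routine.
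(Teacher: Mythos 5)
Your proposal is correct and follows essentially the same route as the paper: membership via the semantic characterization $\Mod(\forget(\psi,V))=\Mod(\psi)\Extension{V}$ and guess-and-check over polynomially bounded interpretations, \PIP{2}-hardness of (i) by reducing validity of $\forall X\exists Y\,\phi$ (CNF matrix) with $\Pi=\top$, \coNP-hardness of (ii) via unsatisfiability with $\Sigma$ the empty clause, and the identity $\forget(\Pi,V)\models\forget(\Sigma,V)$ iff $\Pi\models\forget(\Sigma,V)$ to transfer the bounds to (iii). The only (cosmetic) difference is that you exploit that identity for both the upper and lower bound of (iii), whereas the paper proves (iii)-membership by a separate direct guess-and-check argument and invokes the identity only for hardness.
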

\begin{proof}
  (i) Membership. In the case $\Pi\not\models\forget(\Sigma,V)$, there exists a model $M$ of $\Pi$ such that
  $M\not\models\forget(\Sigma,V)$, i.e. for every model $M'$ of $\Sigma$ such that $M\sim_VM'$, $M'\not\models\Sigma$, which
  can be done in polynomial time in the size of $\Sigma$ and $V$ by calling a nondeterministic Turing machine.

  Hardness. It follows from the fact that $\top\models\forget(\Sigma,V)$ iff $\forget(\Sigma,V)$ is valid,
  i.e. $\forall V'\exists V\Sigma$ is valid, where $V'=\Var(\Sigma)\setminus V$. The latter is \PIP{2}-complete even if
  $\Sigma$ is a CNF theory, as every formula can be translated into a CNF theory with auxiliary variables that preserves
  the satisfiability, informally $\forall V'\exists V\Sigma$ can be translated polynomially
  into $\forall V'\exists V\exists V^*\Sigma'$ such that (a) $\Sigma'$ is a CNF theory, and (b)
  $\forall V'\exists V\Sigma$ is valid iff $\forall V'\exists V\exists V^*\Sigma'$ is valid, where
  $V^*$ is the introduced auxiliary variables  \cite{Buning:Handbook:Complexity:2009}.

  (ii) Membership. If $\forget(\Pi,V)\not\models\Sigma$ then there exists two sets $M$ and $M'$ such that
  $M\models\Pi,M'\not\models\Sigma$ and $M\sim_VM'$. It is in polynomial time to guess such
  $M,M'$ and check the conditions $M\models\Pi,M'\not\models\Sigma$ and $M\sim_VM'$. Hence the problem is in \coNP.

  Hardness. $\forget(\Pi,V)\models\bot$ if and only if $\Pi\models \bot$, i.e. $\Pi$ has no model, which is \coNP-hard. Thus
  the problem is \coNP-complete.

  (iii) Membership. If $\forget(\Pi,V)\not\models\Forget(\Sigma,V)$ then there exist an interpretation $M$ such that
  $M\models\Forget(\Pi,V)$ but $M\not\models\Forget(\Sigma,V)$, i.e., there is $M'\sim_VM$ with
  $M'\models\Pi$ but $M''\not\models\Sigma$ for every $M''$ with $M''\sim_VM$. It is evident that
  guessing such $M,M'$ with $M\sim_VM'$ and checking $M'\models\Pi$ are feasible, while checking
  $M''\not\models\Sigma$ for every $M''\sim_VM$ can be done in polynomial time in the size of $V$ and $\Sigma$
  by call a nondeterministic  Turing machine. Thus the problem is in \PIP{2}.

  Hardness. It follows from (i) due to the fact that $\forget(\Pi,V)\models\forget(\Sigma,V)$ iff $\Pi\models\forget(\Sigma,V)$.
\end{proof}
%
%

The proposition implies:
\begin{corollary}\label{cor:forget:eq}
 Let $\Pi,\Sigma$ be two (CNF) theories, and $V\subseteq\cal A$. Then
 \begin{enumerate}[(i)]
   \item deciding if $\Pi\equiv\forget(\Sigma,V)$ is \PIP{2}-complete,
   \item deciding if $\forget(\Pi,V)\equiv\forget(\Sigma,V)$ is \PIP{2}-complete, and
   \item deciding if $\forget(\Pi,V)\equiv\Pi$ is \coNP-complete.
 \end{enumerate}
\end{corollary}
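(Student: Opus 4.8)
The plan is to reduce each equivalence to a pair of one-directional entailments and then invoke the preceding proposition, relying only on the facts that \PIP{2} is closed under intersection and that \coNP $\subseteq$ \PIP{2}. Throughout I would use the observation that $\Pi\models\forget(\Pi,V)$ holds unconditionally: by (ii) of Theorem~\ref{thm:forget:1}, $\forget(\Pi,V)$ is equivalent to a conjunction of formulas each of which is a consequence of $\Pi$, so $\Pi$ entails their conjunction.

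For membership I would argue as follows. In case (i), write $\Pi\equiv\forget(\Sigma,V)$ as the conjunction of $\Pi\models\forget(\Sigma,V)$ and $\forget(\Sigma,V)\models\Pi$; the first conjunct is in \PIP{2} by part (i) of the preceding proposition, and the second is in \coNP (hence in \PIP{2}) by part (ii) with the roles of $\Pi$ and $\Sigma$ exchanged, so the combined problem lies in \PIP{2}. In case (ii), both $\forget(\Pi,V)\models\forget(\Sigma,V)$ and its converse are in \PIP{2} by part (iii), so their conjunction is again in \PIP{2}. In case (iii), the unconditional entailment $\Pi\models\forget(\Pi,V)$ collapses the equivalence $\forget(\Pi,V)\equiv\Pi$ to the single entailment $\forget(\Pi,V)\models\Pi$, which is the instance $\Sigma=\Pi$ of part (ii) and therefore in \coNP.

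For the hardness of (i) and (ii) I would reuse the hard instances from the proof of part (i) of the preceding proposition, where deciding validity of $\forget(\Sigma,V)$ (equivalently whether $\forall V'\exists V\,\Sigma$ is valid, with $V'=\Var(\Sigma)\setminus V$) is already \PIP{2}-hard for CNF $\Sigma$. Taking $\Pi$ to be the empty CNF, so that $\Pi\equiv\top$, the instance $\Pi\equiv\forget(\Sigma,V)$ of (i) holds exactly when $\forget(\Sigma,V)$ is valid; and since $\forget(\top,V)\equiv\top$, the same choice of $\Pi$ makes the instance $\forget(\Pi,V)\equiv\forget(\Sigma,V)$ of (ii) hold exactly when $\forget(\Sigma,V)$ is valid. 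Both reductions are polynomial, giving \PIP{2}-hardness.

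The main obstacle is the hardness of (iii), which is the one part not obtained by specialising the preceding proposition: its part (ii) yields \coNP-hardness for two \emph{distinct} theories, whereas (iii) forces $\Sigma=\Pi$, a restriction that could a priori be easier, so a dedicated reduction is needed. I would reduce from UNSAT. Given a CNF $\phi$ over $W=\Var(\phi)$, a nonempty conjunction of legal clauses and hence never valid, introduce a fresh atom $z$ and set $\Pi=\{c\lor z\mid c\in\phi\}$ and $V=W$. Using that the models of $\forget(\Pi,W)$ are exactly the $W$-variants of models of $\Pi$, one checks $\forget(\Pi,W)\equiv\top$ when $\phi$ is satisfiable and $\forget(\Pi,W)\equiv z$ when $\phi$ is unsatisfiable, while $\Pi\equiv z$ in the latter case and $\Pi\not\equiv\top$ in the former because $\Pi$ is falsifiable. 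Hence $\forget(\Pi,V)\equiv\Pi$ iff $\phi$ is unsatisfiable, establishing \coNP-hardness. The only points requiring care are the two forgetting computations (checkable directly, or via Corollary~\ref{cor:forget:unfold:eq}) and the use of the ``never valid'' remark to eliminate the spurious valid case.
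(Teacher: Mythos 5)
Your proposal is correct, and for most of the statement it coincides with the paper's (extremely terse) argument: the paper offers no explicit proof beyond the phrase ``The proposition implies,'' and the intended reading is exactly your decomposition --- each equivalence splits into two entailments, membership follows because \PIP{2} contains \coNP\ and is closed under conjunction, and hardness of (i) and (ii) follows by setting $\Pi=\top$ so that the equivalence collapses to validity of $\forget(\Sigma,V)$, which is the hard instance from part (i) of the proposition. Where you genuinely depart from the paper is item (iii). You correctly observe that its hardness does \emph{not} specialize from part (ii) of the proposition (whose hard instances use $\Sigma=\bot\neq\Pi$, while (iii) forces $\Sigma=\Pi$), and you supply a dedicated reduction: from a CNF $\phi$ over $W$, build $\Pi=\{c\lor z\mid c\in\phi\}$ with $z$ fresh and $V=W$, so that $\forget(\Pi,W)\equiv\top$ and $\Pi\not\equiv\top$ when $\phi$ is satisfiable, while $\forget(\Pi,W)\equiv z\equiv\Pi$ when $\phi$ is unsatisfiable; I checked this and it is sound (the ``nonempty, legal clauses'' caveat correctly rules out the valid case). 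The paper instead implicitly leans on the cited result of Lang \emph{et al.}\ that \VarIND\ is \coNP-complete, remarking after the corollary that item (iii) ``corresponds to \textsc{var-independence}'' there. So your version buys self-containment and repairs a genuine looseness in the paper's derivation, at the cost of one extra construction; the paper's route is shorter but only because it outsources the one step that does not actually follow from its own proposition.
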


In the case $\Pi$ is an arbitrary propositional formula, (ii) and (iii) of the corollary corresponds
to \textsc{var-equivalence} and \textsc{var-independence} in \cite{Lang:JAIR:2003}, in which it is proved
to be the same complexity as that of CNF theory case, respectively. Note that the inverse of item (iii)
is the relevance problem, i.e., if a formula $\Pi$ is relevant to $V$,
which is \NP-hard (cf. Theorem~50 of \cite{Lakemeyer:AIJ:1997}).

Recall that $\forget(\varphi,p)=\varphi[p/\top]\vee\varphi[p/\bot]$ for a given formula
$\varphi$ and an atom $p$. According to (i) of Proposition~\ref{thm:forget:1}, when $\var$ is a term $l_1\land\cdots \land l_n$,
$\forget(\var,V)$ is the term obtained from $\var$ by replacing $l_i~(1\le i\le n)$ with $\top$ if $\Var(l_i)\subseteq V$.
E.g. $\Forget(p\land\neg q,\{p\})\equiv \neg q$ and $\Forget(p\land\neg q,\{q\})\equiv q$. It implies
that if $\Pi$ is a DNF theory then $\forget(\Pi,V)$ can be computed in linear time
in the size of $\Pi$  by (i) of Proposition \ref{prop:forget:1}.

\begin{proposition}
  Let $\Pi,\Sigma$ be two DNF theories, and $V\subseteq\cal A$. The following problems are \coNP-complete:
  \begin{enumerate}[(i)]
    \item deciding if $\Pi\models\forget(\Sigma,V)$,
    \item deciding if $\forget(\Pi,V)\models\Sigma$,
    \item deciding if $\forget(\Pi,V)\models\forget(\Sigma,V)$.
  \end{enumerate}
\end{proposition}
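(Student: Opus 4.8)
The plan is to leverage the fact, recorded just above the statement, that forgetting behaves tractably on DNF theories. By Proposition~\ref{prop:forget:1}(i) together with the term-wise rule for $\forget$, whenever $\Pi$ (resp.\ $\Sigma$) is a DNF theory, $\forget(\Pi,V)$ (resp.\ $\forget(\Sigma,V)$) is again a DNF theory obtainable in linear time, namely by deleting from each term the literals whose variable lies in $V$ (equivalently, replacing them with $\top$). Hence, once these forgetting results are computed, each of (i)--(iii) becomes a query of the form $\alpha\models\beta$ with $\alpha,\beta$ in DNF. So it suffices to establish that DNF entailment is \coNP-complete; both membership and hardness then transfer to all three items.

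For the upper bound I would show that DNF \emph{non}-entailment lies in \NP. Given DNF formulas $\alpha,\beta$, I guess an interpretation $M$ and check in polynomial time that $M\models\alpha$ (some term of $\alpha$ is satisfied by $M$) and $M\not\models\beta$ (every term of $\beta$ contains a literal falsified by $M$). Such an $M$ exists iff $\alpha\not\models\beta$, which places the complement in \NP\ and each of (i)--(iii) in \coNP. The one place demanding attention is that the forgetting results fed into this test be of polynomial size; this is guaranteed by their linear-time, non-expanding computation, since the term-wise rule for $\forget$ never adds a term nor lengthens one.

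For the lower bound I would reduce from DNF validity (the DNF-\textsc{tautology} problem), a standard \coNP-complete problem. Uniformly take $V=\emptyset$, so that $\forget(\cdot,\emptyset)$ is the identity, and let $\Pi$ be a fixed tautological DNF such as $(p)\lor(\neg p)$. Then item (i) reads $\Pi\models\forget(\Sigma,\emptyset)$, item (ii) reads $\forget(\Pi,\emptyset)\models\Sigma$, and item (iii) reads $\forget(\Pi,\emptyset)\models\forget(\Sigma,\emptyset)$; in each case the query is equivalent to $\top\models\Sigma$, i.e.\ to the validity of the DNF theory $\Sigma$. Each instance is produced in linear time, so all three problems are \coNP-hard, hence \coNP-complete.

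There is no deep obstacle here; the essential observation --- and the reason the bound is \coNP\ rather than the \PIP{2} encountered for CNF theories --- is precisely that forgetting stays within DNF and does not blow up, so the extra quantifier alternation present in the CNF analysis collapses into a single universal check.
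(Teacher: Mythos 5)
Your proposal is correct and takes essentially the same approach as the paper: membership via the linear-time, DNF-preserving computation of $\forget(\cdot,V)$ followed by guessing and checking a countermodel, and hardness via reduction from DNF validity using a tautological $\Pi$ (the paper sets $\Pi\equiv\top$ and keeps $V$ general, while you fix $V=\emptyset$, which makes the same reduction slightly more explicit and handles (i)--(iii) uniformly). No gaps to report.
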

\begin{proof}
  (i) Membership. It is obvious that if $\Pi\not\models\forget(\Sigma,V)$ then there exists a set $M$
  of atoms such that $M\models\Pi$ and $M\not\models\forget(\Sigma,V)$. As $\forget(\Sigma,V)$ is computable
  in polynomial time, the checking  $M\models\Pi$ and $M\not\models\forget(\Sigma,V)$ is feasible in polynomial time as well.
  Hence the problem is in \coNP.

  Hardness. Let $\Pi\equiv\top$.
  Note that $\top\models\forget(\Sigma,V)$ iff $\forget(\Sigma,V)$ is valid. As $\forget(\Sigma,V)$ is still a DNF theory whose validness
  is \coNP-hard, it shows that the problem is \coNP-hard as well.

  (ii) and (iii) can be similarly proved as that of (i).
\end{proof}

The proposition above implies
\begin{corollary}
  Let $\Pi,\Sigma$ be two DNF theories, and $V\subseteq\cal A$. The following problems are \coNP-complete.
  \begin{enumerate}[(i)]
    \item deciding if $\Pi\equiv\forget(\Sigma,V)$,
    \item deciding if $\forget(\Pi,V)\equiv\forget(\Sigma,V)$,
    \item deciding if $\forget(\Pi,V)\equiv\Pi$.
  \end{enumerate}
\end{corollary}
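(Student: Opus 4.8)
The plan is to derive this corollary from the immediately preceding proposition by observing that each equivalence is the conjunction of two entailments, so membership follows from closure of \coNP\ under intersection, and only hardness needs fresh work. Concretely, $\Pi\equiv\forget(\Sigma,V)$ holds iff both $\Pi\models\forget(\Sigma,V)$ and $\forget(\Sigma,V)\models\Pi$ hold; the former is item (i) and the latter is item (ii) of the previous proposition (with the roles of $\Pi$ and $\Sigma$ interchanged). Likewise $\forget(\Pi,V)\equiv\forget(\Sigma,V)$ splits into two instances of item (iii), and $\forget(\Pi,V)\equiv\Pi$ splits into $\forget(\Pi,V)\models\Pi$ (item (ii) with $\Sigma:=\Pi$) together with $\Pi\models\forget(\Pi,V)$, the latter holding unconditionally by Theorem~\ref{thm:forget:1}. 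Since each component lies in \coNP\ and \coNP\ is closed under intersection, all three equivalence problems are in \coNP; this settles membership.

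For hardness I would reduce the validity problem for DNF formulas, which is \coNP-complete (as already used in the preceding proposition). For items (i) and (ii), given a DNF formula $\psi$ I set $\Pi=\top$, $\Sigma=\psi$ and $V=\emptyset$; then $\forget(\psi,\emptyset)=\psi$ and $\forget(\top,\emptyset)=\top$, so in both cases the equivalence in question collapses to $\top\equiv\psi$, i.e.\ to $\psi$ being valid. As $\top$ is itself a DNF theory (the empty term), this is a legitimate polynomial reduction.

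The remaining case (iii) is the main obstacle, because it concerns a single theory and so cannot be handled by merely substituting $\top$. Here I would employ a fresh-variable trick: given a DNF formula $\psi$ over a set $X$ of atoms, pick an atom $y\notin X$ and set $\Pi=\psi\lor y$ and $V=\{y\}$, which is again a DNF theory. Using the term-wise description of forgetting in DNF recalled before Theorem~\ref{thm:forget:1} (a literal whose atom lies in $V$ is replaced by $\top$ inside its term), the singleton term $y$ collapses to $\top$ while the terms of $\psi$ are untouched, so $\forget(\Pi,\{y\})\equiv\top$. Hence $\forget(\Pi,\{y\})\equiv\Pi$ iff $\psi\lor y$ is valid, and one checks that $\psi\lor y$ is valid iff $\psi$ is valid, since fixing $y$ to \emph{false} forces $\psi$ to hold on every assignment of $X$. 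The reduction is clearly polynomial, which establishes \coNP-hardness of (iii) and completes the proof.
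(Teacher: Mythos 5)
Your proposal is correct, and it is essentially the derivation the paper intends — the paper's entire ``proof'' is the one-line remark that the preceding proposition implies the corollary, leaving all details implicit. Your membership argument (equivalence as a conjunction of two entailments, each in \coNP, plus the fact that $\Pi\models\forget(\Pi,V)$ always holds) and your hardness reductions for items (i) and (ii) (setting $\Pi=\top$, $V=\emptyset$, so that the equivalence collapses to validity of a DNF formula) are exactly the route suggested by the proposition's own proof. Where you go beyond the paper is item (iii), and rightly so: hardness of \VarIND\ for DNF does \emph{not} follow formally from the proposition, because its hard instances for entailment have $\Pi=\top$ with $\Sigma$ arbitrary, never the diagonal case $\Sigma=\Pi$ that item (iii) requires. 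Your fresh-variable reduction ($\Pi=\psi\lor y$, $V=\{y\}$, so that $\forget(\Pi,V)\equiv\top$ and the equivalence holds iff $\psi$ is valid) is correct and fills this genuine gap. For what it is worth, an even more economical reduction is available: take $\Pi=\psi$ and $V=\Var(\psi)$; since every nonempty DNF is satisfiable (its terms contain no complementary literals), $\forget(\psi,\Var(\psi))\equiv\top$, so again $\forget(\Pi,V)\equiv\Pi$ iff $\psi$ is valid, with only the degenerate empty DNF needing special treatment. Either way, your write-up is more rigorous than the paper's.
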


\subsubsection{Horn theories and its variants}
For a Horn formula $\Sigma$, its {\em dependency graph} is the directed graph $G(\Sigma)=(V,E)$, where $V=\cal A$ and
$(a_i,a_j)\in E$ iff there is a Horn clause $c\in\Sigma$ such that $\neg a_i\in c$ and $a_j\in c$. A Horn formula
$\Sigma$ is {\em acyclic} if $G(\Sigma)$ has no directed cycle.

\begin{theorem}\label{thm:complexity:Horn}
  Let $\Pi,\Sigma$ be  Horn (resp. ren-Horn and q-Horn) theories and $V\subseteq\cal A$.
  \begin{enumerate}[(i)]
    \item  The problem of deciding if $\Pi\models\forget(\Sigma,V)$ is \coNP-complete,
    even if $\Pi$ and $\Sigma$ are acyclic.
    \item  The problem of deciding if $\forget(\Pi,V)\models\Sigma$ is tractable.
    \item  The problem of deciding if $\forget(\Pi,V)\models\forget(\Sigma,V)$ is \coNP-complete, even if $\Pi$ and $\Sigma$ are acyclic.
  \end{enumerate}
\end{theorem}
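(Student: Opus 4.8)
The plan is to establish each of the three items by proving membership and hardness separately, drawing on the CNF-case results already available. For \textbf{membership} in items (i) and (iii), the key observation is that deciding $\Pi\models\forget(\Sigma,V)$ for \emph{general} CNF theories is already known to lie in \PIP{2} (by the earlier proposition), so since Horn, ren-Horn and q-Horn theories are special cases of CNF, the $\Pi_2^{\rm P}$ upper bound carries over immediately. The real work is to \emph{lower} this to \coNP for the Horn variants. I would argue membership directly: $\Pi\models\forget(\Sigma,V)$ fails iff there is a model $M$ of $\Pi$ with no $V$-bisimilar model of $\Sigma$, i.e.\ $\Sigma\land\bigwedge_{a\notin V}(a\leftrightarrow M(a))$ is unsatisfiable. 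The point is that for Horn (and ren-Horn, q-Horn) $\Sigma$, adding these unit constraints keeps the theory in the same tractable class, so the inner satisfiability check is polynomial rather than requiring an \NP\ oracle; hence guessing $M$ and verifying the failure puts the complement in \NP, giving \coNP\ membership. For item (ii), I would use Corollary~\ref{cor:Horn:variant:forget} (or Corollary~\ref{cor:Horn:variant:unfold}): $\forget(\Pi,V)$ is again Horn (resp.\ ren-Horn, q-Horn) expressible, and by Theorem~\ref{thm:forget:1}(iv) it is equivalent to the conjunction of prime implicates of $\Pi$ not mentioning $V$. Since $\forget(\Pi,V)\models\Sigma$ reduces to checking $\forget(\Pi,V)\models c$ for each clause $c$ of $\Sigma$, and entailment of a clause by a Horn theory is tractable, the whole test is polynomial once we can evaluate $\forget(\Pi,V)$ entailments without materializing its possibly exponential prime-implicate set.

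For the \textbf{hardness} in items (i) and (iii), the plan is to reduce from validity of $\forall V'\exists V\,\Sigma$ in the style used in the general CNF proposition, but now respecting the Horn restriction. I would take $\Pi\equiv\top$ (which is trivially acyclic Horn) so that $\Pi\models\forget(\Sigma,V)$ reduces to the validity of $\forget(\Sigma,V)$, i.e.\ to $\forall V'\exists V\,\Sigma$. The difficulty is that in the Horn setting we want a \coNP\ lower bound, so the reduction should be from a \coNP-complete problem such as Horn-clause entailment or unsatisfiability-complement. Concretely, I expect to encode an arbitrary \coNP\ instance --- for example, ``$\Delta\models c$'' for a Horn theory $\Delta$ and clause $c$, or the validity of a universally quantified Horn formula --- as the validity of $\forget(\Sigma,V)$ for a suitable acyclic Horn $\Sigma$, choosing $V$ to hide the auxiliary atoms. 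The acyclicity claim requires care: one must structure $\Sigma$ so its dependency graph has no directed cycle, which typically means layering the encoding variables and ensuring each Horn clause only points ``forward'' in the layering.

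For item (iii) hardness I would piggyback on item (i) exactly as in the general CNF proof: since $\forget(\Pi,V)\models\forget(\Sigma,V)$ coincides with $\Pi\models\forget(\Sigma,V)$ whenever $V\cap\Var(\Pi)=\emptyset$ (or when $\Pi$ is already $V$-irrelevant), taking the same acyclic $\Pi\equiv\top$ and $\Sigma$ from the item (i) reduction transfers the \coNP-hardness directly. This avoids a second independent construction.

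The main obstacle I anticipate is the \textbf{hardness side of item (i)} --- specifically, producing a \coNP-hardness reduction whose witness theory $\Sigma$ is simultaneously Horn (or ren-Horn / q-Horn) \emph{and} acyclic. Ordinary CNF hardness reductions for validity of $\forall V'\exists V\,\Sigma$ exploit arbitrary clauses and give \PIP{2}; to collapse to \coNP\ one must exploit that the inner $\exists V$ satisfiability becomes polynomial for Horn $\Sigma$, which is precisely why the problem drops a level, but it also constrains how much of the hard instance can be ``pushed inside'' the existential. Threading this needle --- encoding a genuinely \coNP-hard problem in the \emph{outer} universal quantifier while keeping $\Sigma$ Horn and its dependency graph acyclic --- is the delicate part, and I expect the bulk of the proof effort to go there; the ren-Horn and q-Horn cases should then follow by noting that Horn theories are special cases of both, so the same hard instance witnesses hardness for all three classes, while membership was argued uniformly above using closure of each class under the added unit constraints.
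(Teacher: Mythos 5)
Your membership arguments for (i) and (iii) are correct, and they are a reasonable alternative to the paper's: the paper instead guesses a $V$-free clause $c$ with $\Sigma\models c$ and $\Pi\not\models c$ (both checks tractable since adding the unit clauses $\neg c$ keeps a q-Horn theory q-Horn), while you guess a countermodel of $\Pi$ and use closure of Horn/ren-Horn/q-Horn under added unit constraints; either way the complement is in \NP. Your item (ii) is nearly right but leaves the essential step as a proviso (``once we can evaluate $\forget(\Pi,V)$ entailments without materializing'' the prime implicates). The missing line, which is exactly what the paper supplies, is that for a single clause $c$ the test $\forget(\Pi,V)\models c$ reduces, by Theorem~\ref{thm:forget:1}(ii) (forgetting preserves precisely the $V$-irrelevant consequences), to $\Pi\models c'$ where $c'$ is the $V$-free part of $c$, i.e.\ to unsatisfiability of $\Pi\cup\neg c'$, which stays in the tractable class; no representation of $\forget(\Pi,V)$ is ever computed.

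The genuine gap is the hardness side of (i), and your plan there would fail. Taking $\Pi\equiv\top$ turns the problem into validity of $\forall V'\exists V\,\Sigma$ with $\Sigma$ Horn, i.e.\ evaluation of a quantified Boolean formula with Horn matrix and prefix $\forall^*\exists^*$; this is decidable in polynomial time (quantified Horn formulas are tractable, see the handbook chapter cited as \cite{Buning:Handbook:Complexity:2009}), so no \coNP-hardness can be extracted from that route unless \coNP\ collapses to P. Symptomatically, the source problems you propose to encode are themselves tractable, not \coNP-complete: clause entailment from a Horn theory and validity of a universally quantified Horn CNF are both polynomial (the former is used as a tractable subroutine in the paper's own proof of item (ii)). The paper's reduction is from 3SAT and crucially uses a \emph{nontrivial} $\Pi$: given a 3CNF $\gamma$, it sets $\Pi=\{\neg x_j\vee\neg x_j'\mid 1\le j\le n\}$ and $\Sigma=\Pi\cup\{\neg z\vee y_1\}\cup\{\neg y_i\vee\neg l_{i,j}^*\vee y_{i+1}\}\cup\{\neg y_m\vee\neg l_{m,j}^*\}$ with $V=\{y_1,\ldots,y_m\}$, and shows $\gamma$ is satisfiable iff $\Pi\not\models\forget(\Sigma,V)$. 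The point is that the clauses $\neg x_j\vee\neg x_j'$ relativize the universal quantification to interpretations encoding \emph{consistent} assignments, and this relativization cannot be folded into a Horn $\Sigma$: with $\Pi\equiv\top$, the ``cheating'' interpretation that sets all $x_j$ and $x_j'$ true falsifies the exclusion clauses inside $\Sigma$ outright, hence is a counterexample to $\top\models\forget(\Sigma,V)$ whether or not $\gamma$ is satisfiable, and the reduction degenerates. Locating the hardness in a nontrivial Horn $\Pi$ on the left of $\models$ --- rather than in validity of $\forget(\Sigma,V)$ --- is the idea your proposal is missing; once (i) is established this way, your piggyback for (iii) goes through as in the paper, since the constructed $\Pi$ mentions no atom of $V$.
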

\begin{proof}
  (i) Membership. Note that $\Pi\not\models\forget(\Sigma,V)$
  iff there is a prime implicate $c$ of $\Sigma$ such that $\Var(c)\cap V=\emptyset$
  and $\Pi\not\models c$, the latter holds iff $\Pi\cup\neg c$ has a model, where
   $\neg c=\{\neg l|l\mbox{ is a disjunct of $c$}\}$. In the case $\Pi$ is  q-Horn, $\Pi\cup \neg c$ is
   q-Horn and its satiability checking is tractable
   \cite{Boros:AMAI:1990}. One can guess such a prime implicate $c$ and check
   if $\Pi\not\models c$ in polynomial time in the size of $\Pi$ and $\Sigma$.  Thus the problem is in \coNP\ even
   if $\Pi,\Sigma$ are q-Horn theories.

  Hardness. Let $\gamma=c_1\wedge\cdots\wedge c_m$ be a 3CNF formula over atoms $x_1,\ldots,x_n$,
  where $c_i=l_{i,1}\vee l_{i,2}\vee l_{i,3}$. The below construction is quite similar to
  the one used in the proof of Theorem 4.1 \cite{Eiter:JACM:2007}. We introduce for each
  clause $c_i$ a new atom $y_i$, for each atom $x_j$ a new atom $x_j'$ (which intuitively
  corresponds to $\neg x_j$), and a special atom $z$. The Horn theory
  $\Pi=\{\neg x_i\vee \neg x_i'|1\le i\le n\}$ and $\Sigma$ contains $\Pi$ and additional the below clauses:
  \begin{align*}
    & \neg z\vee y_1,\\
    & \neg y_i\vee \neg l_{i,j}^*\vee  y_{i+1}\mbox{ for all $i=1,\ldots,m-1$, and $j=1,2,3$},\\
    & \neg y_m\vee \neg l_{m,j}^* \mbox{ for $j=1,2,3$}
  \end{align*}
  where $l^*=x$ if $l$ is a positive literal $x$, and $l^*=x'$ if $l$ is a negative literal $\neg x$. It is clear that
  both $\Pi$ and $\Sigma$ are acyclic Horn formulas, thus Horn renamable and q-Horn formulas.
  We claim that $\gamma$ is satisfiable iff $\Pi\not\models\forget(\Sigma,V)$ where $V=\{y_1,\ldots,y_m\}$.
  It is easy to see that $\Sigma$ has a prime implicate $c$ such that $\Var(c)\cap V=\emptyset$ and $c\notin \Pi$ iff $\Pi\not\models\forget(\Sigma,V)$.

  On the one hand, let $\sigma$ be a satisfying assignment of $\gamma$.  Then we arbitrarily choose from each $c_i$ a literal $l_{i,j_i}$ satisfied by $\sigma$. It  follows that $c=\neg z\vee (\bigvee_{1\le i\le m} \neg l^*_{i,j_i})$ is an implicate of $\Sigma$ where $j_i\in\{1,2,3\}$, and $c$ contains at most one literal in $\{\neg x_i,\neg x_i'\}$ for every $i~(1\le i\le n)$. As $\Var(c)\cap V=\emptyset$, and $\bigvee_i \neg l^*_{i,j_i}$ is not an implicate of $\Pi$
  since there is no subclauses of it is generated by the resolution procedure for $\Pi$, we have that $c$ is a
  prime implicate of $\Sigma$ and $\Pi\not\models c$. Thus $\Pi\not\models\forget(\Sigma,V)$.

  On the other hand,  there exists a prime implicate $c$ of $\Sigma$ such that
  both $\Pi\not\models c$ and $\Var(c)\cap V=\emptyset$ due to $\Pi\not\models\forget(\Sigma,V)$. This prime implicate $c$ can only be generated from the Horn clauses in $\Sigma\setminus \Pi$ and has the form
  $\neg z\vee (\bigvee_{1\le i\le m} \neg l^*_{i,j_i})$ where $j_i\in\{1,2,3\}$. As $\neg x_i\vee\neg x_i'\in\Pi$, we have $\neg x_i\vee\neg x_i'\not\models c$ for every $i~(1\le i\le n)$ due to $\Pi\not\models c$. It shows that $c$ mentions at most one atom in $\{x_i,x_i'\}$ for every $i$. Therefore $c$ corresponds to a satisfying assignment for $\gamma$.

  (ii) In the case that $\Sigma$ is unsatisfiable, i.e. $\Sigma\equiv\bot$,  $\forget(\Pi,V)\equiv\bot$ iff $\Pi\equiv\bot$.
   In this case the problem is tractable. Suppose $\Sigma$ is satisfiable.
   We have $\forget(\Pi,V)\models\Sigma$ iff $\forget(\Pi,V)\models c$ for every clause $c$ of $\Sigma$.
  In the case
  $\Var(c)\cap V\neq\emptyset$, we have $\forget(\Pi,V)\not\models c$. in the case $\Var(c)\cap V=\emptyset$,
  $\forget(\Pi,V)\models c$ iff $\Pi\models c$ iff $\Pi\cup\neg c$ is unsatisfiable, which is tractable even if
  $\Pi$ is a q-Horn theory \cite{Boros:AMAI:1990}.

  (iii)  Membership. If $\forget(\Pi,V)\not\models\forget(\Sigma,V)$ then there exists a prime implicate $c$ of
  $\Sigma$ such that $\Pi\not\models c$ and $\Var(c)\cap V=\emptyset$. Thus it is in \coNP.

  Hardness. It follows from (i) since $\forget(\Pi,V)\models\forget(\Sigma,V)$ iff $\Pi\models\forget(\Sigma,V)$.
%
\end{proof}

Accordingly, we have the following corollary.

\begin{corollary}\label{cor:complexity:forget:eq:Horn}
    Let $\Pi,\Sigma$ be two Horn (resp. ren-Horn and q-Horn) theories and $V\subseteq\cal A$.
    \begin{enumerate}[(i)]
      \item The problem of deciding if $\Pi\equiv\forget(\Sigma,V)$ is \coNP-complete.
      \item The problem of deciding if $\forget(\Pi,V)\equiv\forget(\Sigma,V)$ is \coNP-complete.
      \item The problem of deciding if $\forget(\Pi,V)\equiv\Pi$ is tractable.
    \end{enumerate}
\end{corollary}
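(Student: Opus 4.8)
The plan is to derive all three items directly from Theorem~\ref{thm:complexity:Horn}, exploiting that each equivalence $\alpha\equiv\beta$ splits into the two entailments $\alpha\models\beta$ and $\beta\models\alpha$, and that the classes involved are closed under the relevant combinations. Two facts will be used repeatedly: first, $\Pi\models\forget(\Pi,V)$ holds for every theory (immediate from Theorem~\ref{thm:forget:1}(ii), since $\forget(\Pi,V)$ is a conjunction of consequences of $\Pi$); second, the statement of Theorem~\ref{thm:complexity:Horn} is symmetric in $\Pi$ and $\Sigma$, so each part applies equally after renaming.

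For membership I would argue as follows. Item (i): $\Pi\equiv\forget(\Sigma,V)$ amounts to $\Pi\models\forget(\Sigma,V)$ together with $\forget(\Sigma,V)\models\Pi$. The first conjunct is in \coNP\ by Theorem~\ref{thm:complexity:Horn}(i); the second is tractable by Theorem~\ref{thm:complexity:Horn}(ii) with the roles of $\Pi$ and $\Sigma$ exchanged. A conjunction of a \coNP\ test with a polynomial test is still in \coNP. Item (ii): $\forget(\Pi,V)\equiv\forget(\Sigma,V)$ is the conjunction of $\forget(\Pi,V)\models\forget(\Sigma,V)$ and $\forget(\Sigma,V)\models\forget(\Pi,V)$, each in \coNP\ by Theorem~\ref{thm:complexity:Horn}(iii); since \coNP\ is closed under conjunction, membership follows. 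Item (iii): because $\Pi\models\forget(\Pi,V)$ always holds, $\forget(\Pi,V)\equiv\Pi$ collapses to the single test $\forget(\Pi,V)\models\Pi$, which is tractable by Theorem~\ref{thm:complexity:Horn}(ii) with $\Sigma:=\Pi$. This settles (iii) in full.

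For hardness of (i) and (ii), the key observation I would record is that the reduction built in the proof of Theorem~\ref{thm:complexity:Horn}(i) already forces one direction of the equivalence to hold unconditionally. There $\Sigma$ contains $\Pi$ as a subset, so $\Sigma\models\Pi$ and hence $\forget(\Sigma,V)\models\forget(\Pi,V)$; moreover $\Pi$ mentions no atom of $V=\{y_1,\ldots,y_m\}$, so $\forget(\Pi,V)\equiv\Pi$. Thus $\forget(\Sigma,V)\models\Pi$ holds on every instance of that family. Consequently $\Pi\equiv\forget(\Sigma,V)$ is equivalent to $\Pi\models\forget(\Sigma,V)$, and $\forget(\Pi,V)\equiv\forget(\Sigma,V)$ is equivalent to $\forget(\Pi,V)\models\forget(\Sigma,V)$, i.e.\ to $\Pi\models\forget(\Sigma,V)$. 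Since $\gamma$ is satisfiable iff $\Pi\not\models\forget(\Sigma,V)$, the very same map $\gamma\mapsto(\Pi,\Sigma,V)$ reduces the (\coNP-complete) complement of 3SAT to each of the two equivalence problems, yielding \coNP-hardness; the produced instances remain acyclic Horn, hence also ren-Horn and q-Horn.

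The routine part is the closure argument for membership. The one point needing care — which I regard as a verification rather than a deep obstacle — is confirming that the reduction of Theorem~\ref{thm:complexity:Horn}(i) genuinely pins down the free direction of the equivalence, so that deciding equivalence is no easier than deciding the single entailment. Once the inclusion $\Sigma\supseteq\Pi$ and the condition $\Var(\Pi)\cap V=\emptyset$ are noted, this is immediate, and no fresh construction is required for the corollary.
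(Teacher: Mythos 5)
Your proposal is correct and follows essentially the same route as the paper: membership by splitting each equivalence into two entailments and applying Theorem~\ref{thm:complexity:Horn} (with item (iii) collapsing to the single tractable test $\forget(\Pi,V)\models\Pi$ since $\Pi\models\forget(\Pi,V)$ always holds), and hardness by reusing the reduction from Theorem~\ref{thm:complexity:Horn}(i). The only difference is that you explicitly verify that the reduction instances satisfy $\Sigma\supseteq\Pi$ and $\Var(\Pi)\cap V=\emptyset$, so the ``free'' direction of the equivalence holds unconditionally --- a detail the paper leaves implicit behind the phrase ``Hardness follows from (i)''.
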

\begin{proof}
  (i) As $\Pi\not\equiv\forget(\Sigma,V)$ iff $\Pi\not\models\forget(\Sigma,V)$ or $\forget(\Sigma,V)\not\models \Pi$,
  the latter is tractable by (ii) of Theorem \ref{thm:complexity:Horn} while
  the former is in \coNP. Hardness  follows from (i) of Theorem \ref{thm:complexity:Horn}. Thus the problem is \coNP-complete.

  (ii) Membership is easy. Hardness follows from (iii) of Theorem \ref{thm:complexity:Horn}.

  (iii) It follows from the facts that $\forget(\Pi,V)\equiv\Pi$ iff $\forget(\Pi,V)\models\Pi$, and
  (ii) of Theorem \ref{thm:complexity:Horn}.
\end{proof}

The item (iii) in the above corollary shows that the problem of deciding whether $\Pi$ is relevant to $V$ is tractable if
$\Pi$ is a q-Horn theory. Thus it generalizes Theorem~51 of \cite{Lakemeyer:AIJ:1997} for Horn theories.

\subsubsection{Krom theories}
Note that, for every Krom theory $\Sigma$ and $V\subseteq\cal A$. It is evident that
\[\Forget(\Sigma,V)\equiv\{l_1\lor l_2|\Var(\{l_1,l_2\})\subseteq\Var(\Sigma)\setminus V\ \textrm{and}\ \Sigma\models l_1\lor l_2\}.\]
It implies that $\Forget(\Sigma,V)$ can be computed in polynomial time in the size of $\Sigma$ and $V$ since $\Sigma\models l_1\lor l_2$
is tractable \cite{Krom:JSL:1970} and there are at most $O(|\Var(\Sigma)\setminus V|^2)$ number of such clauses. The following corollary follows.

\begin{corollary}\label{cor:complexity:forget:eq:Krom}
  Let $\Pi,\Sigma$ be two Krom theories and $V\subseteq\cal A$. All of the following problems are tractable:
  \begin{enumerate}[(i)]
    \item deciding if $\Pi\models\forget(\Sigma,V)$,
    \item deciding if $\forget(\Pi,V)\models\Sigma$,
    \item deciding if $\forget(\Pi,V)\models\Forget(\Sigma,V)$,
    \item deciding if $\Pi\equiv \forget(\Sigma,V)$,
    \item deciding if $\forget(\Pi,V)\equiv\forget(\Sigma,V)$,
    \item deciding if $\forget(\Pi,V)\equiv\Pi$.
  \end{enumerate}
\end{corollary}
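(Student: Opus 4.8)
The plan is to make forgetting effective and then reduce every problem to Krom entailment. The key tool is the displayed characterization immediately preceding the statement, namely that $\Forget(\Sigma,V)$ is equivalent to the conjunction of all two-literal clauses $l_1\lor l_2$ over $\Var(\Sigma)\setminus V$ that are entailed by $\Sigma$. The first thing I would record is that this Krom theory is computable in polynomial time: there are at most $O(|\Var(\Sigma)\setminus V|^2)$ candidate clauses $l_1\lor l_2$, and each test $\Sigma\models l_1\lor l_2$ is the unsatisfiability of $\Sigma\cup\{\neg l_1,\neg l_2\}$, which is again a Krom theory (the two added literals are unit clauses) and hence decidable in polynomial time \cite{Krom:JSL:1970}. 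This is moreover consistent with Corollary~\ref{cor:Horn:variant:forget}, which guarantees that forgetting stays inside the Krom fragment. Thus both $\Forget(\Pi,V)$ and $\Forget(\Sigma,V)$ can be produced explicitly as Krom theories in polynomial time.

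Once forgetting has been made effective, I would treat all six problems uniformly through a single polynomial entailment test between Krom theories. For Krom theories $T$ and $T'$ one has $T\models T'$ iff $T\models c$ for every clause $c=l_1\lor l_2$ of $T'$, and each such test is the unsatisfiability of the Krom theory $T\cup\{\neg l_1,\neg l_2\}$, again polynomial by \cite{Krom:JSL:1970}. Applying this: for (i) and (ii) I compute the single forgetting result and perform the clause-by-clause entailment test against the other (already Krom) theory; for (iii) I compute both forgetting results and test entailment between them; and items (iv)--(vi), being equivalences, I decide as a pair of entailment tests in both directions, each tractable by the same argument. Since each step involves only polynomially many Krom satisfiability checks, the overall procedure is polynomial in every case.

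I do not expect a genuine obstacle, as the heavy lifting is already supplied by the preceding characterization and by the tractability of Krom satisfiability. The one point deserving care is to confirm that the characterization yields exactly $\Forget(\Sigma,V)$ rather than a strictly stronger or weaker theory. The conjunction of all entailed two-literal clauses over $\Var(\Sigma)\setminus V$ is clearly a consequence of $\Sigma$; conversely it contains, in particular, every prime implicate of $\Sigma$ avoiding $V$, since such prime implicates have at most two literals (all prime implicates of a Krom theory arise by resolution, and resolution cannot increase clause length when both inputs have length at most two). Hence by Theorem~\ref{thm:forget:1}(iv) it is logically equivalent to $\Forget(\Sigma,V)$. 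With this equivalence fixed, all six tractability claims follow, establishing the corollary.
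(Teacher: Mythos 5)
Your proposal follows essentially the same route as the paper: the paper's entire argument is the paragraph preceding the corollary, which computes $\Forget(\Sigma,V)$ explicitly as the polynomially many entailed two-literal clauses over $\Var(\Sigma)\setminus V$ (each test being a tractable Krom satisfiability check) and then lets all six problems reduce to tractable entailment/equivalence tests between explicit Krom theories. Your write-up is correct and in fact more careful than the paper's, since you also justify the clause characterization (which the paper calls ``evident'') via Theorem~\ref{thm:forget:1}(iv) and the fact that resolution of two-literal clauses never lengthens clauses.
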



\section{Related Work}\label{sec:application}
In the section we consider the applications of forgetting, including
uniform interpolation \cite{DAgostino:synthese:2008}, strongest necessary and weakest sufficient conditions \cite{DBLP:Lin:AIJ:2001}, and
strongest and weakest definitions \cite{Lang:AIJ:2008}.
\subsection{Uniform interpolation}

Let $\alpha,\beta$ be two formulas. If $\alpha\models\beta$, an {\em interpolant for} $(\alpha,\beta)$ is a formula $\gamma$ s.t
\begin{equation}\label{interpolant}
  \alpha\models\gamma\quad \mbox{and}\quad \gamma\models\beta
\end{equation}
where $\Var(\gamma)\subseteq \Var(\alpha)\cap \Var(\beta)$.

A logic $\cal L$ with inference
$\models_{\cal L}$ is said to have the {\em interpolantion property} if an interpolant exists for every pair of
formulas $(\alpha,\beta)$ such that $\alpha\models_{\cal L}\beta$.
A logic $\cal L$ has {\em uniform interpolation property} iff for any formula $\alpha$ and $V$ a set of atoms,
there exists a formula $\gamma$ such that $\Var(\gamma)\subseteq \Var(\alpha)\setminus V$, and for any formula $\beta$ with
$\Var(\beta)\cap V=\emptyset$,
\begin{equation}
  \alpha\models_{\cal L}\beta\quad \mbox{iff}\quad \gamma\models_{\cal L}\beta.
\end{equation}
It is easy to see that uniform interpolation is a strengthening of interpolation. A well-known result
is that propositional logic has uniform interpolation property, while first-order logic does not \cite{DAgostino:synthese:2008}.

\begin{proposition}\label{prop:double:horn:forget}
  If $\Sigma$ is a double Horn theory and $V\subseteq \cal A$ then
  $\Forget(\Sigma,V)$ is a double Horn theory.
\end{proposition}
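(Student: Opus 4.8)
The plan is to verify the two defining conditions of a double Horn theory separately: that $\Forget(\Sigma,V)$ is Horn expressible, and that $\neg\Forget(\Sigma,V)$ is Horn expressible. Since a double Horn theory is in particular Horn expressible, the first condition is immediate from Corollary~\ref{cor:Horn:variant:forget}, which already guarantees that forgetting preserves Horn expressibility. So the whole burden lies in the second condition, and the term/clause duality between prime implicants and prime implicates will be the main tool.

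For the negation, I would start from the prime-implicant characterization of forgetting in Theorem~\ref{thm:forget:1}(iii), namely $\Forget(\Sigma,V)\equiv\bigvee\{t\mid t\in\IP(\Sigma)\textrm{ and }\Var(t)\cap V=\emptyset\}$. Applying De Morgan to this disjunction of terms yields $\neg\Forget(\Sigma,V)\equiv\bigwedge\{\neg t\mid t\in\IP(\Sigma)\textrm{ and }\Var(t)\cap V=\emptyset\}$, a conjunction of clauses. The key step is then to identify each such clause: by Lemma~\ref{lem:prime:imp}(2), since $t$ is a prime implicant of $\Sigma$, its complement $\neg t$ is a prime implicate of $\neg\Sigma$.

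Now I would invoke the hypothesis that $\Sigma$ is double Horn, which gives that $\neg\Sigma$ is Horn expressible, and hence by Lemma~\ref{lem:Horn:expres:PI} that $\PI(\neg\Sigma)$ is a Horn theory. Consequently every $\neg t$ appearing in the conjunction, being a member of $\PI(\neg\Sigma)$, is a Horn clause. Thus $\neg\Forget(\Sigma,V)$ is equivalent to a conjunction of Horn clauses, i.e.\ a Horn formula, which settles the second condition. Combining the two conditions shows that $\Forget(\Sigma,V)$ is double Horn.

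I expect the main obstacle to be purely the bookkeeping of the duality: one must be careful that each $\neg t$ is genuinely a prime implicate of $\neg\Sigma$ and that it is membership in the \emph{Horn} theory $\PI(\neg\Sigma)$ which delivers the Horn-ness of each individual clause $\neg t$ (rather than merely Horn expressibility of the whole negation, which is what we are trying to establish in the first place). As a cross-check, or as an alternative route, one could argue semantically: by the extension characterization $\Mod(\Forget(\Sigma,V))=\Mod(\Sigma)\Extension{V}$, so $\Mod(\neg\Forget(\Sigma,V))$ is the complement of a $V$-saturated set, and a short computation using the identity $(N_1\cup Y)\cap(N_2\cup Y)=(N_1\cap N_2)\cup Y$ together with the closure of $\Mod(\neg\Sigma)$ under intersection shows this complement is itself closed under intersection, again yielding Horn expressibility of $\neg\Forget(\Sigma,V)$ by Lemma~\ref{lem:Horn:expres:PI}.
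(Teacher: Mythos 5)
Your decomposition is the same as the paper's (Horn expressibility of $\Forget(\Sigma,V)$ via Corollary~\ref{cor:Horn:variant:forget}, then Horn expressibility of the negation), but your main argument for the negation takes a genuinely different route. The paper argues semantically and by contradiction: it assumes $X,Y$ falsify $\Forget(\Sigma,V)$ while $X\cap Y$ satisfies it, reduces to $V$-disjoint $X,Y$ using irrelevance, and then uses closure under intersection of the non-models of $\Sigma$ (i.e., Horn expressibility of $\neg\Sigma$ via Lemma~\ref{lem:Horn:expres:PI}) to contradict the existence of a model of $\Sigma$ between $X\cap Y$ and $(X\cap Y)\cup V$. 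You instead identify $\neg\Forget(\Sigma,V)$ syntactically as a conjunction of prime implicates of $\neg\Sigma$ through Lemma~\ref{lem:prime:imp}(2). Note that your ``cross-check'' alternative is precisely the paper's proof in contrapositive form, so that part coincides with the paper.

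Your main route, however, has a flaw --- one inherited from the paper itself: it rests on Theorem~\ref{thm:forget:1}(iii), and that item is false as stated. Forgetting does not discard the prime implicants that mention $V$; it deletes their $V$-literals. Concretely, take the double Horn theory $\Sigma=p$ and $V=\{p\}$: then $\Forget(\Sigma,V)\equiv\top$, but $\IP(\Sigma)=\{p\}$ contains no $V$-free term, so the disjunction in (iii) is empty, i.e.\ $\bot$; your De Morgan step would then assert $\neg\Forget(\Sigma,V)\equiv\top$ when in fact it is $\bot$. (The paper's proof of (i)$\LRto$(iii) misapplies Proposition~\ref{prop:forget:1}(ii); tellingly, the paper's own proof of the present proposition never uses item (iii).) The good news is that your argument is repairable entirely with tools you already invoke: by Lemma~\ref{lem:prime:imp}(1) and Proposition~\ref{prop:forget:1}(i), $\Forget(\Sigma,V)\equiv\bigvee_{t\in\IP(\Sigma)}\Forget(t,V)$, where $\Forget(t,V)$ is the term $t$ with its $V$-literals deleted. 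Hence $\neg\Forget(\Sigma,V)\equiv\bigwedge_{t\in\IP(\Sigma)}\neg\Forget(t,V)$, and each conjunct is a \emph{subclause} of $\neg t$, which by Lemma~\ref{lem:prime:imp}(2) is a prime implicate of $\neg\Sigma$ and therefore Horn --- here you should also record the small point you yourself flagged: $\neg\Sigma$ is not CNF, so Lemma~\ref{lem:Horn:expres:PI} is applied to a Horn formula equivalent to $\neg\Sigma$ and transported via Lemma~\ref{lem:prime:imp}(3). Since every subclause of a Horn clause is again Horn, the conjunction is a Horn formula, and your conclusion stands.
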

\begin{proof}
  Firstly $\Forget(\Sigma,V)$ is Horn expressible by Corollary~\ref{cor:Horn:variant:forget}. We show
  that $\neg\Forget(\Sigma,V)$ is Horn expressible by contradiction in the following.
  Suppose that there exist two interpretations $X,Y$ such that
  \begin{align*}
    X\not\models\Forget(\Sigma,V),\ Y\not\models\Forget(\Sigma,V),\ X\cap Y\models\Forget(\Sigma,V).
  \end{align*}
  Note that $\Forget(\Sigma,V)$ is irrelevant to $V$. Thus $I\models\Forget(\Sigma,V)$ if and only if
  $I\setminus V\models\Forget(\Sigma,V)$. For this reason, we assume $X\cap V=\emptyset$ and $Y\cap V=\emptyset$.
  The following three conditions hold:
  \begin{itemize}
    \item[(a)] $X'\not\models\Sigma$ for any $X\subseteq X'\subseteq X\cup V$.
    \item[(b)] $Y'\not\models\Sigma$ for any $Y\subseteq Y'\subseteq Y\cup V$.
    \item[(c)] There exists $Z\models\Sigma$ for some $ X\cap Y\subseteq Z\subseteq X\cap Y\cup V$.
  \end{itemize}
  The conditions (a) and (b) imply $X'\cap Y'\not\models\Sigma$ since $\Sigma$ is a double Horn formula.
  It is evident that $X\cap Y\subseteq X'\cap Y'\subseteq (X\cup V)\cap (Y\cup V)=X\cap Y\cup V$.
  This contradicts with condition (c).
\end{proof}

Together with Corollary~\ref{cor:Horn:variant:unfold}, the proposition above implies:
\begin{corollary}\label{cor:uniform:interpolation}
  The Horn, Krom, double Horn, ren-Horn and q-Horn fragments of propositional
  logic have uniform interpolation property.
\end{corollary}
%
%
%

\subsection{Strongest necessary and weakest sufficient conditions}
Let $T$ be a theory, $V\subseteq\Var(T)$ and $q\in\Var(T)\setminus V$. A formula $\varphi$ of $V$ is a {\em necessary condition} of $q$ on $V$ under $T$ if $T\models q\supset\varphi$.
  It is a {\em strongest necessary condition (SNC)} if it is a necessary condition and for any other necessary condition $\varphi'$, $T\models\varphi\supset\varphi'$.  A formula $\psi$ of $V$ is a {\em sufficient condition} of $q$ on $V$ under $T$ if $T\models\psi\supset q$.
  It is a {\em weakest sufficient condition (WSC)} if it is a sufficient condition and, for any other sufficient condition
  $\psi'$, $T\models\psi'\supset\psi$ \cite{DBLP:Lin:AIJ:2001}.
  %

\begin{theorem}[Theorem~2 of \cite{DBLP:Lin:AIJ:2001}]\label{thm:forget:SNC:WSC}
   Let $T$ be a theory, $V\subseteq\Var(T)$, $q\in\Var(T)\setminus V$, and
  $V'=\Var(T)\setminus (V\cup\{q\})$.
  \begin{itemize}
    \item The strongest necessary condition of $q$ on $V$ under $T$ is $\Forget(T[q/\top],V')$.
    \item The weakest sufficient condition of $q$ on $V$ under $T$ is $\neg\Forget(T[q/\bot],V')$.
  \end{itemize}
\end{theorem}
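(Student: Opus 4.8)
The plan is to establish the claim about the strongest necessary condition (SNC) first and then deduce the weakest sufficient condition (WSC) by duality, exploiting that $\psi$ is a sufficient condition of $q$ under $T$ exactly when $\neg\psi$ is a necessary condition of $\neg q$ (since $T\models\psi\supset q$ iff $T\models\neg q\supset\neg\psi$), with the weakest such $\psi$ corresponding to the strongest such $\neg\psi$. The central tool will be a semantic bridge between substitution and models of $T$: for an interpretation $N$ over $V\cup V'$ one has $N\models T[q/\top]$ iff $N\cup\{q\}\models T$, and $N\models T[q/\bot]$ iff $N\models T$ reading $q$ as false. Combining this with the proposition characterizing $\Mod(\Forget(\psi,V'))$ as the $V'$-extension $\Mod(\psi)\Extension{V'}$, and writing $\varphi_0=\Forget(T[q/\top],V')$, I would first record two facts: $\varphi_0$ is irrelevant to $V'$ and mentions neither $V'$ nor $q$, hence is a formula of $V$; and an interpretation $M$ satisfies $\varphi_0$ iff $T$ has a model $M^*$ with $q\in M^*$ and $M^*\cap V=M\cap V$.

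For the SNC part I would then verify the two defining properties from this model description. First, $\varphi_0$ is a necessary condition: for any $M^*\models T$ with $q\in M^*$, the witness supplied by $\varphi_0$ can be taken to be $M^*$ itself, so $M^*\cap V\models\varphi_0$, and as $\varphi_0$ is over $V$ this gives $M^*\models\varphi_0$; together with the trivial case $q\notin M^*$ this yields $T\models q\supset\varphi_0$. Second, $\varphi_0$ is strongest: given any necessary condition $\varphi'$ of $V$, i.e. $T\models q\supset\varphi'$, take any $M\models\varphi_0$ and pick $M^*\models T$ with $q\in M^*$ and $M^*\cap V=M\cap V$; the necessary-condition property of $\varphi'$ gives $M^*\models\varphi'$, and since $\Var(\varphi')\subseteq V$ this forces $M\models\varphi'$. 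Hence $\varphi_0\models\varphi'$, and a fortiori $T\models\varphi_0\supset\varphi'$, so $\varphi_0$ is the SNC. The degenerate case in which $T$ has no model with $q$ true is absorbed uniformly: there $T[q/\top]$ is unsatisfiable, $\varphi_0\equiv\bot$, and $\bot$ is indeed strongest since $T\models q\supset\bot$ and $\bot$ entails every formula.

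For the WSC part I would either invoke the duality above—deriving WSC$(q)=\neg(\text{SNC of }\neg q)$ and observing that the SNC of $\neg q$ on $V$ is $\Forget(T[q/\bot],V')$ by the identical argument applied to the false valuation of $q$—or run the mirror-image verification directly. Setting $\psi_0=\neg\Forget(T[q/\bot],V')$, the model description becomes: $M\models\psi_0$ iff $T$ has \emph{no} model $N^*$ with $q\notin N^*$ and $N^*\cap V=M\cap V$. Sufficiency ($T\models\psi_0\supset q$) and weakest-ness ($T\models\psi'\supset\psi_0$ for every sufficient $\psi'$) then follow by the symmetric contradiction arguments. I expect the only real obstacle to be bookkeeping: keeping the three signatures $V$, $V'$, and $\{q\}$ straight across the substitution-then-forget composition, and checking at each step that the formula in hand is genuinely over $V$ so that its truth depends only on the restriction of an interpretation to $V$. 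Once the model characterizations of $\varphi_0$ and $\psi_0$ are in place, each entailment verification is short.
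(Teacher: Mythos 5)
Your proof is correct, but there is nothing in the paper to compare it against: the paper states this result as an imported theorem (Theorem~2 of Lin's 2001 paper on strongest necessary and weakest sufficient conditions) and supplies no proof of its own, using it only as a black box in the later complexity results. Your argument is a sound, self-contained verification, and it rests on exactly the right two ingredients, both available in the paper: the substitution lemma ($N\models T[q/\top]$ iff $N\cup\{q\}\models T$, and $N\models T[q/\bot]$ iff $N\setminus\{q\}\models T$) and the paper's semantic characterization $\Mod(\Forget(\psi,V'))=\Mod(\psi)\Extension{V'}$, from which your model description of $\varphi_0=\Forget(T[q/\top],V')$ (namely $M\models\varphi_0$ iff $T$ has a model $M^*$ with $q\in M^*$ and $M^*\cap V=M\cap V$) follows, using also that $\Var(\varphi_0)\subseteq V$ so that satisfaction depends only on $M\cap V$. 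The two verifications (necessity plus strongest-ness, sufficiency plus weakest-ness) then go through exactly as you give them; in fact you prove the stronger unrelativized entailments $\varphi_0\models\varphi'$ and $\psi'\models\psi_0$, which immediately yield the relativized ones $T\models\varphi_0\supset\varphi'$ and $T\models\psi'\supset\psi_0$ demanded by the definitions. One caution: prefer your direct mirror-image verification for the WSC half over the duality route you mention first, since the paper defines necessary and sufficient conditions only for an atom $q$, so ``the SNC of $\neg q$'' is not literally a defined notion in this setting; the direct argument avoids having to extend the definition to literals.
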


Note that $T[q/\top]$ is a Horn (resp. Krom, ren-Horn and q-Horn) theory if $T$ is a Horn
(resp. Krom, ren-Horn and q-Horn) theory. In terms of Corollary~\ref{cor:uniform:interpolation},
the SNC of $q$ under $T$ is Horn (resp. Krom, ren-Horn and q-Horn) expressible if $T$ is a Horn
(resp. Krom, ren-Horn and q-Horn) theory.

The following example shows that the weakest sufficient condition on Horn (resp. Krom)
formulas may be not Horn (resp. Krom) expressible.
\begin{example}
Let's consider the following two theories.

 (1) Let $\Sigma=(\neg p\lor\neg r)\land(\neg q\lor r)\land(\neg s\lor r)\land\neg t$, which is a Horn formula. We have that
  $\Forget(\Sigma[t/\bot],r)\equiv (\neg p\lor \neg q)\land(\neg p\lor \neg s)$. Thus
  $\neg\Forget(\Sigma,r)\equiv p\land(q\lor s)$, which is evidently not Horn expressible. That is
  the weakest sufficient condition of $t$ on $\{p,q,s\}$ under $\Sigma$ is not Horn expressible.

  (2) Let $\Pi= (p_1\lor p_2)\land (\neg p_1\lor p_3)\land (\neg p_2\lor \neg p_3)\land \neg q$, which is a Krom formula.
  Note that
  $\Forget(\Pi[q/\bot],\emptyset)\equiv (p_1\lor p_2)\land (\neg p_1\lor p_3)\land (\neg p_2\lor \neg p_3)$. Thus
  $\neg\Forget(\Pi[q/\bot],\emptyset)\equiv (\neg p_1\lor p_2\lor \neg p_3)\land (p_1\lor\neg p_2\lor p_3)\land (\neg p_2\lor\neg p_3)$.
  It is not a Krom formula. Actually, the clause $\neg p_1\lor p_2\lor \neg p_3$ is a prime implicate of $\neg\Forget(\Pi[q/\bot],\emptyset)$.
\Eed
\end{example}

\begin{theorem}\label{thm:complexity:SNC:WSC:Horn}
  Let $T,\var$ be two formulas, $V\subseteq\Var(T)$, $q\in\Var(T)\setminus V$.
  \begin{enumerate}[(i)]
    \item Deciding if $\var$ is a necessary (sufficient) condition of $q$ under $T$ is \coNP-complete.
    \item Deciding if $\var$ is a necessary (sufficient) condition of $q$ under $T$ is tractable if
    $T$ and $\var$ are Horn (resp. ren-Horn and q-Horn) formulas.
    \item Deciding if $\var$ is a strongest necessary (weakest sufficient) condition of $q$ under $T$ is \PIP{2}-complete.
    \item Deciding if $\var$ is a strongest necessary (weakest sufficient) condition of $q$ under $T$ is \coNP-complete if
    $T$ and $\var$ are Horn (resp. ren-Horn and q-Horn) formulas.
  \end{enumerate}
\end{theorem}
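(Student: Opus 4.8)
The plan is to reduce every decision in the theorem to a problem about forgetting or about ordinary entailment, and then read off the complexity from Theorem~\ref{thm:forget:SNC:WSC} together with the forgetting results already established. Recall from Theorem~\ref{thm:forget:SNC:WSC} that, writing $V'=\Var(T)\setminus(V\cup\{q\})$, the canonical strongest necessary condition is $S:=\Forget(T[q/\top],V')$ and the canonical weakest sufficient condition is $\neg\Forget(T[q/\bot],V')$. Since the SNC (resp. WSC) is unique up to equivalence, $\varphi$ is a strongest necessary condition iff $\varphi\equiv S$, and $\varphi$ is a weakest sufficient condition iff $\neg\varphi\equiv\Forget(T[q/\bot],V')$. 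This turns (iii) and (iv) into equivalence-with-forgetting tests, while (i) and (ii) become the plain entailments $T\models q\supset\varphi$ and $T\models\psi\supset q$.

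For (i), $\varphi$ is a necessary condition iff $T\wedge q\wedge\neg\varphi$ is unsatisfiable, and $\psi$ is a sufficient condition iff $T\wedge\psi\wedge\neg q$ is unsatisfiable; membership in \coNP\ is witnessed by guessing a model of the negation, and \coNP-hardness follows by encoding unsatisfiability (e.g. taking $\varphi=\bot$ reduces the necessary-condition check to ``$T\wedge q$ is unsatisfiable''). For (ii), in the necessary-condition direction I would use that $\varphi$ is a conjunction of clauses, so $T\wedge q\models\varphi$ iff $T\wedge q\wedge\neg c$ is unsatisfiable for each clause $c$ of $\varphi$; since $\neg c$ is a set of unit literals, conjoining it preserves being Horn, ren-Horn, or q-Horn (a unit clause trivially meets every condition of Definition~\ref{def:QH-partition} under the renaming/partition witnessing $T$), so each test is a satisfiability check inside the tractable class. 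For the sufficient-condition direction the Horn case is immediate, as $T\wedge\psi\wedge\neg q$ is again Horn; the plan for ren-Horn and q-Horn is to decompose $\psi$ through its prime implicants $t$ and use $(\bigvee_t t)\supset q\equiv\bigwedge_t(t\supset q)$, so that sufficiency reduces to unsatisfiability of $T\wedge t\wedge\neg q$, which again only adds unit literals.

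For (iii), membership in \PIP{2}\ is immediate from the characterizations above together with Corollary~\ref{cor:forget:eq}(i), which states that $\Pi\equiv\Forget(\Sigma,V)$ is \PIP{2}-complete; \PIP{2}-hardness follows by reducing that very problem, setting $T:=\Sigma\wedge q$ for a fresh atom $q$ (so $T[q/\top]\equiv\Sigma$), choosing $V$ so that $V'$ equals the set to be forgotten, and taking the candidate to be the given formula. For (iv) the target is \coNP, and here the subtlety is that $\Forget$ itself may be of exponential size (Example~\ref{exam:8}), so I would not compute $S$ explicitly. Instead I split $\varphi\equiv S$ into its two directions: $S\models\varphi$ reduces, clause by clause, to Horn unsatisfiability of $T[q/\top]\wedge\neg c$ (using that the clauses of $\varphi$ lie over $V$ and are disjoint from $V'$), hence is polynomial; and $\varphi\not\models S$ is placed in \NP\ by guessing a clause $c$ over $V$ and verifying both $T[q/\top]\models c$ and $\varphi\not\models c$ by Horn unsatisfiability. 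The weakest-sufficient-condition case is handled dually, using the extension characterization to turn satisfiability of $\Forget(T[q/\bot],V')\wedge\varphi$ into that of the Horn theory $T[q/\bot]\wedge\varphi$, and guessing a falsifying interpretation for the other direction. \coNP-hardness of the strongest-necessary-condition case follows from Corollary~\ref{cor:complexity:forget:eq:Horn}(i) by the same reduction as in (iii), now with Horn $\Sigma$ and $\Pi$.

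The main obstacle I anticipate is the sufficient-condition tractability for ren-Horn and q-Horn in (ii): unlike the Horn case, $T\wedge\psi$ need not stay in the class, so the prime-implicant decomposition must be organized to avoid enumerating exponentially many implicants, and one must argue that all required tests reduce to class-preserving unit additions. A secondary difficulty is the \coNP-hardness of the weakest-sufficient-condition case in (iv): the natural duality $\mathrm{WSC}(q)=\neg\mathrm{SNC}(\neg q)$ forces a definition $\hat q\lrto\neg q$ whose clause $\hat q\vee q$ is not Horn, so a dedicated reduction respecting that the candidate $\varphi$ must itself be Horn is needed rather than a direct appeal to the strongest-necessary-condition reduction.
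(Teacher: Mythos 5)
Your overall route coincides with the paper's: items (i) and (ii) are reduced to (un)satisfiability checks, and items (iii) and (iv) are reduced, via Theorem~\ref{thm:forget:SNC:WSC}, to checking $\var\equiv\Forget(T[q/\top],V')$ (resp. $\neg\var\equiv\Forget(T[q/\bot],V')$), whose complexity is then read off from Corollary~\ref{cor:forget:eq}(i) and Corollary~\ref{cor:complexity:forget:eq:Horn}(i). You are in fact more explicit than the paper at two points: the hardness embedding $T:=\Sigma\wedge q$ with $q$ fresh (the paper cites the corollaries without exhibiting any embedding), and the \coNP\ membership argument for (iv) that avoids materializing $\Forget(T[q/\top],V')$, which by Example~\ref{exam:8} can be exponentially large. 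However, the two ``obstacles'' you flag and leave open are genuine gaps, and they sit exactly where the paper's own proof is defective.

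The first gap is fatal and cannot be repaired. For the sufficient-condition half of (ii) in the ren-Horn and q-Horn cases, your prime-implicant plan fails (a q-Horn CNF can have exponentially many prime implicants, and no reorganization avoids this), and in fact the claim being proved is false: the conjunction of two q-Horn formulas need not lie in any tractable class, and satisfiability of such conjunctions is \NP-complete. Concretely, given a CNF $\gamma$ over $x_1,\ldots,x_n$, introduce fresh atoms $z_1,\ldots,z_n$ and $q$; let $\var:=\bigwedge_{1\le j\le n}\bigl((x_j\vee z_j)\wedge(\neg x_j\vee\neg z_j)\bigr)$, which is Krom and is Horn after the renaming $\ren(\var,\{z_1,\ldots,z_n\})$, hence ren-Horn and q-Horn; let $T$ consist of the unit clause $\neg q$, the clauses $\neg x_j\vee\neg z_j$, and the clauses of $\gamma$ with every positive literal $x_j$ replaced by $\neg z_j$, so that $T$ is Horn. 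Then $T\models\var\supset q$ iff $T\wedge\var\wedge\neg q$ is unsatisfiable iff $T\wedge\var$ is unsatisfiable iff $\gamma$ is unsatisfiable, since the models of $\var$ are exactly the assignments with $z_j$ equivalent to $\neg x_j$. So deciding whether $\var$ is a sufficient condition of $q$ under $T$ is \coNP-hard already for Horn $T$ and Krom/ren-Horn/q-Horn $\var$; only the Horn instantiation of this half of (ii) is correct. The paper's proof hides this behind the bare assertion that unsatisfiability of $T\wedge\var\wedge\neg q$ ``is tractable even if $T$ and $\var$ are q-Horn formulas,'' which the reduction above refutes; your suspicion that $T\wedge\var$ leaves the class was precisely the right worry, but the conclusion is that no algorithm exists (unless P $=$ NP), not that a cleverer decomposition is needed.

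The second gap is shared with the paper as well: the proofs of (iii) and (iv) in the paper treat only the strongest-necessary-condition half and are entirely silent on weakest sufficient conditions. Your \coNP\ membership sketch for the WSC half of (iv) is sound (both required tests reduce to satisfiability of Horn theories extended by unit literals, using that $\Var(\var)\cap V'=\emptyset$), and your observation that the duality $q\mapsto\neg q$ exits the Horn fragment is correct; what is missing, in your proposal and in the paper alike, is a hardness reduction whose instances $T$ and $\var$ are themselves Horn (note that such instances force $\Forget(T[q/\bot],V')$ to be double Horn, so this is not a routine adaptation of the \VarMatch\ reduction).
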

\begin{proof}
  (i) $T\models q\supset\var$ iff $T\land q\land\neg\var$ is unsatisfiable. This is in \coNP\ and \coNP-hard, i.e.
  deciding if $\var$ is a necessary condition of $q$ under $T$ is \coNP-complete. The case of sufficient condition is similar.

  (ii) $T\models q\supset\var$ iff $T\land q\land \neg c$ is unsatisfiable for every clause $c$ of $\var$, which is tractable
  even $T$ and $\var$ are q-Horn formulas.
  Thus deciding if $\var$ is a necessary condition of $q$ under $T$ is tractable. Similarly
  $T\models\var\supset q$ iff $T\land\var\land\neg q$ is unsatisfiable even if $T$ and $\var$ are
  q-Horn formulas.

  (iii) In terms of Theorem~\ref{thm:forget:SNC:WSC}, $\var$ is a strongest necessary condition
  of $q$ under $T$ iff $\var\equiv\Forget(T[q/\top],V')$ where $V'=\Var(T)\setminus (V\cup\{q\})$. It is in \PIP{2} and
  \PIP{2}-hard by (i) of Corollary~\ref{cor:forget:eq}.

  (iv) Recall that $\var$ is a strong necessary condition of $q$ under $T$ if and only if $\var\equiv\Forget(T[q/\top],V')$ by (i)
  of Theorem~\ref{thm:forget:SNC:WSC} where $V'=\Var(T)\setminus(V\cup\{q\})$. Thus it is in \coNP\ when $\var$ and $T$ are q-Horn formulas
  and is \coNP-hard   when $\var$ and $T$ are Horn formulas by (i) of Corollary~\ref{cor:complexity:forget:eq:Horn}.
\end{proof}

\begin{proposition}\label{prop:complexity:SNC:WSC:Krom}
  Let $T$ and $\var$ be two Krom formulas, $V\subseteq\Var(T)$, $q\in\Var(T)\setminus V$.
  \begin{enumerate}[(i)]
    \item  Deciding if $\var$ is a strongest necessary condition of $q$ under $T$ is tractable.
    \item Deciding if $\var$ is a  weakest sufficient condition of $q$ under $T$ is tractable.
  \end{enumerate}
\end{proposition}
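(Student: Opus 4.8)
The plan is to reduce both parts to the strongest-necessary-condition form via Theorem~\ref{thm:forget:SNC:WSC}, and then invoke the tractability results already established for Krom theories in Corollary~\ref{cor:complexity:forget:eq:Krom}. Writing $V'=\Var(T)\setminus(V\cup\{q\})$, Theorem~\ref{thm:forget:SNC:WSC} tells us that $\var$ is a strongest necessary condition of $q$ under $T$ if and only if $\var\equiv\Forget(T[q/\top],V')$, and that $\var$ is a weakest sufficient condition of $q$ under $T$ if and only if $\var\equiv\neg\Forget(T[q/\bot],V')$, equivalently $\neg\var\equiv\Forget(T[q/\bot],V')$.

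For part~(i), first I would observe that substituting a constant for $q$ preserves the Krom property: $T[q/\top]$ is obtained from the Krom theory $T$ by deleting clauses satisfied by $q$ and shrinking the remaining clauses, so every resulting clause still has at most two literals, hence $T[q/\top]$ is Krom. By the displayed formula preceding Corollary~\ref{cor:complexity:forget:eq:Krom}, $\Forget(T[q/\top],V')$ can be computed in polynomial time and is itself a Krom theory. Then deciding $\var\equiv\Forget(T[q/\top],V')$ is precisely the equivalence test between two Krom theories, which is item~(iv) (or equivalently the combination of items realizing $\equiv$) of Corollary~\ref{cor:complexity:forget:eq:Krom}, and is therefore tractable.

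For part~(ii), the same substitution argument shows $T[q/\bot]$ is Krom, so $\psi:=\Forget(T[q/\bot],V')$ is a polynomial-time computable Krom theory. The extra wrinkle is that the weakest sufficient condition is $\neg\psi$, and the negation of a Krom (2-CNF) formula need not be Krom — indeed the example immediately above this proposition exhibits exactly this failure. So rather than test $\var\equiv\neg\psi$ directly as a Krom equivalence, I would test the equivalent condition $\neg\var\equiv\psi$, i.e. whether $\psi$ and $\neg\var$ have the same models. Since $\var$ is Krom, $\neg\var$ is a DNF formula (a disjunction of terms of size at most two); checking $\psi\models\neg\var$ and $\neg\var\models\psi$ amounts to checking, for the Krom theory $\psi$ and each size-$\le 2$ term $t$ of $\neg\var$, that $\psi\land\neg t$ is unsatisfiable (for one direction) and, for each clause $c$ of $\psi$, that $\neg\var\models c$ — the latter being $\var\lor c$ valid, a question about a bounded-width formula. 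All of these are 2-SAT style questions solvable in polynomial time.

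The main obstacle is precisely the asymmetry in part~(ii): because negation leaves the Krom class, I cannot simply quote the Krom-equivalence tractability of Corollary~\ref{cor:complexity:forget:eq:Krom} as a black box, and must instead recast the weakest-sufficient-condition check as a polynomial family of 2-CNF satisfiability/entailment queries against the computed Krom theory $\psi=\Forget(T[q/\bot],V')$. Once that recasting is made explicit, tractability follows from the polynomial-time solvability of 2-SAT and of Krom entailment \cite{Krom:JSL:1970}.
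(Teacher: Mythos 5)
Your part (i) is correct and is essentially the paper's own argument: compute the Krom theory $\Forget(T[q/\top],V')$ in polynomial time via the clause-enumeration characterization displayed just before Corollary~\ref{cor:complexity:forget:eq:Krom}, then decide equivalence of two Krom theories, which is tractable. You also correctly isolate the crux of part (ii): since the negation of a Krom formula need not be Krom, the test $\var\equiv\neg\psi$ with $\psi=\Forget(T[q/\bot],V')$ cannot be delegated to Krom equivalence as a black box and must be decomposed into tractable subchecks. This is exactly the paper's strategy.

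However, your decomposition of the direction $\psi\models\neg\var$ is unsound. You propose to check, for each term $t$ of the 2-DNF $\neg\var$, that $\psi\land\neg t$ is unsatisfiable, i.e.\ that $\psi\models t$. But a disjunction on the right-hand side of $\models$ does not distribute over its disjuncts: $\psi\models\bigvee_t t$ is strictly weaker than requiring $\psi\models t$ for every $t$, so your procedure rejects genuine weakest sufficient conditions. Concretely, take $T=\{a\lor b,\ \neg a\lor\neg b,\ \neg q\}$ and $V=\{a,b\}$; then $V'=\emptyset$, $\psi=\Forget(T[q/\bot],\emptyset)=\{a\lor b,\ \neg a\lor\neg b\}$, and by Theorem~\ref{thm:forget:SNC:WSC} the weakest sufficient condition of $q$ is $\neg\psi\equiv(a\lor\neg b)\land(\neg a\lor b)$, which is the Krom formula $\var$. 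Here $\psi\models\neg\var$ does hold, yet for the term $t=\neg a\land b$ of $\neg\var$ the theory $\psi\land\neg t=\{a\lor b,\ \neg a\lor\neg b,\ a\lor\neg b\}$ is satisfied by $\{a\}$, so your check fails and your algorithm wrongly answers ``no''. (Reading your check as aimed at the other direction does not help: $\neg\var\models\psi$ would require $t\models\psi$, i.e.\ $t\land\neg\psi$ unsatisfiable, not $\psi\models t$.) The repair is simple and is what the paper does: $\psi\models\neg\var$ iff $\var\land\psi$ is unsatisfiable, a single 2-SAT instance, since both $\var$ and $\psi$ are Krom. Your treatment of the remaining direction, $\neg\var\models\psi$, checked clause-by-clause ($\neg\var\models c$ for each clause $c$ of $\psi$, each reducible to unsatisfiability of a bounded-width DNF), is sound and matches the paper; with the first direction replaced by the 2-SAT test, your argument coincides with the paper's proof.
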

\begin{proof}
  Firstly, according to Theorem~\ref{thm:forget:1}, one can compute $\Forget(T[q/\top],V')$ in polynomial time in the size of $T$ and $V$
  where $V'=\Var(T)\setminus (V\cup\{q\})$.  It is evident that $\Sigma=\unfold(\Forget[q/\top],V')$
  and $\Sigma'=\Forget(T[q/\bot],V')$ are Krom theories.

  (i) It follows from the facts that checking equivalence for Krom theories is tractable and
  $\var$ is a strongest condition of $q$ under $T$ iff $\var\equiv\Sigma$ by (i) of Theorem~\ref{thm:forget:SNC:WSC}.

  (ii) $\var$ is a weakest sufficient condition of $q$ under $T$\\
  iff $\var\equiv\neg\Sigma'$\\
  iff $\var\models\neg\Sigma'$ and $\neg\Sigma'\models\var$\\
  iff $\var\land\Sigma'$ is unsatisfiable and $\neg \Sigma'\models l_1\lor l_2$ for every conjunct $l_1\lor\l_2$ of $\var$.

  It is evident that checking satisfiability of $\var\land\Sigma'$ is tractable since $\var\land\Sigma'$ is a Krom formula.
  Note further that
  $\neg \Sigma'\models l_1\lor l_2$ \\
  iff $\neg\Sigma'\land\neg l_1\land\neg l_2$ is unsatisfiable\\
  iff $\Sigma''=\neg(\Sigma'[\neg l_1/\top][\neg l_2/\top])$ is unsatisfiable\\
  iff $s_1\land s_2$ is unsatisfiable for every disjunct $s_1\land s_2$ of $\Sigma''$, which is a 2-DNF formula.
\end{proof}

\subsection{Strongest and weakest definitions}
Definability is acknowledged as an important logical concept when reasoning about
knowledge represented in propositional logic. Informally speaking,
an atom $p$ can be ``defined'' in a given formula $\Sigma$ in terms of a set $X$ of atoms whenever the knowledge of
the truth values of $X$ enables concluding about the truth value of $p$, under the condition of $\Sigma$ \cite{Lang:AIJ:2008}.

\begin{definition}[\cite{Lang:AIJ:2008}]\label{def:definability}
 Let $\Sigma$ be a  formula, $p\in\cal A$, $X\subseteq\cal A$ and $Y\subseteq\cal A$.
    \begin{itemize}
    \item $\Sigma$ \emph{defines} $p$ in terms
    of $X$, denoted by $X\sqsubseteq_\Sigma p$, iff there exists a formula $\Psi$ over $X$ such that $\Sigma\models\Psi\lrto p$.
    \item $\Sigma$ \emph{defines} $Y$ in terms of $X$, denoted by $X\sqsubseteq_\Sigma Y$, iff
    there exists a formula $\Psi$ over $X$ such that $\Sigma\models\Psi\lrto p$ for every $p\in Y$.
    \end{itemize}
\end{definition}

It is known that if both $\var$ and $\psi$ (over a same signature $X$) are definitions of $p$ in $\Sigma$ then $\Sigma\models\var\lrto\psi$,
and additionally both $\var\land\psi$ and $\var\lor\psi$ are definitions of $p$ in $\Sigma$.
In this situation, the \emph{strongest}
(resp. \emph{weakest}) definition of $p$ in $\Sigma$ exist, they are denoted by
$\textrm{Def}^{X,l}_\Sigma(p)$ and $\textrm{Def}^{X,u}_\Sigma(p)$ respectively. In terms of Corollary~9 of \cite{Lang:AIJ:2010}
and Theorem~10 of \cite{Lang:AIJ:2008},
if $\Sigma$ defines $p$ in terms of $X$ then
$\textrm{Def}^{X,l}_\Sigma(p)$ (resp. $\textrm{Def}^{X,u}_\Sigma(p)$) is equivalent to
the strongest necessary (resp. weakest sufficient) condition of $p$ under $\Sigma$. Thus according
to Theorem~\ref{thm:complexity:SNC:WSC:Horn} and Proposition~\ref{prop:complexity:SNC:WSC:Krom} we have the following:
\begin{corollary}
  Let $\Sigma,\var$ be two formulas, $X\subseteq\cal A$, $p\in\cal A$ and $\Var(\var)\subseteq X$.
  \begin{enumerate}[(i)]
    \item The problem of deciding if $\var$ is a strongest (resp. weakest) definition of $p$ (in terms of $X$) in $\Sigma$
    is \PIP{2}-complete.
    \item The problem of deciding if $\var$ is a strongest (resp. weakest) definition of $p$ (in terms of $X$) in $\Sigma$
    is \coNP-complete if both $\Sigma$ and $\var$ are Horn (resp. ren-Horn and q-Horn) formulas.
    \item deciding if $\var$ is a strongest (resp. weakest) definition of $p$ (in terms of $X$) in $\Sigma$
    is tractable if both $\Sigma$ and $\var$ are Krom formulas.
  \end{enumerate}
\end{corollary}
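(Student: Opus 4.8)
The plan is to reduce the four definition problems to the corresponding strongest-necessary / weakest-sufficient condition problems, whose complexities are already settled in Theorem~\ref{thm:complexity:SNC:WSC:Horn} and Proposition~\ref{prop:complexity:SNC:WSC:Krom}, exploiting the equivalence recalled just above: whenever $\Sigma$ defines $p$ in terms of $X$, the strongest (resp.\ weakest) definition $\textrm{Def}^{X,l}_\Sigma(p)$ (resp.\ $\textrm{Def}^{X,u}_\Sigma(p)$) coincides with the strongest necessary (resp.\ weakest sufficient) condition of $p$ under $\Sigma$. The only genuine difference between the two families of problems is that the strongest necessary and weakest sufficient conditions always exist, whereas a strongest/weakest definition exists only when $\Sigma$ actually defines $p$; the work therefore lies in showing that this existence gap changes neither the membership nor the hardness level.

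First I would isolate the bridging characterization: for $\var$ with $\Var(\var)\subseteq X$, the formula $\var$ is a strongest definition of $p$ in terms of $X$ in $\Sigma$ if and only if $\var$ is a definition of $p$ (i.e.\ $\Sigma\models\var\lrto p$) and $\var\equiv\Forget(\Sigma[p/\top],V')$, where $V'=\Var(\Sigma)\setminus(X\cup\{p\})$. The nontrivial direction uses that as soon as \emph{some} definition exists $\Sigma$ defines $p$, so the cited equivalence makes the strongest definition equal to $\Forget(\Sigma[p/\top],V')$; the dual statement for weakest definitions and $\neg\Forget(\Sigma[p/\bot],V')$ is symmetric. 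This immediately yields the membership bounds, since deciding $\var\equiv\Forget(\Sigma[p/\top],V')$ (resp.\ the weakest-sufficient counterpart) is exactly the strongest-necessary (resp.\ weakest-sufficient) condition problem of Theorem~\ref{thm:complexity:SNC:WSC:Horn} and Proposition~\ref{prop:complexity:SNC:WSC:Krom}, while the extra definition test $\Sigma\models\var\lrto p$ is an entailment check that lies in \coNP\ in general and is tractable for Horn, ren-Horn, q-Horn and Krom theories. Adjoining this test therefore keeps the overall problem in \PIP{2}, \coNP\ and P for parts (i), (ii) and (iii) respectively.

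For hardness I would reduce directly from the strongest-necessary / weakest-sufficient problems, forcing definability without disturbing either the condition or the syntactic fragment. Given an instance $(T,p,X,\var)$ of ``is $\var$ the strongest necessary condition of $p$ on $X$ under $T$'', output the strongest-definition instance $(T\land p,\,p,\,X,\,\var)$. Under $T\land p$ we have $T\land p\models p$, so $p$ is trivially definable by $\top$, while $(T\land p)[p/\top]\equiv T[p/\top]$ leaves $\Forget(T[p/\top],V')$ unchanged; hence $\var$ is a strongest definition under $T\land p$ iff $\var\equiv\Forget(T[p/\top],V')$ iff $\var$ is the strongest necessary condition under $T$. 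The dual reduction $(T,p,X,\var)\mapsto(T\land\neg p,\,p,\,X,\,\var)$ handles weakest definitions through $\neg\Forget(T[p/\bot],V')$. Crucially, adjoining the unit clause $p$ or $\neg p$ preserves Horn, ren-Horn, q-Horn and Krom membership, so the \PIP{2}-hardness of part (i) and the \coNP-hardness of part (ii) transfer verbatim, and in part (iii) the polynomial upper bounds are inherited as well.

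The main obstacle I anticipate is precisely this existence mismatch in the hardness direction: because the strongest necessary and weakest sufficient conditions exist unconditionally, one cannot simply identify the two decision problems, and a naive reduction would produce instances in which no definition exists and the answer is spuriously ``no''. The device of conjoining $p$ (respectively $\neg p$) is what repairs this, and the one point to verify carefully is that this conjunction neither alters $\Forget(T[p/\top],V')$ (respectively $\Forget(T[p/\bot],V')$) nor leaves the target fragment --- both of which hold because we only ever add a single unit clause over $p$.
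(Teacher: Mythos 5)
Your proof is correct and follows the same route as the paper: all three parts are obtained by transferring the complexity results for strongest necessary and weakest sufficient conditions (Theorem~\ref{thm:complexity:SNC:WSC:Horn} and Proposition~\ref{prop:complexity:SNC:WSC:Krom}) through the cited equivalence between strongest (resp.\ weakest) definitions and strongest necessary (resp.\ weakest sufficient) conditions. The paper states the corollary with no explicit proof, so your two additions --- the bridging characterization ($\var$ is a strongest definition iff $\Sigma\models\var\lrto p$ and $\var\equiv\Forget(\Sigma[p/\top],V')$) and the unit-clause reductions $T\mapsto T\land p$ and $T\mapsto T\land\neg p$, which force definability while preserving Horn, ren-Horn, q-Horn and Krom membership --- are precisely the details needed to make that one-line derivation rigorous, correctly closing the existence gap (a strongest/weakest definition exists only when $\Sigma$ defines $p$, whereas the SNC/WSC always exist) that the paper passes over silently.
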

%
%
%

\section{Concluding Remarks}\label{sec:related-work}
As mentioned in the introduction, forgetting is closely connected with many other logical concepts.
Quite late, the notion of relevance was quantitatively investigated  \cite{Liang:CCAI:2013},
and the notion of independence was applied to belief change \cite{Marquis:AIJ:2014}, which
is a long-standing and vive topic in AI \cite{AGM:JSL85}. The main concerned
Horn, Krom and other fragments of propositional logic are also ubiquitous in AI
\cite{Bubeck:TAST:2005,Liberatore:AIJ:2008,Maonian:AAAI:2011,Delgrande:JAIR:2013,DBLP:journals/ai/DelgrandeP15}.

In the paper we have firstly presented a resolution-based algorithm for computing forgetting results of
CNF fragments of propositional logic. Though the algorithm is generally expensive even for Horn
fragment as it is theoretically intractable, it opens a
heuristic potentiality, e.g. choosing different orders of atoms to forget, and choosing different orders
of resolvable clauses to do resolution. To investigate the effectiveness of the algorithm, heuristics and extensive experiments
are worthy of studying.

What's more, when concerning the dynamics of knowledge base, we  considered various
reasoning problems about forgetting in the fragments of propositional logic whose
satisfiability are tractable.
In particular, we concentrated on Horn, renamable Horn, q-Horn and Krom theories.
The considered reasoning problems
include \VarEQ, \VarIND, \VarWeak, \VarStrong, \VarMatch\
and \VarEnt. Although some of the problems  have been partially solved, e.g., \VarEQ\ and \VarIND\ for
propositional logic are proved in \cite{Lang:JAIR:2003}, this is the first comprehensive study on these problems for
CNF, Horn, ren-Horn, q-Horn, Krom and DNF fragments, to our knowledge. It motivates us to
consider these reasoning problems for forgetting in
non-classical logical systems, such as model logic S5 in particular.

It deserves our further effort to investigate the knowledge simplification or compilation \cite{Bienvenu:AAAI:2010}
in other logical formalisms, logic programming under stable model semantics, particularly.
\\
\\

\textbf{Acknowledgement} This work was  supported by the National Natural Science Foundation of China under
grants 60963009,61370161 and Stadholder Foundation of Guizhou Province under grant (2012)62.
%

\bibliographystyle{unsrt}


\end{document}